\documentclass{article}
\pdfoutput=1

\usepackage[preprint]{neurips_2024_custom}



\usepackage[utf8]{inputenc} 
\usepackage[T1]{fontenc}    
\usepackage{hyperref}       
\usepackage{url}            
\usepackage{booktabs}       
\usepackage{amsfonts}       
\usepackage{nicefrac}       
\usepackage{microtype}      
\usepackage{xcolor}         
\usepackage{xspace}
\usepackage{amsmath}
\usepackage{amsthm}
\usepackage{caption}
\usepackage{subcaption}

\usepackage{cleveref}

\usepackage{bm}

\theoremstyle{plain}

\newtheorem{lemma}{Lemma}
\newtheorem{theorem}{Theorem}
\usepackage[ruled]{algorithm2e} 
\usepackage{algpascal}

\usepackage{enumitem}

\usepackage{wrapfig}
\usepackage{comment}
\usepackage{selectp}
\usepackage{selectp}

\newcommand{\SVI}{SVI\xspace}

\newcommand{\SPI}{SPI\xspace}

\newcommand{\muout}{\mu_{\text{out}}}

\newcommand{\piout}{{\pi_{\text{out}}}}

\newcommand{\PiHD}{\Pi_{\text{HD}}}

\newcommand{\tpi}{\wt{\pi}}
\newcommand{\tmu}{\wt{\mu}}

\newcommand{\bmu}{\overline{\mu}}

\newcommand{\HH}[2]{\mathcal{H}\pa{#1\middle\|#2}}

\newcommand{\CC}{\mathcal{C}}

\newcommand{\diag}{\mathrm{diag}}
\newcommand{\dd}{\mathrm{d}}

\newcommand{\VX}{V_\mathcal{X}}
\newcommand{\VY}{V_\mathcal{Y}}
\newcommand{\MX}{M_\mathcal{X}}
\newcommand{\MY}{M_\mathcal{Y}}
\newcommand{\MXY}{M_{\X\Y}}

\newcommand{\Pibc}{\Pi_{\text{bc}}}

\newcommand{\X}{\mathcal{X}}

\newcommand{\PX}{\mathcal{P}_X}
\newcommand{\IX}{\mathcal{I}_\mathcal{X}}

\newcommand{\Y}{\mathcal{Y}}

\newcommand{\M}{\mathcal{M}}
\newcommand{\A}{\mathcal{A}}
\newcommand{\LL}{\mathcal{L}}

\newcommand{\BB}{\mathcal{B}}
\newcommand{\TT}{\mathcal{T}}
\newcommand{\TTpi}{\mathcal{T}^{\pi}}
\newcommand{\TTpik}{\mathcal{T}^{\pi_k}}

\newcommand{\BX}{\mathcal{B}_\mathcal{X}}
\newcommand{\BY}{\mathcal{B}_\mathcal{Y}}
\newcommand{\SX}{\mathcal{S}_\mathcal{X}}
\newcommand{\SY}{\mathcal{S}_\mathcal{Y}}
\newcommand{\PPX}{P_\mathcal{X}}
\newcommand{\PPY}{P_\mathcal{Y}}
\newcommand{\nuzerox}{\nu_{0,\X}}
\newcommand{\nuzeroy}{\nu_{0,\Y}}

\newcommand{\F}{\mathcal{F}}

\newcommand{\real}{\mathbb{R}}

\newcommand{\Sw}{\mathcal{S}}

\newcommand{\DD}[2]{\mathcal{D}\pa{#1\middle\|#2}}

\newcommand{\DDKL}[2]{\mathcal{D}_{\mathrm{KL}}\pa{#1\middle\|#2}}

\newcommand{\II}[1]{\mathbb{I}_{\left\{#1\right\}}}
\newcommand{\PP}[1]{\mathbb{P}\left[#1\right]}

\newcommand{\EE}[1]{\mathbb{E}\left[#1\right]}

\newcommand{\EEpi}[1]{\mathbb{E}_{\pi}\left[#1\right]}
\newcommand{\PPpi}[1]{\mathbb{P}_{\pi}\left[#1\right]}

\newcommand{\PPcc}[2]{\mathbb{P}\left[\left.#1\right|#2\right]}

\newcommand{\EEcs}[3]{\mathbb{E}_{#3}\left[\left.#1\right|#2\right]}

\def\argmin{\mathop{\mbox{ arg\,min}}}

\newcommand{\ra}{\rightarrow}

\newcommand{\iprod}[2]{\left\langle#1,#2\right\rangle}

\newcommand{\norm}[1]{\left\|#1\right\|}

\newcommand{\onenorm}[1]{\norm{#1}_1}

\newcommand{\infnorm}[1]{\norm{#1}_\infty}

\newcommand{\ev}[1]{\left\{#1\right\}}
\newcommand{\abs}[1]{\left|#1\right|}

\newcommand{\pa}[1]{\left(#1\right)}
\newcommand{\bpa}[1]{\bigl(#1\bigr)}

\newcommand{\wt}{\widetilde}

\newcommand{\bX}{\overline{X}}
\newcommand{\bY}{\overline{Y}}
\newcommand{\by}{\overline{y}}
\newcommand{\bx}{\overline{x}}

\newcommand{\transpose}{^\mathsf{\scriptscriptstyle T}}

\usepackage{todonotes}
\definecolor{PalePurp}{rgb}{0.66,0.57,0.66}




\usepackage[normalem]{ulem}
\usepackage{enumitem}

\newtheorem{prop}{Proposition}

\title{Bisimulation Metrics are Optimal Transport Distances, \\ and Can be Computed Efficiently}

\author{Sergio Calo\,\,\,\, Anders Jonsson \,\,\,\, Gergely Neu \,\,\,\, Ludovic Schwartz 
\,\,\,\, Javier Segovia-Aguas
 \\
 Universitat Pompeu Fabra, Barcelona, Spain\\
 {\small\texttt{\{sergio.calo,anders.jonsson,gergely.neu,ludovic.schwartz,javier.segovia\}@upf.edu}}
 }
 
\begin{document}

\maketitle

\begin{abstract}
We propose a new framework for formulating optimal transport
distances between Markov chains. Previously known formulations studied
couplings between the entire joint distribution induced by the chains,
and derived solutions via a reduction to dynamic programming (DP) in
an appropriately defined Markov decision process. This formulation
has, however, not led to particularly efficient algorithms so far,
since computing the associated DP operators requires fully solving a
static optimal transport problem, and these operators need to be
applied numerous times during the overall optimization process. In
this work, we develop an alternative perspective by considering
couplings between a ``flattened'' version of the joint distributions
that we call discounted occupancy couplings, and show that calculating
optimal transport distances in the full space of joint distributions
can be equivalently formulated as solving a linear program (LP) in
this reduced space. This LP formulation allows us to port
several algorithmic ideas from other areas of optimal transport
theory. In particular, our formulation makes it possible to introduce
an appropriate notion of entropy regularization into the optimization
problem, which in turn enables us to directly calculate optimal
transport distances via a Sinkhorn-like method we call Sinkhorn Value
Iteration (SVI). We show both theoretically and empirically that this
method converges quickly to an optimal coupling, essentially at the
same computational cost of running vanilla Sinkhorn in each pair of
states. Along the way, we point out that our optimal transport
distance exactly matches the common notion of bisimulation metrics
between Markov chains, and thus our results also apply to computing
such metrics, and in fact our algorithm turns out to be significantly
more efficient than the best known methods developed so far for this
purpose.
\end{abstract}

\section{Introduction}\label{sec:intro}

Measuring distances between structured objects and sequences is an important problem in a variety of areas of 
science. The more structured the objects become, the harder it gets to define appropriate notions of distances, as 
good notions of proximity need to take into account the possibly complex relationships between the constituent 
parts of each object. The possibility that the objects in question may be random further complicates the picture, and 
in such cases it becomes more natural to measure distances between the underlying joint probability distributions. 
Within the specific context of comparing stochastic processes, two natural notions of distance have emerged over the 
past decades: the notion of \emph{probabilistic bisimulation metrics} that takes its root in modal logic and 
theoretical computer science \citep{sangiorgi2009origins,Aba13}, and the notion of \emph{optimal-transport distances} 
that originates from probability theory \citep{villani2009optimal,COTFNT}.
In this paper, we show that bisimulation metrics are in fact optimal-transport distances, and we make use of this 
observation to derive efficient algorithms for computing distances between stochastic processes.

The two distance notions have found strikingly different applications. Bisimulation emerged within the area of 
theoretical computer science as one of the most important important concepts in concurrency theory and formal 
verification of computer systems \citep{Park81,Milner89}, and has been extended to probabilistic transition systems by 
\citet{Larsen1989BisimulationTP}. Within machine learning, bisimulation metrics have become especially popular in the 
context of reinforcement learning (RL) due to the work of \citet{FPP04}, and have become one of the few standard tools of 
representation learning \citep{Jia18,jiang2024}. In particular, the work of \citet{FPP04} advocates for using 
bisimilarity 
as a basis for state aggregation, measuring similarities of states in terms of similarities of two chains $\MX$ and 
$\MY$ that only differ in their initial state. While this approach has inspired numerous follow-up works 
\citep{gelada2019,castro2020scalable,agarwal2021,zhang2021,hansen2022bisimulation,castro2022kernel}, ultimately this 
line of work has failed to discover efficient algorithms for computing bisimulation metrics and has largely resorted to 
heuristics for computing similarity metrics.\looseness=-1

On the other hand, optimal transport (OT) has found numerous applications in areas as diverse as economics 
\citep{galichon2016}, signal processing \citep{kolouri2017}, or genomics \citep{Sch+19}. Within machine learning, it has 
been used for the similarly diverse areas of domain adaptation \citep{CFTR16}, 
generative modeling \citep{ACB17,SSKKEP20,SDCD24}, representation learning \citep{CFD18}, and, perhaps most relevant to 
our work, as a way of measuring distances between graphs \citep{TCTF19,CLMWW22,CJ22}. The recent works of 
\cite{YOMN21,BWW24} propose to define graph distances via studying the behavior of random walks defined on the graph, 
thus reducing the problem of comparing graphs to comparing stochastic processes---exactly the subject of the present 
paper. Other applications of OT between stochastic processes include generative modeling for sequential data 
\citep{XWMA20}, pricing and hedging in mathematical finance \citep{BBLZ17}, and analyzing multistage 
stochastic optimization problems \citep{Pfl10,BW22}. It appears however that the literature on optimal transport for 
stochastic processes has apparently not yet discovered connections with bisimulation metrics and the rich intellectual 
history behind it. Also, applications of optimal transport for representation learning within the context 
of reinforcement learning appear to be nonexistent.

In this paper we observe that, despite their apparent differences, bisimulation metrics and optimal transport 
distances are one and the same. Furthermore, we provide a new perspective on both OT distances and 
bisimulation metrics by formulating the distance metric as the solution of a linear program (LP) in the space of 
``occupancy couplings'', a finite-dimensional projection of the infinite-dimensional process laws. Building on tools 
from computational optimal transport \citep{COTFNT} and entropy-regularized Markov decision processes 
\citep{NJG17,GSP19}, we design an algorithm that effectively combines Sinkhorn's algorithm 
\citep{sinkhorn1967concerning,cuturi2013sinkhorn} with an entropy-regularized version of the classic Value Iteration 
algorithm \citep{Bel57,NJG17}. Building on recent work on computational optimal transport 
\citep{altschuler2017near,BB23}, we provide theoretical guarantees for the resulting algorithm (called 
Sinkhorn Value Iteration) and perform numerical studies that demonstrate its effectiveness for computing distances 
between Markov chains.

\paragraph{Notations.} For a finite set $\Sw$, we use $\Delta_\Sw$ to denote the set of all probability distributions 
over $\Sw$. We will denote infinite sequences by $\overline{x} = \pa{x_0,x_1,\dots}$ and the corresponding subsequences 
as $\overline{x}_n = \pa{x_0,x_1,\dots,x_n}$. For two sets $\X$ and $\Y$, we will often write $\X\Y$ to abbreviate the 
direct-product notation $\X\times\Y$, and for two indices $x$ and $y$ and a function $f:\X\Y\ra\mathcal{Z}$, we will 
often write $f(xy)$ instead of $f(x,y)$ to save space.
Also, we will denote scalar products by $\langle \cdot,\cdot \rangle$ and use $\norm{\cdot}_p$ to denote the 
$\ell_p$-norm.

\section{Preliminaries}\label{sec:prelim}
We study the problem of measuring distances between pairs of finite Markov chains. Specifically, we consider 
two stationary Markov processes $\MX=(\mathcal{X}, \PPX, \nuzerox)$ and $\MY=(\mathcal{Y}, \PPY, \nuzeroy)$, where 
\begin{itemize}[topsep=-1mm,itemsep=-.5mm,partopsep=1mm,parsep=1mm,leftmargin=5mm]
 \item $\X$ and $\Y$ are the finite state spaces with cardinalities $m = \abs{\X}$ and $n = \abs{Y}$,
 \item $\PPX:\X \rightarrow \Delta_{\X}$  and $\PPY:\Y \rightarrow \Delta_{\Y}$ are the transition kernels that 
determine the evolution of the states as $\PPX(x'|x) = \PPcc{X_{t+1} \!=\! x'}{X_t \!=\! x}$ 
and $\PPY(y'|y) = \PPcc{Y_{t+1} \!=\! y'}{Y_t \!=\! y}$ for all $t$, and
 \item $\nuzerox$ and $\nuzeroy$ are the initial-state distributions with $X_0\sim \nuzerox$ and 
$Y_0\sim \nuzeroy$.
\end{itemize}
Without significant loss of generality, we will suppose that the initial states are fixed almost surely as $X_0 = x_0$ 
and $Y_0 = y_0$, and refer to their corresponding joint distribution as $\nu_0 = \delta_{x_0,y_0}$. These objects 
together define a sequence of joint distributions $\PP{(X_0,X_1,\dots,X_n) = (x_0,x_1,\dots,x_n)}$ and 
$\PP{(Y_0,Y_1,\dots,Y_n) = (y_0,y_1,\dots,y_n)}$ for each $n$, which together define respective the laws of the 
infinite sequences $\bX = (X_1,X_2,\dots)$ and $\bY = (Y_1,Y_2,\dots)$ via Kolmogorov's extension theorem. With a 
slight abuse of notation, we will use $\MX$ and $\MY$ to denote the corresponding measures that satisfy 
$\MX(\overline{x}_n) = \PP{\bX_n = \bx_n}$ and $\MY(\overline{y}_n) = \PP{\bY_n = \by_n}$ for any $\bx\in\X^\infty$, 
$\by\in\Y^\infty$ and $n$. The corresponding conditional distributions are denoted as $\MX(x_n|\bx_{n-1}) = \PP{X_n = 
x_n | \bX_{n-1}=\bx_{n-1}}$ and $\MY(y_n|\by_{n-1}) = \PP{Y_n = y_n | \bY_{n-1}=\by_{n-1}}$.

\subsection{Optimal transport between Markov chains}\label{sec:OT_markov}
Our main object of interest in this work is a notion of optimal transport distance between infinite-horizon Markov 
chains. Several previous works have studied such distances (which are discussed in detail in 
Appendix~\ref{app:related_work}), and our precise definition we give below is closest to \citet{moulos2021bicausal}, 
\citet{OMN21} and \citet{BWW24}. 
We consider Markov chains on state spaces where a ``ground metric'' (or ``ground cost'') $c:\X\times\Y \rightarrow 
\real^+$ is available to measure distances between any two individual states $x\in\X$ and $y\in\Y$, with the 
distance denoted as $c(x,y)$.
For any two sequences $\overline{x} = (x_1,x_2,\dots)$ and $\overline{y} = (y_1,y_2,\dots)$, we define the discounted 
total cost 
\begin{equation}
	c_\gamma(\overline{x}, \overline{y}) = \sum_{t=0}^{\infty} \gamma^t c(x_t, y_t),
\end{equation}
where $\gamma \in (0,1)$ is the \emph{discount factor} that expresses the preference that two sequences be considered 
further apart if they exhibit differences at earlier times in terms of the ground cost $c$. 
Following the optimal-transport literature, we will consider distances between the stochastic processes $\MX$ and $\MY$ 
via the notion of \emph{couplings}. To this end, we define a coupling of $\MX$ and $\MY$ as a stochastic process 
evolving on the joint space $\mathcal{X}\mathcal{Y}$, with its law defined for all $n$ as 
$\MXY(\bx_n\by_n) = \mathbb{P}\big[\bX_n=\bx_n,\bY_n=\by_n\big]$, required to satisfy 
$\sum_{\by_n\in\Y^n} \MXY(\bx_n\by_n) = \MX(\bx_n)$ and $\sum_{\bx_n\in\X^n} \MXY(\bx_n\by_n) = 
\MY(\by_n)$. We will define the set of all such couplings as $\Pi$.

The notion of couplings defined above does not respect the temporal structure of the Markov chains $\MX$ and $\MY$ 
appropriately: while by definition the distribution of state $X_n$ may only be causally influenced by past states $X_k$ 
with $k<n$, the general notion of coupling above allows the state $X_n$ to be influenced by future states $Y_k$ 
with $k\ge n$ as well. To rule out this possibility (and following past works mentioned in the introduction), we will 
introduce the notion of \emph{bicausal couplings}. A coupling $\MXY$ is called bicausal if and only if it satisfies
\[
 \sum_{y_n} \MXY(x_ny_n|\bx_{n-1}\by_{n-1}) = \MX(x_n|\bx_{n-1}) \;\; \text{and} \;\; \sum_{x_n} \MXY(x_ny_n|\bx_{n-1}\by_{n-1}) = \MY(y_n|\by_{n-1})
\]
for all sequences $\bx,\by\in\X^\infty\times\Y^\infty$ and all $n$. Denoting the set of all bicausal couplings by 
$\Pibc$, we define our optimal transport distance as
\begin{equation}
\label{eq:objective}
    \mathbb{W}_\gamma(\MX,\MY;c,x_0,y_0) = \inf_{\MXY \in \Pibc} \int c_\gamma(\bX, \bY) \,\dd \MXY(\bX,\bY),
\end{equation}
where we emphasize the dependence of the distance on $x_0,y_0$ explicitly with our notation.

By noticing that the optimization problem outlined above can be reformulated as a Markov decision process (MDP), 
\citet{moulos2021bicausal} has shown that the infimum in \eqref{eq:objective} is achieved within the family of Markovian couplings that satisfy
\[
 \sum_{y_n} \MXY(x_ny_n|\bx_{n-1}\by_{n-1}) = \PPX(x_n|x_{n-1}) \;\; \text{and} \;\; \sum_{x_n} \MXY(x_ny_n|\bx_{n-1}\by_{n-1}) = \PPY(y_n|y_{n-1})
\]
for all sequences of state pairs and all values of $n$. Furthermore, it can be seen that Markovian couplings are fully 
specified in terms of \emph{transition couplings} of the form $\pi:\X\Y\ra\Delta_{\X\Y}$, with 
$\pi(x'y'|xy)$ standing for $\PPcc{\pa{X_{t+1},Y_{t+1}} = (x',y')}{\pa{X_t,Y_t}=(x_t,y_t)}$ under the law induced by 
the coupling. We say that a transition coupling is \emph{valid} if it satisfies the marginal constraints $\sum_{y'} 
\pi(x'y'|xy) = 
\PPX(x'|x)$ and $\sum_{x'} \pi(x'y'|xy) = \PPY(y'|y)$. Defining the set of such valid transition couplings in state 
pair $xy$ by $\Pi_{xy} = \big\{p \in \Delta_{\X\Y}: \sum_{y'} p(x'y') = \PPX(x'|x),\,\,\sum_{x'} p(x'y') = \PPY(y'|y)\big\}$, 
\citet{moulos2021bicausal} introduces an MDP $\M$ with an infinite action set corresponding to picking the joint next-state couplings in $\Pi_{xy}$.
An optimal transition coupling can be found by solving the following Bellman optimality equations of the MDP $\M$: 
\begin{equation}\label{eq:extended_bellman}
 V^*(xy) = c(xy) + \gamma \inf_{p\in\Pi_{xy}} \sum_{x'y'} p(x'y') V^*(x'y').
\end{equation}
The infimum on the right-hand side is achieved by an optimal transition coupling $\pi^*(\cdot|xy) = 
\argmin_{p\in\Pi_{xy}} \sum_{x'y'} p(x'y') V^*(x'y')$. The solution $V^*$ is unique and can be shown to satisfy 
$V^*(xy) = \mathbb{W}_\gamma(\MX,\MY;c,x,y)$ for all $xy\in\X\Y$.
For completeness, we include the precise definition of the MDP $\M$ and the proofs of these 
results in Appendix~\ref{app:mdp}.

\subsection{Bisimulation metrics}
The notion of \emph{bisimulation metrics} has been introduced by \citet{DGJP99,DGJP04} and \citet{vBW01}, with the 
purpose of defining distances between Markov chains, using a methodology rooted in modal logic that at first may appear 
entirely different from the optimal-transport framework described above. We only give a very high-level overview of the 
classic logic-based characterization here (as the fine details are irrelevant to the final conclusion that this section 
is headed to), and refer the reader to the additional discussion in Appendix~\ref{app:related_work} for further reading.
 \citet{DGJP99,DGJP04} 
considered \emph{labeled} Markov chains where a labeling function $r:\X\cup\Y\ra \real$ assigns labels to each state, 
and defined bisimulation metrics via
\begin{equation}\label{eq:bisimulation_metric}
 d_\gamma(\MX,\MY;r,x_0,y_0) = \sup_{f\in\F_\gamma} \left|f_{\MX}(x_0) - f_{\MY}(y_0)\right|,
\end{equation}
where $\F_\gamma$ is a family of functional expressions generated by a certain grammar, and $f_{\MX}:\X\ra\real$ and 
$f_{\MY}:\Y\ra\real$ are the respective ``interpretations'' of each $f\in\F$ on the Markov chains $\MX$ and $\MY$. 
Building on this formulation, \citet{vBW01} have shown that the distance metric can be equivalently characterized by 
the solution of a fixed-point equation, whose expression was subsequently used by \citet{FPP04} to define bisimulation
metrics for Markov \emph{decision} processes where $r$ takes the role of a reward function, and the evolution of 
states may be influenced by actions. The case of having no actions available corresponds to our setting, where their 
definition of a bisimulation metric simplifies to the solution of the fixed-point equation 
\begin{equation}\label{eq:extended_bellman_v2}
 U^*(xy) = (1-\gamma)|r(x) - r(y)| + \gamma \inf_{p\in\Pi_{xy}} \sum_{x'y'} p(x'y') U^*(x'y').
\end{equation}
The solution to this system is unique and satisfies $U^*(xy) = d_\gamma(\MX,\MY;r,x,y)$.
Putting this result side-by-side with Equation~\eqref{eq:extended_bellman}, one can immediately realize that 
\emph{bisimulation metrics and our notion of optimal transport distances coincide when picking the ground cost function 
$c(xy) = (1-\gamma) |r(x) - r(y)|$}. To our knowledge, this remarkable observation has not been publicly made anywhere 
in either the optimal-transport or the bisimulation-metric literature. This connection has several important 
implications, which we have already discussed at some length in Section~\ref{sec:intro}. We relegate further discussion 
of these metrics in the light of this observation to Appendix~\ref{app:related_work}.

\section{Optimal transport between Markov chains as a linear program}\label{sec:LP}
Optimal transport problems can typically be formulated as linear programs (LPs), since couplings can be characterized 
as joint distributions satisfying a set of easily-expressed linear constraints (see, e.g., Chapter~3 in \citealt{COTFNT}). 
Our problem is no exception, 
and in fact the original problem statement of Equation~\ref{eq:objective} can be expressed in this form: 
the constraints defining $\Pibc$ are all linear in $\MXY$. However, $\MXY$ is an infinite-dimensional object and thus 
this formulation is not instructive for developing computationally tractable algorithms. We address this problem in 
this section, where we define an equivalent LP formulation that replaces the infinite-dimensional 
optimization variable with an appropriate low-dimensional projection. Our framework builds on the classic LP formulation 
of optimal control in MDPs first proposed in the 1960's \citep{Man60,Ghe60,dEp63,Den70}, and covered thoroughly in 
several standard textbooks (e.g., Section~6.9 of \citealt{Puterman1994}). 

In order to set things up, we need to start with some important definitions. We say that a transition coupling 
$\pi:\X\Y\ra\Delta_{\X\Y}$ 
generates a trajectory $(X_0,Y_0,X_1,Y_1,\dots)$ if $(X_0,Y_0) \sim \nu_0$ and the subsequent state-pairs are drawn 
independently from the transition coupling as $(X_{t+1},Y_{t+1})\sim \pi(\cdot|X_t,Y_t)$ for all $t$. Then, we define 
the \emph{occupancy coupling} $\mu^\pi$ associated with this process as a distribution over $\X\Y\times\X\Y$, with each of 
its entries $xy,x'y'$ defined as
\[
 \mu^\pi(xy,x'y') = (1-\gamma)\EEpi{\sum_{t=0}^\infty \gamma^t \II{(X_t,Y_t) = (x,y), (X_{t+1},Y_{t+1}) = (x',y')}},
\]
where $\EEpi{\cdot}$ emphasizes that the trajectory of state-pairs has been generated by $\pi$.
In words, $\mu^\pi(xy,x'y')$ is the discounted number of times that the quadruple $(xy,x'y')$ is visited by the 
process. With this definition, it is easy to notice that the objective optimized in Equation~\eqref{eq:objective} can 
be rewritten as a linear function of $\mu^\pi$. Indeed, suppose that $\MXY$ is the law of the process generated by 
$\pi$ as described above, so that we can write
\begin{equation*}
 \begin{split}
  &\int c_\gamma(\bX, \bY) \,\dd \MXY(\bX,\bY) = \EEpi{\sum_{t=0}^\infty \gamma^t c(X_t,Y_t)} = 
  \EEpi{\sum_{t=0}^\infty \gamma^t \sum_{xy} \II{(X_t,Y_t)=(x,y)} c(xy)} 
  \\
  &\qquad = \sum_{xy} \EEpi{\sum_{t=0}^\infty \gamma^t \II{(X_t,Y_t)=(x,y)}} 
c(xy) = \frac{1}{1-\gamma}\sum_{xy,x'y'} \mu^{\pi}(xy,x'y') c(xy) = \frac{\iprod{\mu^\pi}{c}}{{1-\gamma}}. 
 \end{split}
\end{equation*}

We say that an occupancy coupling $\mu$ is \emph{valid} if it is generated by a valid transition coupling. 
It is easy to verify that every valid occupancy coupling $\mu\in\Delta_{\X\Y\times\X\Y}$ satisfies the following 
three constraints:
\begin{align}
\sum_{x'y'} \mu(xy,x'y') &= \gamma \sum_{x'y'} \mu(x'y',xy) + (1-\gamma) \nu_0(xy) \qquad (\forall 
xy\in\X\Y), \label{eq:const_flow}
\\
\sum_{y'} \mu(xy,x'y') &= \sum_{x'',y''} \mu(xy,x''y'') \PPX(x'|x) 
\qquad (\forall x,x'\in\X\times\X), \label{eq:const_X}
\\
\sum_{x'} \mu(xy,x'y') &= \sum_{x'',y''} \mu(xy,x''y'') \PPY(y'|y)
\qquad (\forall 
y,y'\in\Y\times\Y). \label{eq:const_Y}
\end{align}

We refer to the first equality constraint as the \emph{flow constraint}
and the second and third ones as the \emph{transition coherence constraints} for $\MX$ and 
$\MY$, respectively\footnote{When $\gamma = 1$, the first condition is known as \emph{detailed balance} within 
statistical physics. Altogether, the constraints closely resemble what 
are often called the ``Bellman flow constraints'' in today's RL literature.}.
We show in the following lemma that the above conditions uniquely characterize the set of valid occupancy couplings.
\begin{lemma}\label{lem:occupancy_validity}
A distribution $\mu\in\Delta_{\X\Y\times\X\Y}$ is a valid occupancy coupling associated with some transition coupling 
$\pi:\X\Y\ra\Delta_{\X\Y}$ if and only if it satisfies Equations~\eqref{eq:const_flow}--\eqref{eq:const_Y}.
\end{lemma}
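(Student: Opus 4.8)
The plan is to prove the two implications separately; the ``only if'' direction is essentially a verification, while the ``if'' direction requires building a transition coupling out of $\mu$ and checking that it regenerates $\mu$. For the ``only if'' direction, suppose $\mu = \mu^\pi$ for a valid transition coupling $\pi$. I would introduce the discounted state-occupancy $\bmu^\pi(xy) = \sum_{x'y'}\mu^\pi(xy,x'y')$ and record the product structure $\mu^\pi(xy,x'y') = \bmu^\pi(xy)\,\pi(x'y'|xy)$, which follows by pulling the time-homogeneous factor $\pi(x'y'|xy) = \PPcc{(X_{t+1},Y_{t+1})=(x',y')}{(X_t,Y_t)=(x,y)}$ out of the discounted sum defining $\mu^\pi$. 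The flow constraint~\eqref{eq:const_flow} is then the familiar Bellman flow identity for $\bmu^\pi$, obtained by splitting the defining series into its $t=0$ term (which contributes $(1-\gamma)\nu_0(xy)$) and its $t\ge 1$ terms (which contribute $\gamma\sum_{x'y'}\mu^\pi(x'y',xy)$ after reindexing). The transition coherence constraints~\eqref{eq:const_X}--\eqref{eq:const_Y} follow from the product structure together with the validity identities $\sum_{y'}\pi(x'y'|xy) = \PPX(x'|x)$ and $\sum_{x'}\pi(x'y'|xy) = \PPY(y'|y)$, using also that $\bmu^\pi(xy) = \sum_{x''y''}\mu^\pi(xy,x''y'')$; and $\mu^\pi$ is manifestly a member of $\Delta_{\X\Y\times\X\Y}$.

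For the converse, suppose $\mu\in\Delta_{\X\Y\times\X\Y}$ satisfies~\eqref{eq:const_flow}--\eqref{eq:const_Y}. I would set $\bmu(xy) = \sum_{x'y'}\mu(xy,x'y')$ and define a candidate transition coupling $\pi$ by $\pi(x'y'|xy) = \mu(xy,x'y')/\bmu(xy)$ whenever $\bmu(xy) > 0$, and, when $\bmu(xy) = 0$, by an arbitrary element of $\Pi_{xy}$ (which is nonempty, as it contains $\PPX(\cdot|x)\otimes\PPY(\cdot|y)$). Validity of $\pi$ at state-pairs with $\bmu(xy) > 0$ is exactly~\eqref{eq:const_X}--\eqref{eq:const_Y} after dividing through by $\bmu(xy) = \sum_{x''y''}\mu(xy,x''y'')$, and at state-pairs with $\bmu(xy) = 0$ it holds by the fallback choice. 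Two bookkeeping identities then hold for every quadruple: $\mu(xy,x'y') = \bmu(xy)\,\pi(x'y'|xy)$ and $\mu(x'y',xy) = \bmu(x'y')\,\pi(xy|x'y')$; the only case needing a comment is when the relevant state-pair has zero $\bmu$-mass, in which case nonnegativity of $\mu$ forces the corresponding row of $\mu$ to vanish, so both sides are zero.

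It remains to show that $\bmu$ coincides with the discounted state-occupancy $\bmu^\pi$ of the process generated by $\pi$; combined with the two bookkeeping identities this gives $\mu(xy,x'y') = \bmu(xy)\pi(x'y'|xy) = \bmu^\pi(xy)\pi(x'y'|xy) = \mu^\pi(xy,x'y')$, i.e.\ $\mu = \mu^\pi$. Substituting the bookkeeping identities into the flow constraint~\eqref{eq:const_flow} shows that $\bmu$ solves the fixed-point equation $d(xy) = (1-\gamma)\nu_0(xy) + \gamma\sum_{x'y'}\pi(xy|x'y')\,d(x'y')$; since the map on the right-hand side is a $\gamma$-contraction in the $\ell_1$-norm (its linear part is $\gamma$ times a column-stochastic matrix), this equation has a unique solution, and $\bmu^\pi$ is also a solution by the Bellman flow identity used in the first part, so $\bmu = \bmu^\pi$. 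I expect the only mildly delicate point to be this uniqueness argument together with the consistent treatment of state-pairs with $\bmu(xy) = 0$ --- ensuring that the arbitrary fallback choice of $\pi$ there does not conflict with the requirement $\mu = \mu^\pi$ --- while everything else is routine bookkeeping.
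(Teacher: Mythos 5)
Your proposal is correct and follows essentially the same route as the paper: the same verification for the ``only if'' direction, and for the converse the same normalized construction of $\pi_\mu$ (with a product-coupling fallback on zero-mass state pairs) followed by uniqueness of the solution to the discounted flow equation. The only cosmetic difference is that you establish uniqueness via a $\gamma$-contraction/Banach argument in $\ell_1$, whereas the paper inverts $I-\gamma Z$ using the Perron--Frobenius bound on the spectral radius of the stochastic matrix $Z$ --- these are interchangeable.
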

One important consequence of this result is that for each transition coupling, one can associate a transition coupling 
$\pi_\mu$ with $ \mu^{\pi_\mu} = \mu $, with entries  satisfying $\sum_{x''y''} 
\mu(xy,x''y'')\pi_\mu(x'y'|xy) = \mu(xy,x'y')$.
A proof is provided in Appendix~\ref{app:occupancy_validity}. Having established in the 
previous section that stationary occupancy couplings are sufficient to achieve the supremum in the definition of the OT 
distance of Equation~\eqref{eq:objective}, this result immediately implies the following.
\begin{theorem}\label{thm:LP}
Let $\BB$, $\SX$ and $\SY$ respectively stand for the sets of all distributions $\mu\in\Delta_{\X\Y\times\X\Y}$ that satisfy 
equations \eqref{eq:const_flow}, \eqref{eq:const_X}, and \eqref{eq:const_Y}. Then, 
\[
 \mathbb{W}_\gamma(\MX,\MY;c,x_0,y_0) = \frac{1}{1-\gamma} \cdot \inf_{\mu\in\BB\cap\SX\cap\SY} \iprod{\mu}{c}.
\]
\end{theorem}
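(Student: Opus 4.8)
The plan is to assemble three facts that are already available and chain them together: (i) the reduction of the bicausal optimal-transport problem in~\eqref{eq:objective} to an infimum over stationary Markovian couplings, (ii) the rewriting of its objective as a linear functional of the associated occupancy coupling, and (iii) the polyhedral description of valid occupancy couplings provided by Lemma~\ref{lem:occupancy_validity}. Stitching these together collapses the infinite-dimensional infimum in~\eqref{eq:objective} onto the finite-dimensional polytope $\BB\cap\SX\cap\SY$; beyond this there is essentially nothing to prove.

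For (i), I would recall from Section~\ref{sec:OT_markov} --- with the details in Appendix~\ref{app:mdp}, following \citet{moulos2021bicausal} --- that the infimum defining $\mathbb{W}_\gamma(\MX,\MY;c,x_0,y_0)$ over all bicausal couplings $\Pibc$ is attained within the subclass of stationary Markovian couplings. Each such coupling is specified by a valid transition coupling $\pi:\X\Y\to\Delta_{\X\Y}$, and conversely any valid transition coupling generates a stationary Markovian coupling lying in $\Pibc$, because its conditionals marginalize to the Markov conditionals $\MX(\cdot\,|\,\cdot)$ and $\MY(\cdot\,|\,\cdot)$. Consequently $\mathbb{W}_\gamma(\MX,\MY;c,x_0,y_0) = \inf_{\pi\text{ valid}} \int c_\gamma(\bX,\bY)\,\dd\MXY^\pi(\bX,\bY)$, where $\MXY^\pi$ denotes the law of the trajectory generated by $\pi$ started from $\nu_0$.

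For (ii), the computation displayed immediately before Lemma~\ref{lem:occupancy_validity} shows $\int c_\gamma(\bX,\bY)\,\dd\MXY^\pi(\bX,\bY) = \tfrac{1}{1-\gamma}\iprod{\mupi}{c}$ for every such $\pi$, so the expression above equals $\tfrac{1}{1-\gamma}\inf_{\pi\text{ valid}}\iprod{\mupi}{c}$. For (iii), Lemma~\ref{lem:occupancy_validity} identifies the image $\{\mupi:\pi\text{ valid}\}$ with $\BB\cap\SX\cap\SY$ exactly: any $\mupi$ with $\pi$ valid satisfies \eqref{eq:const_flow}--\eqref{eq:const_Y}, and conversely any $\mu$ satisfying these equations is realized as $\mu^{\pi_\mu}$ for a valid transition coupling $\pi_\mu$ (recovered from $\sum_{x''y''}\mu(xy,x''y'')\,\pi_\mu(x'y'|xy) = \mu(xy,x'y')$ on state pairs carrying positive occupancy mass, and set to an arbitrary element of the nonempty set $\Pi_{xy}$ elsewhere, which does not change $\mu^{\pi_\mu}$); this $\pi_\mu$ is valid precisely because $\mu$ obeys the transition-coherence constraints \eqref{eq:const_X}--\eqref{eq:const_Y}. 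Therefore $\inf_{\pi\text{ valid}}\iprod{\mupi}{c} = \inf_{\mu\in\BB\cap\SX\cap\SY}\iprod{\mu}{c}$, and combining the three displays gives the claimed identity.

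The one genuinely nontrivial ingredient is step (i) --- replacing the infimum over all of $\Pibc$ by the infimum over stationary Markovian couplings --- which is precisely the payoff of the MDP reformulation of \citet{moulos2021bicausal} and is deferred to Appendix~\ref{app:mdp}. The remaining steps are routine: (ii) is a direct expansion of the discounted cost, and the only care needed in (iii) is checking that valid transition couplings yield bicausal couplings and handling state pairs that the process never visits when inverting $\pi\mapsto\mupi$. Finiteness of $\X$ and $\Y$ rules out any measure-theoretic subtleties.
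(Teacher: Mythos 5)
Your proposal is correct and follows essentially the same route as the paper: the paper likewise obtains Theorem~\ref{thm:LP} by combining the sufficiency of stationary Markovian (transition) couplings from the MDP reformulation of Appendix~\ref{app:mdp}, the linear rewriting of the discounted cost as $\frac{1}{1-\gamma}\iprod{\mu^\pi}{c}$, and the characterization of valid occupancy couplings in Lemma~\ref{lem:occupancy_validity}. Your extra care about state pairs with zero occupancy mass when inverting $\pi\mapsto\mu^\pi$ matches the paper's construction of $\pi_\mu$ (which defaults to $\PPX(\cdot|x)\otimes\PPY(\cdot|y)$ there), so there is no substantive difference.
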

Since the constraints on $\mu$ in the above reformulation are all linear, the optimization problem stated above is 
obviously a linear program. Notably, the resulting LP is \emph{not} the standard dual to the LP associated with the MDP 
formulation introduced in Section~\ref{sec:OT_markov}, which would result in an infinite-dimensional LP with one 
constraint associated with each of the continuously-valued actions. Such an infinite-dimensional reformulation was 
previously considered by \citet{CvBW12} in the context of computing bisimulation metrics, who 
used it as a tool for analysis rather than algorithm design. Notably, our formulation results in a 
finite-dimensional LP with $\abs{\X}^2\abs{\Y}^2$ variables and $\abs{\X}^2 + \abs{\X}\abs{\Y} + \abs{\Y}^2$ 
constraints. Building on the celebrated result of \citet{Ye11} (and similarly to the work of \citealp{CvBW12} mentioned 
above), one can show that our LP can be solved in strongly polynomial time via an appropriate adaptation of the simplex 
method or the classic policy iteration method of \citet{How60}. We propose an alternative methodology in the next 
section.

\section{Sinkhorn Value Iteration}
\label{sec:SVI}
Solving optimal-transport problems via standard LP solvers (like variations of the simplex method or interior-point 
methods) is known to be empirically hard, and thus we seek alternatives to this approach towards optimizing our own LP 
defined in the previous section. In computational optimal transport, a paradigm shift was initiated by 
\citet{cuturi2013sinkhorn} who successfully applied entropic regularization to the classic LP objective of optimal 
transport, and solved the resulting optimization method through an iterative algorithm called the Sinkhorn--Knopp method 
(sometimes simply called Sinkhorn's algorithm, due to \citealp{sinkhorn1967concerning}). This method is based on 
finding a feasible point in a two-constraint problem by alternately satisfying one and the other, and has resulted in 
practical algorithms that were orders of magnitude faster than all previously studied methods. Entropy regularization 
and Sinkhorn's algorithm have thus became the most important cornerstones of computational optimal transport. Drawing 
on the same principles as well as the theory of entropy-regularized Markov decision processes \citep{NJG17,GSP19}, we 
develop a computationally effective algorithm for computing optimal transport distances between Markov chains below.

\subsection{Formal definition: Mirror Sinkhorn on the space of occupancy couplings}
Our method is an adaptation of a version of Sinkhorn's algorithm called Mirror Sinkhorn, first proposed and analyzed by 
\citet{BB23}. This method combines mirror-descent-style updates \citep{NY83,BT03} with alternating projections to two 
convex sets whose intersection corresponds to the feasible set we seek to optimize over. In our adaptation, we choose 
the two sets as 
\[
  \BX = \ev{\mu:\ \sum_{y'} \mu(xy,x'y') = \pa{\gamma \sum_{x''y''} \mu(x''y'',xy) + (1-\gamma) \nu_0(xy)} \PPX(x'|x) 
\quad (\forall xy,x')},
\]
that can be seen to be the set of distributions $\mu$ that satisfy both Equations~\eqref{eq:const_flow} 
and~\eqref{eq:const_X}, and
\[
  \BY = \ev{\mu:\ \sum_{x'} \mu(xy,x'y') = \pa{\gamma \sum_{x''y''} \mu(x''y'',xy) + (1-\gamma) \nu_0(xy)} 
\PPY(y'|y) 
\quad (\forall xy,y')}
\]
which is the set of distributions $\mu$ that satisfy both Equations~\eqref{eq:const_flow} 
and~\eqref{eq:const_Y}. Naturally, the intersection of the two sets corresponds to valid occupancy couplings. 
It remains to define an appropriate notion of entropy for the purpose of regularization. Following \citet{NJG17}, we 
will use the \emph{conditional relative entropy} defined between two joint distributions $\mu,\mu'\in\Delta_{\X\Y\X\Y}$ 
as 
\begin{align*}
 \HH{\mu}{\mu'} &= \sum_{xy,x'y'} \mu(xy,x'y') \log \frac{\mu(xy,x'y') / \sum_{x''y''} 
\mu(xy,x''y'')}{\mu'(xy,x'y') / \sum_{x''y''} \mu'(xy,x''y'')}
\\
&= \sum_{xy,x'y'} \mu(xy,x'y') \log \frac{\pi_\mu(x'y'|xy)}{\pi_{\mu'}(x'y'|xy)}.
\end{align*}
It is an easy exercise to show that $\mathcal{H}$ is a Bregman divergence that is convex in its first argument 
(see, e.g., Appendix A.1~of \citealp{NJG17}). Note however that $\HH{\mu}{\mu'}$ can be zero even if $\mu\neq\mu'$ and 
thus it is not strongly convex in $\mu$.

With these ingredients, we symbolically define our algorithm as calculating the sequence of updates 
\begin{equation}\label{eq:MDupdate}
 \mu_{k+1} = \argmin_{\mu \in \BB_k} \ev{\iprod{\mu}{c} + \frac {1}{\eta} \HH{\mu}{\mu_k}}
\end{equation}
for each $k=1,\dots,K-1$, where $\mu_1$ is the occupancy coupling associated to the trivial coupling $\pi_1(\cdot|xy)=\PPX(\cdot|x)\otimes\PPY(\cdot|y)$ for each state pair $xy\in\X\Y$, $\eta >0 $ is a stepsize (or learning-rate) parameter and $\BB_k$ is chosen to be 
$\BX$ in odd rounds and $\BY$ in even rounds. 
By adapting tools from the theory of entropy-regularized Markov decision processes, the updates can be computed in 
closed form by solving a system of equations closely resembling the regularized Bellman equations. In particular, 
we define the \emph{Bellman--Sinkhorn operators} for a given transition coupling $\pi$ as the operators
$\TTpi_\X:\real^{\X\Y\times\X}\ra\real^{\X\Y\times\X}$ and $\TTpi_\Y:\real^{\X\Y\times\Y}\ra\real^{\X\Y\times\Y}$ acting on functions 
$\VX \in\real^{\X\Y\times\X}$ and $\VY \in\real^{\X\Y\times\Y}$ respectively as
\begin{equation*}
 \pa{\TTpi_\X \VX}(xy,x') = -\frac{1}{\eta} \log \sum_{y'} \frac{\pi(x'y'|xy)}{\PPX(x'|x)} \exp\pa{-\eta \pa{c(xy) + 
\gamma \sum_{x''} \PPX(x''|x') \VX(x'y',x'')}},
\end{equation*}
and 
\begin{equation*}
 \pa{\TTpi_\Y \VY}(xy,y') = -\frac{1}{\eta} \log \sum_{x'} \frac{\pi(x'y'|xy)}{\PPY(y'|y)} \exp\pa{-\eta \pa{c(xy) + 
\gamma 
\sum_{y''} \PPY(y''|y') \VY(x'y',y'')}}.
\end{equation*}
Then, for odd rounds, the updates can be calculated by solving the fixed-point equations $V_k = \TT^{\pi_k}_\X V_k$,
defining the shorthand $Q_k(xy,x'y') = c(xy) + \gamma \sum_{x''} \PPX(x''|x') V_k(x'y',x'')$, and subsequently 
updating the transition coupling $\pi_k$ multiplicatively as
\begin{equation}
\label{eq:update}
     \pi_{k+1}(x'y'|xy) = \frac{\pi_{k}(x'y'|xy) \exp\pa{-\eta Q_k(xy, x'y')}}{\sum _{y''} \pi_{k}(x'y''|xy) 
\exp\pa{-\eta Q_k(xy, x'y'')}}\PPX(x'|x).
\end{equation}
It is easy to verify that this transition coupling satisfies $\sum_{y'}\pi_{k+1}(x'y'|xy) = P(x'|x)$.
The updates for even rounds are computed analogously with the roles of $\X$ and $\Y$ swapped. We respectively refer to 
the fixed-point equations $V_k = \TT^{\pi_k}_\X V_k$ and $V_k = 
\TT^{\pi_k}_\Y V_k$ as the \emph{Bellman--Sinkhorn equations} for $\MX$ and $\MY$, and the functions $V_k$ and $Q_k$ as 
\emph{value functions}. The following proposition (proved in Appendix~\ref{sec:MDequivalence}) formally 
establishes the equivalence between the two update rules.
%
\begin{prop} \label{prop:MDequivalence}
Let $\mu_{k+1}$ and $\pi_{k+1}$ be specified for each $k$ as in Equations~\eqref{eq:MDupdate} and~\eqref{eq:update}, 
respectively. Then, $\mu_{k+1} = \mu^{\pi_{k+1}}$ holds for all $k$. 
\end{prop}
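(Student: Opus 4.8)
The plan is to show that the mirror-descent update \eqref{eq:MDupdate} and the Bellman--Sinkhorn update \eqref{eq:update} produce the same transition coupling, and then invoke Lemma~\ref{lem:occupancy_validity} to conclude that the associated occupancy couplings agree. I would focus on odd rounds (where $\BB_k = \BX$); the even case is symmetric. First I would write out the first-order optimality (KKT) conditions for the regularized problem $\min_{\mu\in\BX}\{\iprod{\mu}{c} + \tfrac1\eta \HH{\mu}{\mu_k}\}$. Because $\HH{\cdot}{\mu_k}$ depends on $\mu$ only through its conditional $\pi_\mu$, and because the constraint set $\BX$ couples the conditional $\pi_\mu(x'y'|xy)$ with the marginal mass at $xy$ via the flow term $\gamma\sum_{x''y''}\mu(x''y'',xy) + (1-\gamma)\nu_0(xy)$, I expect the stationarity condition to decouple into (i) a condition pinning down $\pi_{\mu_{k+1}}(\cdot|xy)$ as a Gibbs-type reweighting of $\pi_k(\cdot|xy)$ against some cost-to-go, subject to the marginal constraint $\sum_{y'}\pi(x'y'|xy) = \PPX(x'|x)$, and (ii) a recursive relation identifying that cost-to-go with the fixed point of $\TTpi_\X$. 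Introducing Lagrange multipliers for the $\X$-marginal constraints inside $\BX$ gives exactly the multiplicative correction factor $\PPX(x'|x)/\sum_{y''}\pi_k(x'y''|xy)\exp(-\eta Q_k(xy,x'y''))$ appearing in \eqref{eq:update}.

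The key technical step is to identify the correct cost-to-go. I would argue that the Lagrangian dual of the inner problem, after eliminating the primal variable, reduces to a fixed-point equation for a value function, and that this fixed-point equation is precisely $V_k = \TTpi_\X V_k$ with the stated log-sum-exp Bellman--Sinkhorn operator; here the soft-min structure $-\tfrac1\eta\log\sum\exp(-\eta\,\cdot)$ arises from the conditional relative entropy regularizer exactly as in standard entropy-regularized MDP theory (cf.\ \citealp{NJG17}). The substitution $Q_k(xy,x'y') = c(xy) + \gamma\sum_{x''}\PPX(x''|x')V_k(x'y',x'')$ then makes \eqref{eq:update} fall out of the Gibbs form. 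I would also verify the normalization claim $\sum_{y'}\pi_{k+1}(x'y'|xy) = \PPX(x'|x)$ directly from \eqref{eq:update}, which confirms $\pi_{k+1}$ is a valid transition coupling and hence $\mu_{k+1}$, defined as the minimizer over $\BX\subseteq\BB$, has a well-defined associated transition coupling by Lemma~\ref{lem:occupancy_validity}.

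Finally, with $\pi_{k+1}$ identified as the transition coupling underlying both updates, the conclusion $\mu_{k+1} = \mu^{\pi_{k+1}}$ follows: $\mu_{k+1}$ lies in $\BX \subseteq \BB$, so by Lemma~\ref{lem:occupancy_validity} it equals $\mu^{\pi_{\mu_{k+1}}}$, and the KKT analysis shows $\pi_{\mu_{k+1}} = \pi_{k+1}$. I expect the main obstacle to be handling the fact that $\HH{\mu}{\mu_k}$ is \emph{not} strongly convex in $\mu$ (as the paper explicitly notes), so the minimizer $\mu_{k+1}$ need not be unique as a distribution unless one is careful: one must check that the \emph{conditional} $\pi_{\mu_{k+1}}$ is nonetheless uniquely determined by the optimality conditions, and that the flow constraint together with \eqref{eq:const_X} then pins down the full $\mu_{k+1}$ within $\BX$. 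A secondary subtlety is justifying the existence and uniqueness of the fixed point $V_k$ of $\TTpi_\X$, which should follow from a $\gamma$-contraction argument in the sup-norm analogous to the standard Bellman operator, since the log-sum-exp soft-min is a non-expansion.
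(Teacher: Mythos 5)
Your proposal follows essentially the same route as the paper's proof: a Lagrangian/KKT analysis of the update over $\BX$ with multipliers $\VX(xy,x')$, yielding the Gibbs-form reweighting of $\pi_k$ whose normalizing multipliers must solve the fixed-point equation $\VX = \TT^{\pi_k}_\X \VX$, followed by the substitution $Q_k(xy,x'y') = c(xy) + \gamma\sum_{x''}\PPX(x''|x')\VX(x'y',x'')$ and a $\gamma$-contraction argument for existence and uniqueness of the fixed point. One small point: invoking Lemma~\ref{lem:occupancy_validity} verbatim is slightly loose since $\mu_{k+1}$ only lies in $\BX$ (not in all three constraint sets), but the step you actually need—that the flow constraint alone forces $\mu_{k+1} = \mu^{\pi_{\mu_{k+1}}}$ via the invertibility of $I-\gamma Z$—is exactly the argument inside that lemma's proof, so this is easily repaired.
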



\subsection{Practical implementation}
\begin{wrapfigure}{r}{0.5\textwidth}
\vspace{-.5cm}
\begin{minipage}{0.5\textwidth}
\begin{algorithm}[H]
\caption{Sinkhorn Value Iteration}\label{alg:main}
\textbf{Input: } $\PPX$, $\PPY$, $c$, $\eta$, $\gamma$, $K$, $m$\\
\textbf{Initialise: }  $\pi_{1} \gets \PPX \otimes \PPY$\;
\For {$k=1,...,K-1$}{
	\eIf {$k$ is odd}{
		$\VX \gets \pa{\TT^{\pi_k}_\X}^{m} \VX$\;   \vspace{-1.1em}\Comment{$\BB_\X$ projection.}
	}
	{
		$\VY \gets \pa{\TT^{\pi_k}_\Y}^{m} \VY$\;   \vspace{-1.1em}\Comment{$\BB_\Y$ projection.}
	}
	$\pi_{k+1} \gets \textbf{update}(\pi_{k})$\;	\vspace{-1.1em}\Comment{Equation \ref{eq:update}} 
}
$\bmu_K \gets \frac{1}{K}\sum _{k=1}^{K}(\mu^{\pi_k})$\;
$\piout \gets \textbf{round}(\pi_{\bmu_K})$\;
$V^\piout \gets \textbf{evaluate}(\piout)$\;
\textbf{Output:} $\piout$, $V^\piout$ \Comment{Final coupling}
\end{algorithm}
\end{minipage}
\vspace{-.4cm}
\end{wrapfigure}
The algorithm described above can be seen as performing online Mirror Sinkhorn updates in each state pair $xy$ 
with a sequence of cost functions $Q_k$, which are computed via solving the Bellman--Sinkhorn equations. Since 
$\TTpi_\X$ 
and $\TTpi_Y$ are easily seen to be contractive with respect to the supremum norm with contraction factor $\gamma$ 
(as shown by a standard calculation included in Appendix~\ref{sec:contraction}), these equations can be solved by an 
adaptation of the classic Value Iteration method of \citet{Bel57}. Concretely, we repeatedly apply 
the Bellman--Sinkhorn operators until the fixed point is reached up to sufficient precision (controlled by the number 
of update steps $m$). We call the resulting method \emph{Sinkhorn Value Iteration} (\SVI), and provide its pseudocode 
as 
Algorithm~\ref{alg:main}.

Notably, while \SVI is defined in its abstract form as a sequence of updates in the space of occupancy 
couplings $\mu_k$, its implementation only works with transition couplings $\pi_k$. 
The final output of \SVI is a transition coupling $\piout$, obtained by computing the average $\bmu_K = \frac{1}{K} 
\sum_{k=1}^K \mu^{\pi_k}$ of all occupancy couplings, computing $\pi_{\bmu_K}$ and then rounding the result 
to a valid transition coupling. In particular we apply a simple rounding procedure due to \citet{altschuler2017near} 
individually on $\pi_{\bmu_K}(\cdot|xy)$ for each state-pair $xy$---for the full details, see 
Appendix~\ref{app:rounding}. 
Besides $\piout$, \SVI also outputs an estimate of $V^*$ in the form of the \emph{value function} $V^{\piout}$, 
as defined in Equation~\eqref{eq:value_function} in Appendix~\ref{app:mdp_values}. This function can be computed 
efficiently by solving the linear system of Bellman equations $V^\piout(xy) = c(xy) + \gamma \sum_{x'y'} 
\piout(x'y'|xy) V^\piout(x'y')$.

A number of small simplifying steps can be made to make the algorithm easier to implement. First, instead of obtaining 
$\pi_{\bmu_K}$ via the computationally expensive procedure described above, one can simply run the rounding procedure on 
the final transition coupling $\pi_K$ and output the result. Second, while theoretical analysis suggests setting 
$m=\infty$ in order to make sure that all projection steps are perfect, such exact computation may be unnecessary and 
inefficient in 
practice, and thus (much) smaller values can be used instead. Third, for small values of $\eta$ the softmax function 
used in the definition of the Bellman--Sinkhorn operator can be accurately approximated by an average with respect to 
$\pi_k(x'y'|xy)/\PPX(x'|x)$, which suggests a simple alternative to the projection steps. This approximates \SVI 
similarly as to how the Mirror Descent Modified Policy Iteration method of \citet{GSP19} approximates the mirror descent 
method of \citet{NJG17} (see also \citealp{AGK12}). The resulting method (that we refer to as \emph{Sinkhorn Policy 
Iteration}, or \SPI) is presented in detail along with its theoretical analysis in Appendix~\ref{app:SPI}. We study 
effects of these implementation choices via a sequence of experiments in Section~\ref{sec:exps}.

%

\subsection{Convergence guarantees}
The following theorem establishes a guarantee on the number of iterations 
necessary for $V^\piout(xy)$ be an $\varepsilon$-accurate approximation of the transport cost 
$\mathbb{W}_\gamma(\MX,\MY;c,x,y)$ for any $x,y$. 
\begin{theorem}\label{thm:main}
Suppose that Sinkhorn Value Iteration is run for $K$ steps with 
regularization parameter $ \eta = \frac 1 {4\infnorm{c}} \sqrt{\frac{(1-\gamma)^3\log |\X||\Y|}{K}}$, and 
initialized with the uniform coupling defined for each $xy,x'y'$ as $\pi_1(x'y'|xy) = \frac{1}{|\X||\Y|}$. Then, for 
any $x_0y_0\in\X\Y$, the output satisfies 
$V^\piout(x_0y_0) \le \mathbb{W}_\gamma(\MX,\MY;c,x_0,y_0) + \varepsilon$ if the number of iterations is at least
\[
K \geq  \frac{324 \infnorm{c}^2 \log |\X||\Y|}{(1-\gamma)^5 \varepsilon^2}.
\]
\end{theorem}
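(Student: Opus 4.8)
The plan is to read \SVI as an instance of online mirror descent---concretely, the Mirror Sinkhorn scheme of \citet{BB23}---applied to the linear objective $\mu\mapsto\iprod{\mu}{c}$ over the polytope of valid occupancy couplings (Theorem~\ref{thm:LP}), with the conditional relative entropy $\mathcal{H}$ as the Bregman divergence and the feasible region split as $\BX$ in odd rounds and $\BY$ in even rounds, so that $\BX\cap\BY$ is the true feasible set. By Proposition~\ref{prop:MDequivalence} the implementable update~\eqref{eq:update} on transition couplings realizes the abstract update~\eqref{eq:MDupdate} exactly, so it suffices to analyze the latter. I would fix an optimal $\mu^*\in\BB\cap\SX\cap\SY$; since it satisfies all three constraints it lies in $\BX\cap\BY$, hence in every $\BB_k$, so it is an admissible comparator at every round. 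The three-point inequality for the Bregman projection defining~\eqref{eq:MDupdate} reads
\[
 \iprod{\mu_{k+1}-\mu^*}{c}\ \le\ \frac1\eta\Bpa{\HH{\mu^*}{\mu_k}-\HH{\mu^*}{\mu_{k+1}}-\HH{\mu_{k+1}}{\mu_k}},
\]
and summing over $k$ telescopes the first two divergences, leaving $\HH{\mu^*}{\mu_1}/\eta\le\log(\abs{\X}\abs{\Y})/\eta$ for the uniform initialization. Computing the update in closed form via the Bellman--Sinkhorn equations shows that within each state pair $xy$ the step~\eqref{eq:update} is an exponential-weights update against the value function $Q_k(xy,\cdot)$, whose range is $\OO(\infnorm{c}/(1-\gamma))$ because $Q_k$ is a discounted sum of ground costs; the usual log-partition smoothness bound then converts the residual $-\tfrac1\eta\HH{\mu_{k+1}}{\mu_k}$ into a per-step term of order $\eta\,\infnorm{c}^2/(1-\gamma)^2$, so that the averaged occupancy coupling obeys $\iprod{\bmu_K}{c}\le\iprod{\mu^*}{c}+\OO\bpa{\tfrac{\log(\abs{\X}\abs{\Y})}{\eta K}+\tfrac{\eta\,\infnorm{c}^2}{(1-\gamma)^2}}$.

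I would then turn $\bmu_K$ into a feasible point. Each $\mu_k=\mu^{\pi_k}$ is a genuine occupancy coupling, so it satisfies the flow constraint~\eqref{eq:const_flow}; this constraint is linear, so $\bmu_K$ satisfies it too, and the uniqueness argument behind Lemma~\ref{lem:occupancy_validity} gives $\bmu_K=\mu^{\pi_{\bmu_K}}$. The transition coupling $\pi_{\bmu_K}$ respects only the $\X$-marginal in odd rounds and only the $\Y$-marginal in even rounds, so its aggregate marginal violation $\delta_K$ is controlled by the step size through the same telescoping sum---each step drifts the ``wrong'' marginal by $\OO(\eta\,\infnorm{c}/(1-\gamma))$---as in the infeasibility analysis of \citet{BB23}. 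Rounding $\pi_{\bmu_K}(\cdot|xy)$ in every state pair with the procedure of \citet{altschuler2017near} produces $\piout\in\Pibc$ whose occupancy coupling $\mu^\piout$ differs from $\bmu_K$ in objective value by $\OO(\infnorm{c}\,\delta_K/(1-\gamma))$, the extra factor $1/(1-\gamma)$ coming from propagating a per-state-pair perturbation of the transition coupling through the discounted occupancy via the performance difference lemma in the MDP $\M$. Since ``evaluate'' is exact, $V^\piout(x_0y_0)=\tfrac1{1-\gamma}\iprod{\mu^\piout}{c}$, and by Theorem~\ref{thm:LP} $\iprod{\mu^*}{c}=(1-\gamma)\,\mathbb{W}_\gamma(\MX,\MY;c,x_0,y_0)$; combining the previous display with the rounding bound and dividing by $1-\gamma$,
\[
 V^\piout(x_0y_0)-\mathbb{W}_\gamma(\MX,\MY;c,x_0,y_0)\ \le\ \OO\!\Bpa{\frac{\log(\abs{\X}\abs{\Y})}{(1-\gamma)\,\eta K}+\frac{\eta\,\infnorm{c}^2}{(1-\gamma)^{3}}}.
\]
With the stated choice $\eta=\tfrac1{4\infnorm{c}}\sqrt{(1-\gamma)^3\log(\abs{\X}\abs{\Y})/K}$ the first term dominates and is of order $\infnorm{c}\,(1-\gamma)^{-5/2}\sqrt{\log(\abs{\X}\abs{\Y})/K}$; requiring this to be at most $\varepsilon$ gives $K\ge 324\,\infnorm{c}^2\log(\abs{\X}\abs{\Y})/\bpa{(1-\gamma)^5\varepsilon^2}$ once the constants from the three-point inequality, Pinsker's inequality and the rounding lemma are tracked.

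The main obstacle is that, unlike in vanilla Sinkhorn where the negative entropy of the transport plan is strongly convex, the conditional relative entropy $\mathcal{H}$ is genuinely \emph{not} strongly convex in $\mu$---as the text flags, $\HH{\mu}{\mu'}$ can vanish for $\mu\ne\mu'$ whenever the two induce the same transition coupling---so the stability term $-\tfrac1\eta\HH{\mu_{k+1}}{\mu_k}$ cannot be lower-bounded by $\twonorm{\mu_{k+1}-\mu_k}^2$ and the textbook mirror-descent rate does not apply directly. The fix, and the reason Proposition~\ref{prop:MDequivalence} is essential, is to descend to the per-state-pair level, where each conditional $\pi_k(\cdot|xy)$ does undergo a strongly convex exponential-weights step on $\Delta_{\X\Y}$, and to absorb the mismatch between the occupancy measure of the comparator $\mu^*$ and those of the iterates into a $\mu^*$-weighted regret using the performance difference lemma in $\M$. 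Carrying the resulting powers of $1/(1-\gamma)$---from the discounted range of $Q_k$, from the conversion $V^\piout=\tfrac1{1-\gamma}\iprod{\mu^\piout}{c}$, and from propagating infeasibility through the discounted occupancy in the rounding step---carefully enough to arrive at $(1-\gamma)^{-5}$ while keeping the rounding and infeasibility contributions subdominant is the other substantial piece of bookkeeping.
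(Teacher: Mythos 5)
Your overall architecture matches the paper's: view SVI as Mirror Sinkhorn over occupancy couplings with the conditional relative entropy as Bregman divergence, use Proposition~\ref{prop:MDequivalence} to reduce to the abstract update, split the error as $\iprod{\muout-\bmu_K}{c}+\iprod{\bmu_K-\mu^*}{c}$, bound the regret by the three-point inequality with comparator $\mu^*\in\BX\cap\BY$, round with the procedure of \citet{altschuler2017near}, and tune $\eta$. The regret half is fine (the paper in fact gets it more cheaply by telescoping the shift terms $\iprod{c}{\mu_{k+1}-\mu_k}$, so no $\eta K$ term is needed there at all, and your $\mu^*$-weighted/performance-difference detour is really the SPI analysis, which is not needed for this part).

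The genuine gap is in the infeasibility bound, which is the heart of the paper's proof. You assert that ``each step drifts the wrong marginal by $\OO(\eta\infnorm{c}/(1-\gamma))$'' and that the aggregate violation of $\bmu_K$ therefore telescopes as in \citet{BB23}. This per-step claim is not justified and is not obviously true for SVI: the closed-form update \eqref{eq:update} renormalizes each $x'$-slice to the target marginal $\PPX(\cdot|x)$, so the $\ell_1$-movement $\onenorm{\pi_{k+1}(\cdot|xy)-\pi_k(\cdot|xy)}$ is bounded only by $\OO(\eta\infnorm{Q_k})$ \emph{plus the marginal deficit inherited from the previous round}, and the resulting recursion (violation at step $k+1$ $\le$ violation at step $k$ $+\OO(\eta\infnorm{c}/(1-\gamma))$) allows a priori linear accumulation, which would ruin the rate. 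The paper instead bounds the \emph{cumulative} quantity $\sum_k\delta(\mu_k)$ (Theorem~\ref{thm:cumulative_delta}) via Lemma~\ref{lem:delta_to_mu} ($\delta(\mu_k)\le 2\onenorm{\mu_k-\mu_{k+1}}$), Pinsker together with Lemma~\ref{lem:H_vs_D} (this is where one factor $1/(1-\gamma)$ enters), and the three-point inequality again; crucially, because the iterates are infeasible, the linear terms $\iprod{c}{\mu^*-\mu_{k+1}}$ are not sign-definite, and the paper closes the loop by $\iprod{\mu^*}{c}\le\iprod{\rho(\mu_{k+1})}{c}$ together with Lemmas~\ref{lemma:rounding} and~\ref{lemma:Delta_to_delta}, which reintroduces $\sum_k\delta(\mu_{k+1})$ on the right and is resolved by a self-bounding choice of the Young parameter. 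Your proposal does not engage with this circularity, so the infeasibility/rounding contribution---and hence the claimed $(1-\gamma)^{-5}$ rate---is not established. A secondary, smaller point: for SVI the functions $Q_k$ come from the Bellman--Sinkhorn fixed point in which the weights $\pi_k(x'y'|xy)/\PPX(x'|x)$ need not sum to one, so the bound $\infnorm{Q_k}=\OO(\infnorm{c}/(1-\gamma))$ should be argued via the contraction property (Lemma~\ref{lemma:contraction}) rather than by reading $Q_k$ as a discounted sum of ground costs.
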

The proof is relegated to Section~\ref{app:main}, and we present a similar performance guarantee for \SPI in 
Appendix~\ref{app:SPI}. Importantly, these guarantees technically only 
hold when setting $m=\infty$, which is a limitation we discuss in more detail in Section~\ref{sec:discussion}. The 
condition that $\pi_1$ is chosen as the uniform coupling is not necessary and simply made to make the statement easier 
to state. A more detailed statement of the bound is provided in Appendix~\ref{app:proof_final_steps}.

\section{Experiments}\label{sec:exps}
We have conducted a range of experiments on some simple environments with the purpose of illustrating the numerical 
properties of our algorithms and some aspects of the distance metrics we studied. 
Due to space restrictions, we only report a very limited subsample of the results below, and refer the reader to 
Appendix~\ref{app:experiments} for the complete suite\footnote{The code is available at 
\url{https://github.com/SergioCalo/SVI}}. 

One set of experiments we report here addresses the biggest open question left behind our theory: the effect of the 
choice of $m$ on the quality of the updates. For this experiment, we use the classic ``4-rooms'' environment first 
studied by \cite{sutton1999}, and run both SVI (Algorithm~\ref{alg:main}) and SPI (Algorithm~\ref{alg:SPI}) for a range of 
different choices of $m$, and a fixed $\gamma=0.95$. The results of this study are shown in Figure \ref{fig:role_m}. The plots indicate that 
the estimates produced by both algorithms converge towards the true distance at a rate that is basically unaffected by 
$m$, and in particular even a value of $m=1$ remains competitive. This observation is consistent 
across all of our experiments. Also, the output of SPI appears to converge slightly more slowly towards the optimum in 
this experiment, but this observation is not entirely consistent and can be likely ascribed to the fact that the 
learning rate was not optimized to favor either algorithm in this experiment. In most experiments, the two algorithms 
performed very similarly, up to some small occasional differences. 

We have also conducted a number of experiments to illustrate the potential of optimal-transport distances for comparing 
Markov chains of different sizes and transition functions. In the experiment we show here, we compare two Markov chains 
illustrated in Figure~\ref{fig:agg}. The first Markov 
chain $M_\X$ is a simple, nine-state ``gridworld'' environment, which has its initial state in the upper left corner 
(denoted as $s_0$) and a reward of $+1$ in the lower left corner (shown in blue). The second, $M_\Y$, is an instance of 
the 4-rooms environment, where each room is a rotation of the aforementioned small grid. The transition kernels in both 
environments are uniform distributions over the adjacent cells in the four principal directions. The plot shows the 
distances between the two chains as a function of the initial state of $M_\Y$, revealing an intuitive pattern of 
similarities that captures the symmetries of $M_\Y$.

\begin{figure}
     \centering
     \begin{subfigure}[b]{0.45\textwidth}
         \centering
         \includegraphics[width=\textwidth]{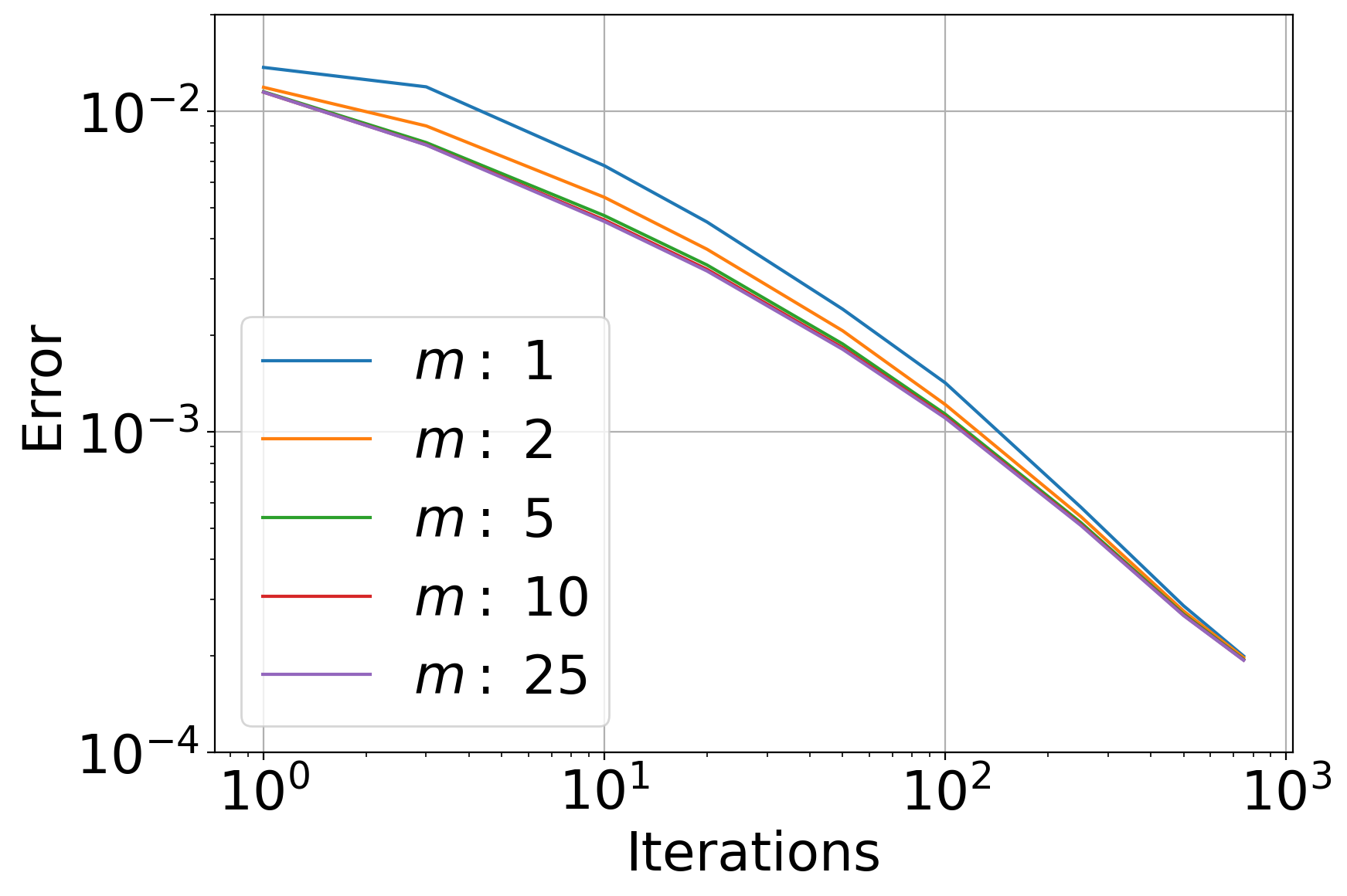}
         \caption{SVI (Algorithm~\ref{alg:main})}
         \label{fig:m_alg1}
     \end{subfigure}
     \hfill
     \begin{subfigure}[b]{0.45\textwidth}
         \centering
         \includegraphics[width=\textwidth]{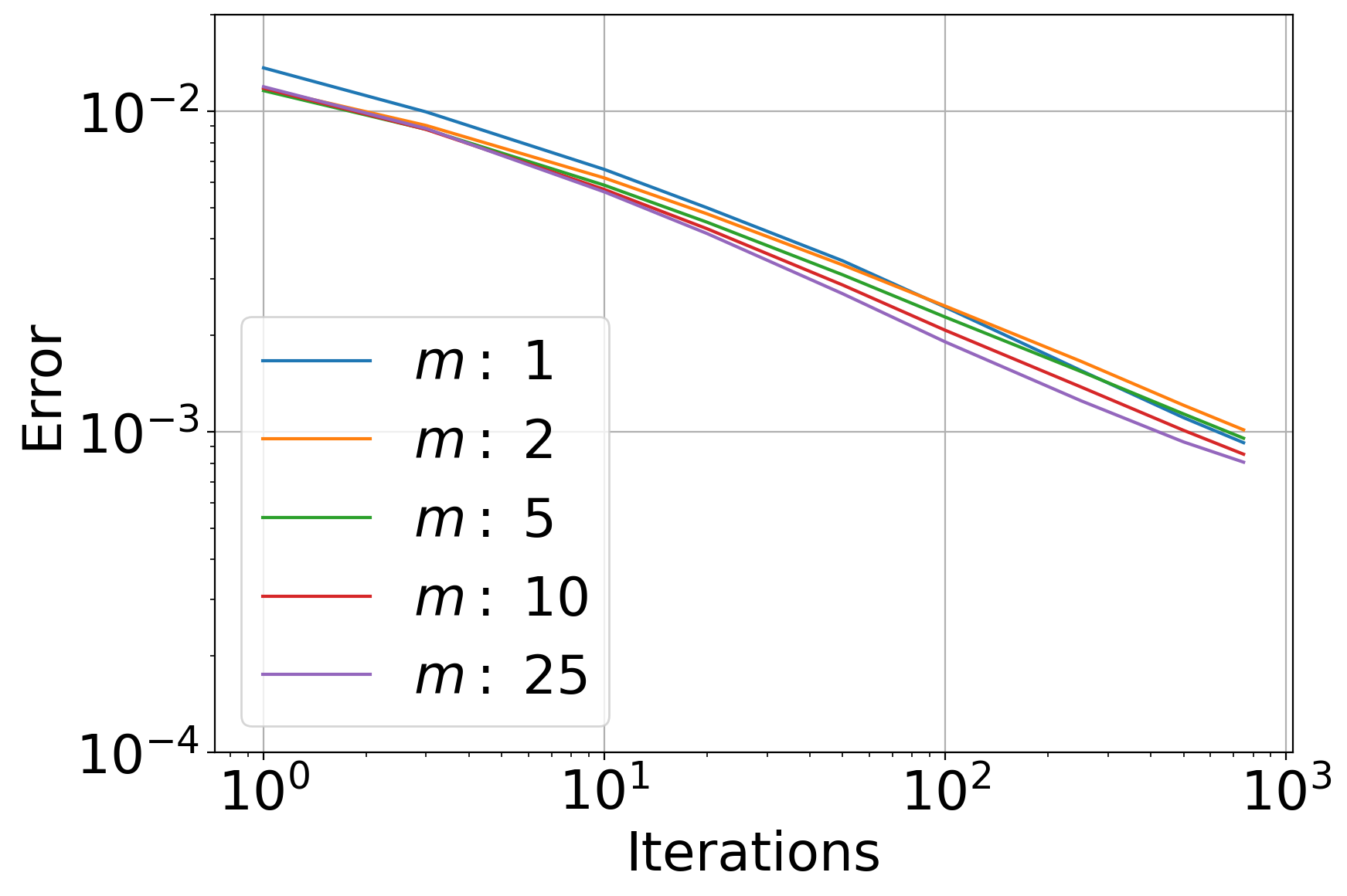}
         \caption{SPI (Algorithm~\ref{alg:SPI})}
         \label{fig:m_alg2}
     \end{subfigure}
        \caption{Estimated transport cost as a function $k$, for various choices of $m$ and $\eta = 1$.
}
        \label{fig:role_m}
\end{figure}

\begin{figure}[h!]
	\center 
	\includegraphics[width=0.8\textwidth]{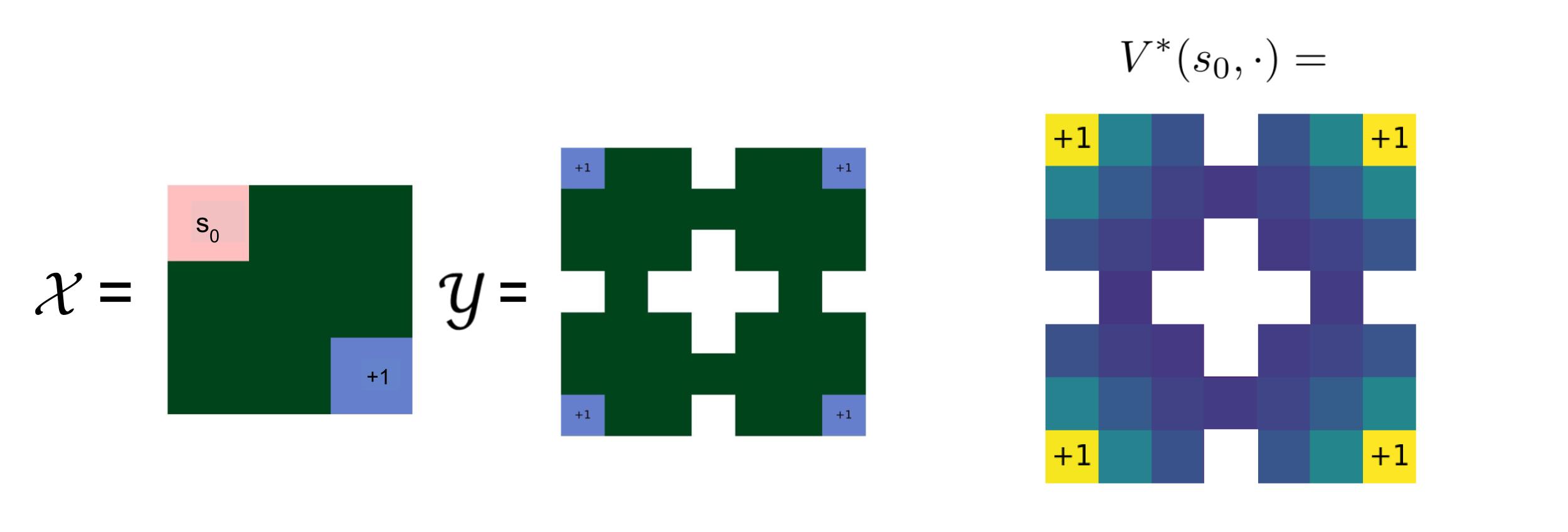} 
	\caption{Visual representation of the distances computed between the chains $M_\X$ and $M_\Y$.}
	\label{fig:agg}
\end{figure}

\section{Discussion}
\label{sec:discussion}
\vspace{-2mm}
We discuss some further aspects of our framework and results below.

\vspace{-2mm}

\paragraph{Representation learning for reinforcement learning.} Among the numerous applications listed in 
Section~\ref{sec:intro} and 
Appendix~\ref{app:related_work}, the most interesting for us is using our metrics for representation learning in 
RL. As mentioned earlier, bisimulation metrics have been extensively used for this purpose in the 
past.
In particular, 
almost all such work uses bisimulation metrics to compare states within the same MDP and use the resulting similarity 
metrics for merging states that are at low distance (an approach called ``state aggregation''). As our results 
highlight, this is a rather
narrow view of what bisimulation metrics are capable of: they can define
similarity metrics between processes that live on potentially different state spaces, which in particular can be used 
to select representations by minimizing the distance between a high-dimensional process and a set of low-dimensional 
representations. Curiously, our LP formulation may allow differentiating the distances with respect to the transition 
kernels, which we believe will be an important property for future developments in representation learning for 
RL.\looseness=-1

\vspace{-2mm}
\paragraph{Limitations of the theory.} In their current form, our theoretical guarantees in Theorems~\ref{thm:main} 
and~\ref{thm:main2} only apply to perfect projection and evaluation steps, corresponding to setting $m=\infty$. 

We conjecture that this limitation can be addressed with a more careful analysis, and results similar to those of 
Theorems~\ref{thm:main} and~\ref{thm:main2} can be shown, potentially at the price of a worse dependence on the 
effective horizon $1/(1-\gamma)$ \citep{SGGG12,SGGLG15}, by making use of the techniques of \citet{GSP19} and 
\citet{MN23} for analyzing regularized dynamic-programming algorithms. 

\vspace{-2mm}
\paragraph{From dynamic programming to learning from data.} This paper focuses on computing distances between known 
Markov processes via dynamic-programming-style methods. In the most interesting applications however, the transition 
kernels are unknown, which requires the development of new tools. We are confident that our framework can serve as a 
solid basis for such developments, and in particular that one can port many ideas from the field of reinforcement 
learning that is essentially all about turning dynamic-programming methods into algorithms that can learn from 
interaction data. 
Additionally,
we believe that our LP formulation in Section~\ref{sec:LP} makes it much easier to import further ideas from 
computational optimal transport, and in particular that stochastic optimization methods like those of \citet{GCPB16} 
can be adapted to solving our linear programs.

\section*{Acknowledgments.} G.~Neu would like to thank Marin Ballu and Quentin Berthet for clarifying a number of 
details of their analysis of Mirror Sinkhorn, Tristan Brug\`ere for pointing us to their publicly available code, and 
Anna Korba and Stefan Schrott for kindly helping out with some references about optimal transport between stochastic 
processes.

\bibliographystyle{abbrvnat}
\bibliography{OT_VI_notes,ngbib}

\newpage

\appendix

\section{Extended discussion of related work}\label{app:related_work}

In this appendix we include a discussion of related work that could not be accommodated in the main text due to space limitations.

\subsection{Bisimulation}\label{sec:bisim_discussion}
The concept of bisimulation originated independently in modal logic \citep{vanBenthem1983-VANMLA}, computer science 
\citep{Park81,Milner89} and set theory \citep{FortiH83,Aczel88}, with firm roots in fixed-point theory. Bisimulation 
was originally devised as a tool for determining whether or not two processes are behaviorally equivalent 
in the sense that no test can distinguish between the labels they generate. Being a much less demanding 
notion of equivalence than isomorphism (which is generally NP-hard to verify), the notion of
bisimulation has had significant impact in concurrency theory and formal verification of computer systems, and has 
become a standard tool for model checking. We refer the reader to the very enjoyable paper of 
\citet{sangiorgi2009origins} for a detailed history of bisimulation and related concepts.

\citet{Larsen1989BisimulationTP} developed a theory of probabilistic bisimulation between stochastic processes. Their 
approach is based on comparing interpretations of logical formulas on stochastic processes, roughly saying that two 
processes are probabilistically bisimilar if all formulas acting on the sequence of labels encountered along the 
corresponding random trajectories follow the same probability distribution. Their definition can be most simply 
presented when the two processes live on the same state space and follow the same transition kernel, but are 
initialized at two different states. In this setup, bisimulation reduces to a relation between individual states, which 
can be described formally as follows.
Given a stationary Markov process $\MX=(\mathcal{X}, \PPX, \nuzerox)$ as defined in Section~\ref{sec:prelim}
and a label function $\mathcal{L}:\X\to\real$, a probabilistic bisimulation $R$ is a 
relation on $\X\times\X$ that satisfies the following property: two states $x$ and $x'$ are bisimilar (denoted $xRx'$) 
if and only if $\mathcal{L}(x)=\mathcal{L}(x')$ and for each subset $\mathcal{C}$ in the partition $\X\setminus R$
induced by $R$, it holds that
\[
\sum_{x''\in\mathcal{C}}\PPX(x''|x) = \sum_{x''\in\mathcal{C}}\PPX(x''|x').
\]
\citet{Jonsson91} define another similarity notion called ``satisfaction relation'' based on couplings,
and prove that probabilistic bisimilarity and satisfaction are equivalent notions of similarity.
These works show that bisimulation is indeed an equivalence relation, and that when two processes are initialized from 
two states within the same equivalence class (i.e., they are \emph{bisimilar}), then they will not only transition to 
bisimilar states in the next step but will in fact continue to evolve in a way that is indistinguishable based on the 
labels (in the sense that they will produce the same distribution over sequences of labes).

\citet{Giacalone90}, \citet{DGJP99,DGJP04} and \citet{vBW01} relax the restrictive notion of exact probabilistic 
bisimulation and introduce real-valued pseudometrics that measure the degree of bisimilarity between two states. 
These notions rely on real-valued labeling functions. \citet{Giacalone90} gives a notion of $\varepsilon$-bisimulation 
which relaxes the condition $\mathcal{L}(x) = \mathcal{L}(x')$ in the definition of the hard bisimulation relation 
given above, and only requires equality to hold up to some $\varepsilon > 0$. \citet{DGJP99,DGJP04} go further and 
define a genuinely real-valued extension of bisimulation relations by defining bisimulation metrics as described in the 
main text (and particularly Equation~\eqref{eq:bisimulation_metric}). A fixed-point characterization of these 
bisimulation metrics was established by \citet{vBW01} and \citet{desharnais2002metric}. These works show that the 
resulting distance notion is in fact a pseudometric, and two processes are bisimilar if and only if they are at 
distance zero. This justifies seeing bisimilarity metrics as ``soft'' extensions of the binary relation of bisimilarity.

Interestingly, the first bisimulation metrics all make use of concepts from optimal transport in some way or another: 
\citet{DGJP99,desharnais2002metric} 
already note that their definition is inspired by the Wasserstein distance (which they call the ``Hutchinson 
metric''), and the fixed-point characterization of \citet{vBW01} also make use of this distance to compare transition 
kernels (cf.~Equation~\ref{eq:extended_bellman_v2}). Precisely, their definition that we recalled as 
Equation~\eqref{eq:bisimulation_metric} is admittedly inspired by the Kantorovich dual representation of the 
Wasserstein distance between probability distributions over metric spaces. To our knowledge, this connection 
with optimal transport has not been explored further in the literature, and in particular no ``primal'' counterpart 
based on couplings has been discovered so far.\looseness=-1

\citet{Givan93} adapt probabilistic bisimulation to Markov decision processes (MDPs), requiring states (or state-action 
pairs) to have identical rewards and transition probabilities for two MDPs to be bisimilar.
\citet{FPP04,FCPP06} introduce bisimulation metrics for MDPs, essentially using the fixed-point characterization of 
\citet{vBW01} as a starting point for their definition. Once again, their definition makes use of the Wasserstein 
distance between the transition kernels (called Kantorovich metric in the paper), but no deeper connection between 
bisimulation metrics and optimal transport is discussed.

\cite{castro2020scalable} defines bisimulation metrics for MDPs with respect to a given policy $\pi$, which thus falls 
back to the standard definition of bisimulation metrics for Markov chains as studied in the works of 
\citet{DGJP99,DGJP04} and \citet{vBW01}.
\cite{kemertas22} adjusts the definition of \citet{FPP04,FCPP06} by replacing the Wasserstein distance with the 
entropy-regularized Wasserstein distance proposed by \cite{cuturi2013sinkhorn}, which allows them to apply the Sinkhorn 
algorithm to compute bisimulation metrics for MDPs. The resulting approach can be seen to be nearly identical to the 
methods proposed by \citet{OMN21} and \citet{BWW24} for approximately solving the fixed-point 
equations~\eqref{eq:extended_bellman} in the context of optimal transport---see Section~\ref{sec:OT_discussion} for 
further discussion of these works.

\citet{CvBW12} study the computational complexity of computing bisimulation metrics.
Similar to our work, the authors formulate a linear program that characterizes bisimulation metrics in an equivalent 
way to other common definitions, though the linear program has one constraint per next-state coupling, which makes 
their LP intractable as stated. They use their LP formulation as an analytic tool to show the existence of a 
polynomial-time algorithm to solve the fixed-point equations~\eqref{eq:extended_bellman_v2} (which essentially amounts 
to Bellman's value iteration algorithm implemented in the MDP we describe in Appendix~\ref{app:mdp}). Each step of the 
resulting algorithm solves one optimal-transport problem per state pair via the network simplex algorithm, which is 
known to be impractical for this purpose in comparison with Sinkhorn-style methods \citep{cuturi2013sinkhorn}. In the 
context of optimal transport, a closely related linear program has been discovered by \citet{BBLZ17}, whose framework 
is more general in that it mostly focuses on general (potentially non-Markovian) stochastic processes, but with the 
limitation that only finite horizons are considered. We discuss further developments on this topic in 
Section~\ref{sec:OT_discussion}.

Finally, \citet{BA17} show that $\varepsilon$-bisimilar Markov processes generate distributions over finite-length 
trajectories that are close in total variation distance. Since this distance is a special case of the Wasserstein 
distance (with the Hamming metric over sequences as ground metric), this result can be seen to establish some relation 
between OT distances and bisimulation, but the link is rather weak in the sense that no equivalence is shown between 
the two notions. Indeed, these results only imply that nearly-bisimilar processes generate nearly-identical trajectory 
distributions, but the reverse implication is not shown to hold.

\subsection{Optimal Transport}\label{sec:OT_discussion}
Optimal transport \citep{villani2009optimal} studies the problem of transporting mass between two density functions $p$ 
and $q$, given a cost function that measures the transport distance between any pair of points.
In the classic formulation of \citet{Kan42}, the vehicle used to transport mass is a 
coupling, that is, a joint probability distribution whose marginals equal $p$ and $q$. The problem of finding an optimal 
coupling that minimizes the total transport distance can be formulated as a 
linear program. Historically, the resulting LPs have been solved via standard solvers like the network simplex method 
or interior-point methods, which lead to algorithms with polynomial runtime guarantees but rather poor empirical 
performance. \citet{cuturi2013sinkhorn} successfully advocated for adding entropy regularization to the standard LP 
objective, which enabled algorithms that are orders of magnitude faster than previously proposed methods.

In this work we consider a problem of optimal transport between stochastic processes with a temporal dimension. This 
topic has recently started to receive attention in the OT literature, mostly focusing on stochastic processes with 
finite horizon \citep{pflug12,BBLZ17,lassalle18}. In this setting (often called ``adapted transport'', ``causal 
transport'', or ``bicausal transport''), the problem is to transport mass between joint distributions of sequences of 
elements, which can be formulated as an optimization problem over the set of causal couplings (i.e., the set of 
couplings over joint distributions over sequences that respect the temporal order inherent in the process). 
\citet{BBLZ17} have observed that, due to the linearity of the causality constraints, this optimization problem can be 
phrased as a linear program, which however is infinite-dimensional and thus intractable to solve directly. They 
complement this view by providing dynamic-programming principles for characterizing the structure of the optimal 
coupling, which, in the special case of Markov processes, boils down to the finite-horizon version of the fixed point 
equations~\eqref{eq:extended_bellman}. This development essentially mirrors the LP formulation and dynamic-programming 
principles put forth by \citet{CvBW12} in the context of computing bisimulation metrics (cf.~the discussion in 
Section~\ref{sec:bisim_discussion}).

Still on the front of computing optimal transport distances, a notable contribution is due to \citet{EP24}, who propose 
and analyze a version of Sinkhorn's algorithm for optimal transport on the space of stochastic processes. When 
specialized to Markov processes, their algorithm can be seen to be very closely related to ours, the technical 
explanation being that in finite-horizon Markov processes the entropy of path distributions that they use as 
regularization can be seen to be equal to the conditional entropy that our method uses for the same purpose. The 
resulting algorithm performs iterative Bregman projections via backward recursion over the finite time horizon, with 
computational steps that are essentially identical to applying our Bellman--Sinkhorn operators. That said, their 
analysis relies very heavily on the finite-horizon structure of the problem and as such it is not applicable in our 
considerably more challenging infinite-horizon problem setting.

The more recent works of \cite{moulos2021bicausal,OMN21,BH23} and \citet{BWW24} have investigated optimal-transport 
distances between infinite-horizon Markov chains. \citet{OMN21} considered the undiscounted version of our problem and 
proposed to compute optimal transition couplings via an adaptation of approximate policy iteration (cf.~\citealt{Sch13}) 
to an appropriately adjusted version of the MDP we describe in Appendix~\ref{app:mdp}. Their key algorithmic idea is 
approximating the greedy policy update steps by running Sinkhorn's algorithm for each pair of states. Essentially the 
same idea was used by \citet{BWW24} to solve the discounted problem that is the subject of the present paper, with the 
difference that their method takes approximate value iteration as its starting point. Both of these approaches are 
closely related to the alternative fixed-point definition of bisimulation metrics using Sinkhorn divergences 
due to \citet{kemertas22}, as mentioned in Section~\ref{sec:bisim_discussion}. While these approaches are nearly 
as effective as our Sinkhorn Value Iteration method in practice, their black-box use of Sinkhorn's algorithm make them 
difficult to analyze theoretically, and difficult to build further theory on.

To wrap up, let us mention some results that in a sense have already foreshadowed our observation about the relation of 
OT distances and bisimulation metrics. First, we note that \citet{YOMN21,BWW24} proposed to study optimal transport 
distances of Markov chains defined over graphs as a means of studying the similarity of the underlying graphs. 
The purpose of these works was to define a notion of  distance that is less demanding than isomorphism, but is 
still grounded in fundamental theory and can be computed effectively---which is precisely the reason that the 
notion of bisimulation was originally introduced in the 1980s in the context of formal verification by \citet{Park81} 
and \citet{Milner89}. Finally, the work of \citet{BBBE20} has established that ``all adapted topologies are equal'' on 
the space of laws of stochastic processes, understood in the sense that a large number of topologies (including the one 
induced by optimal-transport metrics defined in terms of bicausal couplings) are in fact identical. While one may argue 
with their sweeping claim that \emph{all} such topologies are equal, it may not be surprising in light of their 
results that the topology induced by bisimulation metrics is also identical to these well-studied topologies (as 
revealed to be true by our observations in this paper).

\newpage
\section{Optimal Transport as a Markov Decision Process}\label{app:mdp}
Consider the two Markov processes $M_\X$ and $M_\Y$ with initial states $x_0$ and $y_0$, respectively. To compute the 
optimal transport cost $\mathbb{W}_\gamma(\MX,\MY;c,x_0,y_0)$ and the optimal transition coupling $\pi^*$, we can 
introduce a Markov decision process 

$\M = (\X\Y, \A, q, \gamma, c, \nu_0)$ 
where:
\begin{itemize}
	\item $\X\Y$ is the state space, defined as the set of joint states of $M_\X$ and $M_\Y$,
	\item $\A(xy)=\Pi_{xy}$ is the set of applicable actions in state $xy\in\X\Y$, corresponding to the set of valid 
couplings of $\PPX(\cdot|x)$ and $\PPY(\cdot|y)$,
	\item $ q(\cdot |xy,a) = a $ is the transition probability distribution, which is fully determined by the action 
$a\in\Pi_{xy}$,  
	\item $ \gamma $ is the discount factor,
	\item $ c: \X\Y \rightarrow [0,\infty) $ is the cost function that maps joint states to positive real numbers,
	\item $ \nu_0 = \delta_{x_0y_0} $ is the initial state distribution.
\end{itemize}
The objective of the agent in this MDP is to select its sequence of actions $A_0,A_1,\dots$ in a way that minimizes 
the total discounted cost $\EE{\sum_{t=0}^\infty \gamma^t c(X_t,Y_t)}$, where each state pair is drawn according to the action 
taken by the agent as $(X_t,Y_t)\sim A_t$. The sequence of actions is generated by a sequence of history dependent
policies $ \pi_t\in\PiHD: \mathcal{H}_t \rightarrow \Delta_{\A(X_t,Y_t)} $ where $\mathcal{H}_t = 
\pa{X_0,Y_0,\dots,X_t,Y_t}$. A first remark is that we can restrict ourselves to deterministic
policies. Indeed, the action set is convex at every time step and both the reward and the transition probability
distributions are linear in the action.
A second remark is that bicausal couplings correspond exactly to history-dependent deterministic policies. Indeed, by
bicausality, the bicausal coupling $ M_{\X\Y} $ is generated by the policy
$ M_{\X\Y}(x_n,y_n|\bar{x}_{n-1}\bar{y}_{n-1}) $. We will denote this policy by $ \pi_{M_{\X\Y}} $.

Of special interest are stationary deterministic (or Markovian) policies of the form $\pi:\X\Y\ra\A$, 
mapping joint states $(X_t,Y_t)$ to actions in $\A(X_t,Y_t)$ as $A_t = \pi(X_t,Y_t)$. Such policies correspond exactly  
 with transition couplings as defined in the main text as mappings $\pi: \X\Y\ra \Delta_{\X\Y}$ of the same 
type. Accordingly, we will sometimes write $\pi(\cdot|xy)$ to refer to the distribution $\pi(xy)\in \A(xy)$ 
below.\looseness=-1

\subsection{Value functions, optimal policies, and sufficiency of transition couplings}\label{app:mdp_values}
Each policy $\pi$ induces a value function $V^\pi$, defined in each state $xy\in\X\Y$ as
\begin{equation}\label{eq:value_function}
V^\pi(xy) = \EEcs{\sum_{t=0}^\infty \gamma^t c(X_tY_t)}{X_0Y_0=xy}{\pi},
\end{equation}
where the expectation is taken with respect to the stochastic process induced by the policy $\pi$.
Here, $X_t$ and $Y_t$ are random variables representing the state of the two processes at time $t$. 
In particular, we have that for any bicausal coupling $ M_{\X\Y}\in \Pibc $,
\begin{align*}
	\int c_{\gamma}(\bar{X},\bar{Y})dM_{\bar{X},\bar{Y}} &= \int \sum_{t=0}^{\infty} \gamma^t c(X_t,Y_t)
	dM_{\X\Y}(\bar{X}, \bar{Y}) \\
							     &= \EEcs{\sum_{t=0}^{\infty} \gamma^t c(X_t,
							     Y_t)}{X_0Y_0 =x_0y_0}{\pi_{M_{\X\Y}}} \\
							     &= V^{\pi_{M_{\X\Y}}}(x_0,y_0).
\end{align*}
And as a result we can relate the optimal transport cost to the optimal value function of the MDP 
\begin{align*}
	\mathbb{W}_\gamma(\MX,\MY;c,x_0,y_0) &= \inf_{\MXY \in \Pibc} \int c_\gamma(\bX, \bY) \,\dd \MXY(\bX,\bY)\\
					     &= \inf_{\MXY \in \Pibc} V^{\pi_{\MXY}}(x_0y_0)\\
					     &= \inf_{\pi \in \PiHD} V^\pi(x_0y_0).
\end{align*}

Building on classic results of MDP theory, it can be shown that there exists an optimal Markovian policy $\pi^*$ whose 
value function $V^* = V^{\pi^*}$ satisfies $V^*(xy) \le V^{\pi}(xy)$ for all policies $\pi$ and joint states $xy$, and 
said optimal value function $V^*$ satisfies the Bellman optimality equations
\[
 V^*(xy) = \TT V^{*}(xy),
\]
where $\TT $ is the Bellman operator acting on a function $ V\in \mathbb{R}^{\X\Y} $ as
\begin{equation*}
	\TT V(xy) = c(xy) + \gamma \inf_{p\in\Pi_{xy}} \sum_{x'y'} p(x'y') V(x'y') \quad \pa{\forall xy}.
\end{equation*}

This is precisely the set of equations in Equation~\eqref{eq:extended_bellman}. Since $V^*$ is optimal, $V^*(xy)$ 
equals the optimal transport cost $\mathbb{W}_\gamma(\MX,\MY;c,x,y)$ for each state $xy$. An optimal policy $\pi^*$ 
achieves the infimum in each state $xy$, with associated value function $V^{\pi^*}=V^*$. These claims are summarized in 
the following theorem, stated in nearly identical form by \citet{moulos2021bicausal}.
\begin{theorem}[cf.~Theorem 1 in \citealt{moulos2021bicausal}]
Under the above conditions, the following hold:
\begin{itemize}
\item There exists an optimal Markovian transition coupling $ \pi^{*} $ such that for any policy $ \pi\in \PiHD $ and 
any joint states $ xy $, $ V^{\pi^{*}}(xy) \leq V^{\pi}(xy)$.
\item The value function of $\pi^*$ satisfies $\mathbb{W}_\gamma(\MX,\MY;c,x,y)=V^{\pi^{*}}(xy)$.
\item There exists a unique solution to the Bellman optimality equation \eqref{eq:extended_bellman} denoted $ V^{*}\in \mathbb{R}^{\X\Y} $.
\item We have that $ V^{*} = V^{\pi^{*}}$.
\end{itemize}
\end{theorem}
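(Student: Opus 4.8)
The plan is to derive all four claims from the standard contraction-mapping theory of discounted MDPs, applied to the MDP $\M$ whose action sets $\A(xy) = \Pi_{xy}$ are infinite but compact. First I would record the two structural facts that make the theory go through. Each $\Pi_{xy}$ is a nonempty compact convex subset of the simplex $\Delta_{\X\Y}$: it is the transportation polytope with prescribed marginals $\PPX(\cdot|x)$ and $\PPY(\cdot|y)$, hence closed and bounded, and it contains the product coupling $\PPX(\cdot|x)\otimes\PPY(\cdot|y)$. Moreover, for fixed $V\in\real^{\X\Y}$ the one-step lookahead map $p\mapsto\sum_{x'y'}p(x'y')V(x'y')$ is linear, hence continuous, in $p$. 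Consequently the infimum in the definition of $\TT$ is attained, so $\TT:\real^{\X\Y}\to\real^{\X\Y}$ is well defined; it is monotone, and using $\babs{\min_p f(p) - \min_p g(p)} \le \max_p\babs{f(p)-g(p)}$ together with $\sum_{x'y'}p(x'y')=1$ it is a $\gamma$-contraction in $\infnorm{\cdot}$. Banach's fixed-point theorem then gives a unique $V^*\in\real^{\X\Y}$ with $V^* = \TT V^*$, which is the third bullet.

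Next I would define the stationary deterministic policy $\pi^*$ by letting $\pi^*(xy)$ be any minimizer of $p\mapsto\sum_{x'y'}p(x'y')V^*(x'y')$ over $\Pi_{xy}$; since $\X\Y$ is finite there is no measurable-selection issue, and we simply pick one minimizer in each joint state. For this policy the evaluation operator $\TT^{\pi^*}V(xy) = c(xy) + \gamma\sum_{x'y'}\pi^*(x'y'|xy)V(x'y')$ is affine and again a $\gamma$-contraction, so it has a unique fixed point, which coincides with the value function $V^{\pi^*}$ of \eqref{eq:value_function} (the identification of this fixed point with the discounted-sum expectation follows by unrolling $\TT^{\pi^*}$ and summing the resulting geometric series, using $\infnorm{c}<\infty$ and $\gamma<1$). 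By the choice of $\pi^*$ we have $\TT^{\pi^*}V^* = \TT V^* = V^*$, so $V^*$ is the fixed point of $\TT^{\pi^*}$, whence $V^{\pi^*} = V^*$; this is the fourth bullet.

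For the optimality claim, recall from the preceding discussion that it suffices to consider history-dependent deterministic policies. I would introduce the truncated quantities $V^*_0 \equiv 0$, $V^*_{n+1} = \TT V^*_n$, and $V^\pi_n(xy) = \EEcs{\sum_{t=0}^{n-1}\gamma^t c(X_tY_t)}{X_0Y_0=xy}{\pi}$. A straightforward induction on $n$ shows $V^\pi_n \ge V^*_n$ pointwise for every $\pi\in\PiHD$: the base case is trivial, and in the inductive step one conditions on the (deterministic) first action $a_0 = \pi_0(xy)\in\Pi_{xy}$, applies the induction hypothesis to each continuation policy, and then bounds $\sum_{x'y'}a_0(x'y')V^*_n(x'y') \ge \min_{p\in\Pi_{xy}}\sum_{x'y'}p(x'y')V^*_n(x'y')$, giving $V^\pi_{n+1}(xy) \ge \TT V^*_n(xy) = V^*_{n+1}(xy)$. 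Letting $n\to\infty$, the left side converges to $V^\pi(xy)$ since the discarded tail is at most $\gamma^n\infnorm{c}/(1-\gamma)\to 0$, while the right side converges to $V^*$ since iterating the contraction $\TT$ from $0$ converges to its unique fixed point. Hence $V^\pi(xy)\ge V^*(xy) = V^{\pi^*}(xy)$ for all $\pi\in\PiHD$ and all $xy$, which is the first bullet. The second bullet then follows by combining this with the identity $\mathbb{W}_\gamma(\MX,\MY;c,x,y) = \inf_{\pi\in\PiHD}V^\pi(xy)$ established earlier in this appendix: the infimum is attained at $\pi^*$ and equals $V^{\pi^*}(xy)$.

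I expect the main obstacle to be essentially bookkeeping rather than anything deep: formalizing the truncation-and-limit argument cleanly for genuinely history-dependent policies — in particular making the "continuation policy" and the conditioning step precise enough that the induction hypothesis applies — and making sure the attainment of the minimum in $\TT$ (via compactness of $\Pi_{xy}$ and continuity of the linear objective) is invoked everywhere a minimizer is used, both in defining $\pi^*$ and in the inductive step. None of the steps is difficult, but the infinite-horizon, history-dependent setting makes it worth being careful rather than appealing to folklore.
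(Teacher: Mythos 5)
Your proof is correct, but it follows a genuinely different route from the paper. The paper's proof is essentially a reduction to the general theory of \citet{Berts77}: it checks that the problem is an optimal-control problem with additive cost satisfying the contraction assumption (Assumption C), invokes Proposition~1 there to identify $V^*(xy)=\inf_{\pi\in\PiHD}V^\pi(xy)$ as the unique solution of the Bellman optimality equation, and then verifies compactness of the level sets $U_k(xy,\lambda)=\{a\in\A(xy): c(xy)+\gamma\int \TT^k V\,\dd a\le\lambda\}$ so that Proposition~14 of \citet{Berts77} yields an optimal stationary transition coupling with $V^{\pi^*}=V^*$. You instead give a self-contained argument: compactness and convexity of the transportation polytopes $\Pi_{xy}$ plus linearity of the lookahead give attainment of the minima, $\TT$ and $\TT^{\pi^*}$ are $\gamma$-contractions so Banach's theorem supplies the unique fixed points, the greedy selection gives $V^{\pi^*}=V^*$, and a finite-horizon truncation with induction dominates every history-dependent policy before passing to the limit. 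What the paper's route buys is brevity and the comfort of a general framework that also covers non-finite state spaces, at the cost of an external dependency and a somewhat opaque verification step; your route buys a transparent, elementary proof exploiting the finiteness of $\X\Y$, and makes explicit exactly where compactness of $\Pi_{xy}$ is used. One small remark: the first bullet quantifies over all of $\PiHD$, which the paper defines to include randomized policies, while your induction is phrased for deterministic ones; this is harmless, since either you invoke the paper's earlier reduction to deterministic policies or you note that the inductive inequality survives an additional expectation over the distribution of the first action, so the same argument covers randomized policies verbatim.
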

The above theorem justifies considering Markovian transition couplings when computing the optimal 
transport cost $\mathbb{W}_\gamma(\MX,\MY;c,x_0,y_0)$.

\begin{proof}
One can straightforwardly check that our setting is an instance of optimal control problems with additive cost
functional studied in \citet{Berts77}. In particular, the contraction assumption (Assumption C in the paper mentioned
above) is satisfied.
 We define pointwise $ V^{*}(xy) = \inf_{\pi\in \PiHD} V^{\pi}(xy)  $,
 and Proposition~1 of \citet{Berts77} shows that $ V^{*} $ is the unique solution to the Bellman optimality equation.

 Now, for the existence of an optimal stationary policy or transition coupling in our terminology, an additional
 technical condition on the action set must be carefully verified.  For every $ xy\in \X\Y $, $\ell \in [0,\infty) $ 
and $k$, we define the set
 \begin{equation*}
 	U_k(xy, \lambda) = \left\{ a \in \mathcal{A}(xy) : c(x,y) + \gamma \int \TT^kV(x'y') \dd a(x'y') \leq \lambda 
\right\}
 \end{equation*}
This set is compact as the intersection of the compact set $ \mathcal{A}(xy) $ with a closed set, the preimage of a closed set
 by a continuous application.
 Knowing that $ U_k(xy, \lambda) $ is compact, we can then apply Proposition 14 of \citet{Berts77} and that gives us the
 existence of an
optimal Markovian transition coupling $ \pi^{*} $ that satisfies $ V^{\pi^{*}} = V^{*} $. In particular, for any policy
$ \pi \in \PiHD $ and joint state $ xy $, we have that $ V^{\pi^{*}}(xy) =V^{*}(xy) \leq  V^{\pi}(xy)$ by definition
of $ V^{*} $.
\end{proof}

\subsection{Occupancy measures and occupancy couplings}\label{app:occupancy_validity}
In a finite Markov decision process, occupancy measures express the discounted number of times that a given state and 
action are visited on expectation by the controlled stochastic process. This notion is not meaningfully applicable in 
the MDP formulation of our optimal-transport problem, given that the action space is infinite. However, the closely 
related notion of occupancy coupling can be seen to play a similar role in that it allows expressing the 
total-discounted-cost objective as a linear function, and that the set of valid occupancy couplings can be fully 
characterized in terms of a finite number of linear constraints. In what follows, we prove this latter key property of 
occupancy couplings, stated as Lemma~\ref{lem:occupancy_validity} in the main text.

\begin{proof}[Proof of Lemma \ref{lem:occupancy_validity}]
We begin by showing that the occupancy coupling $\mu^\pi$ associated with any transition coupling $\pi\in\Pi_{\X\Y}$ 
satisfies the following system of equations (sometimes called the ``Bellman flow equations''):
\begin{equation}\label{eq:flow1}
 \mu^\pi(xy,x'y') = \pi(x'y'|xy)\pa{\gamma \sum_{x''y''} \mu^\pi(x''y'',xy) + (1-\gamma) \nu_0(xy)}.
\end{equation}
Indeed, this can be shown to follow from the definition of occupancy couplings as
\begin{align*}
 \mu^\pi(xy,x'y') &= (1-\gamma) \sum_{t=0}^\infty \gamma^t \PPpi{X_tY_t = xy,X_{t+1}Y_{t+1}=x'y'} 
 \\
 &= (1-\gamma) \sum_{t=0}^\infty \gamma^t \pi(x'y'|xy)\PPpi{X_tY_t = xy} 
 \\
 &= \pi(x'y'|xy)\pa{(1-\gamma) \nu_0(xy) + 
(1-\gamma) \sum_{t=1}^\infty \gamma^t \PPpi{X_tY_t = xy}}
\\
&= \pi(x'y'|xy)\pa{(1-\gamma) \nu_0(xy) + \gamma \sum_{x''y''} (1-\gamma) \sum_{t=1}^\infty \gamma^{t-1} 
\PPpi{X_{t-1}Y_{t-1}=x''y'',X_tY_t = xy}}
\\
&= \pi(x'y'|xy)\pa{(1-\gamma) \nu_0(xy) + \gamma \sum_{x''y''} \mu^\pi(x''y'',xy)},
\end{align*}
where in the first step we used the stationarity of the transition coupling $\pi$, then the definition of $\nu_0$, 
followed by the law of total probability, and finally the stationarity of the Markov chain that allowed us to recognize 
$\mu^\pi(x''y'',xy)$ in the last step. Now, summing both sides of Equation~\eqref{eq:flow1} for all $x'y'$, we can 
confirm that $\mu^\pi$ indeed satisfies Equation~\eqref{eq:const_flow}. Furthermore,
summing the two sides of Equation~\eqref{eq:flow1} over all $ x ' $, we get
\begin{align*}
	\sum_{x'} \mu^\pi(xy,x'y') &= \sum_{x'} \pi(x'y'|xy) \pa{(1-\gamma) \nu_0(xy) + \gamma \sum_{x''y''} \mu^{\pi}(x 
''y 
'', xy)} \\
			       &= \sum_{x'} \pi(x'y'|xy) \sum_{x ''y ''} \mu^\pi(xy, x''y'')  \\
			       &= \PPY(y'|y) \sum_{x ''y ''} \mu^\pi(xy, x''y''),
\end{align*}
where the first step in the second line follows from Equation~\ref{eq:const_flow}, and the 4th line comes from the 
fact that $ \pi(\cdot|xy)$ is a coupling of $\PPX(\cdot|x)$ and $\PPY(\cdot|y)$. This verifies that $\mu^\pi$ satisfies
Equation~\eqref{eq:const_Y}, and the same reasoning can be used to verify that it also 
satisfies Equation~\ref{eq:const_X}.

Conversely, suppose that $ \mu \in\real_+^{\X\Y\times\X\Y}$ satisfies Equations~\eqref{eq:const_flow}, 
\eqref{eq:const_X}, and \eqref{eq:const_Y}. Define $\nu_\mu(xy) = \sum_{x^{\prime}y^{\prime}} 
\mu(xy,x^{\prime}y^{\prime})$ and let
\begin{align*}
	\pi_\mu(x^{\prime}y^{\prime}|xy) =
\begin{cases}
	\frac{\mu(xy,x^{\prime}y^{\prime})}{\nu_\mu(xy)} &\text{ if } \nu_\mu(xy) \neq 0, \\
	\PPX(x^{\prime}|x)\PPY(y^{\prime}|y) &\text{ otherwise } .
\end{cases}
\end{align*}
We will verify that $ \pi_{\mu} $ defines a valid Markovian coupling and that $ \mu $ is the state action occupancy 
measure of $ \pi_{\mu} $. If $ \nu_\mu(xy) = 0 $, then 
$ \sum_{x'} \pi_{\mu}(x'y'|xy) = \PPY(y'|y) \sum_{x'} \PPX(x'|x) = \PPY(y'|y) $. If $ \nu_\mu(xy) \neq 0 $, then we have 
\begin{align*}
	\sum_{x'} \pi_{\mu}(x'y'|xy) &= \frac{1}{\nu_\mu(xy)} \sum_{x'} \mu(xy,x'y') = \frac{1}{\nu_\mu(xy)} \sum_{x'y''} \mu(xy, 
x'y'') \PPY(y '|y) \\
				     &= \frac{1}{\nu_\mu(xy)} \nu_\mu(xy) \PPY(y'|y) = \PPY(y'|y),
\end{align*}
where we have used that $\mu$ satisfies the constraint of Equation~\eqref{eq:const_Y}.
In any case, we have that $ \sum_{x'}\pi_{\mu}(x'y'|xy) = \PPY(y'|y) $. By symmetry, using that $\mu$ satisfies 
Equation~\eqref{eq:const_X}, we also have $ \sum_{y'} \mu (x' y'|xy 
) =\PPX(x'|x)$. Hence we have demonstrated that $ \pi_{\mu} $ is a valid Markovian coupling of $\PPX$ and $\PPY$.
To proceed, observe that the occupancy coupling associated with $\pi_\mu$ satisfies
\[
\sum_{x'y'}\mu^{\pi_\mu}(xy,x'y') = \gamma \sum_{x''y''} \mu^{\pi_\mu}(x''y'',xy) + (1-\gamma) \nu_0(xy).
\]
We will verify that $\mu^{\pi_\mu} = \mu$ by showing that this system of equations has a unique solution. In order to 
see this, let us recall the definition of $\nu_\mu$ and reorder Equation~\eqref{eq:const_flow} as
\[
 (1-\gamma)\nu_0(xy) = \nu_\mu(xy) - \gamma \sum_{x''y''} \nu_\mu(x''y'') \pi_\mu(xy|x''y'').
\]
Introducing the matrix 
$Z\in\real^{|\X||\Y|\times|\X||\Y|}$ with entries $Z(xy,x'y') = \pi_\mu(x'y'|xy)$ and representing the functions 
$\nu_0$ and $\nu_\mu$ in matrix form, this system of equations can be written as
\[
 (1-\gamma) \nu_0 = (I - \gamma Z) \nu_{\mu}.
\]
Now, thanks to the Perron--Frobenius theorem, the stochastic matrix $Z$ has spectral radius $1$, and thus  $(I - \gamma 
Z)$ is 
invertible, meaning that there is a unique solution $\nu_\mu = (1-\gamma)(I - \gamma Z)^{-1} \nu_0$. This in 
turn implies that $\mu = \mu^{\pi_\mu}$, thus verifying that $\mu$ is indeed an occupancy coupling induced by a valid 
transition coupling $\pi_\mu$ if it satisfies Equations~\eqref{eq:const_flow}-\eqref{eq:const_Y}. This concludes the 
proof.
\end{proof}

\newpage
\section{The proof of Theorem~\ref{thm:main}}\label{app:main}
The proof is composed of two main parts:  showing a bound on the \emph{regret} of the iterates 
$\mu_1,\mu_2,\dots,\mu_K$, and then 
accounting for the errors incurred when rounding the average iterate $\bmu_K = \frac 1K \sum_{k=1}^K \mu_k$ to a 
feasible occupancy coupling. We start by stating 
some general results that will be useful throughout the analysis, and then study the two sources of error mentioned 
above separately.
For any $ \mu \in \mathbb{R}^{\X\Y\times\X\Y} $, we define $ E \mu (xy) = \sum_{x',y'} \mu(x'y', xy)$, $ E\transpose 
\mu(xy) = \sum_{x',y'} \mu(xy, x'y') $ and 
\begin{align*}
	\pi_\mu(x^{\prime}y^{\prime}|xy) =
\begin{cases}
	\frac{\mu(xy,x^{\prime}y^{\prime})}{E\transpose\mu(xy)} \text{ if } E\transpose\mu(xy) \neq 0, \\
	\PPX(x^{\prime}|x)\PPY(y^{\prime}|y) \text{ otherwise} .
\end{cases}
\end{align*}
In particular, we always have that $ \mu(xy,x'y') = E\transpose \mu(xy) \pi_{\mu}(x' y'|xy) $.
Furthermore, for any $ \pi:\X\Y\ra\Delta_{\X\Y}$, we will denote by $ \tilde{\pi} = \rho(\pi)$ the 
rounded transition coupling obtained by adapting the rounding procedure of \citet{altschuler2017near}, and we will let 
$\rho(\mu)$ 
denote the corresponding occupancy coupling $\mu^{\tilde{\pi}_\mu}$ (cf.~Section~\ref{app:rounding} for the description 
and analysis of this method). In particular, $ \tilde{\pi} $ will always be a valid transition coupling associated with 
$ \PPX, \PPY$. 
Finally we define $ \piout = \rho\bpa{\pi_{\bmu_K}} $ and $ \muout = \mu^{\piout} $,
and we let $\mu^*$ be an optimal occupancy coupling achieving the minimum in the problem 
formulation of Theorem~\ref{thm:LP}. 

With these notations, we decompose the overall error of the output as
\[
 \iprod{\muout - \mu^*}{c} = \iprod{\muout - \bmu_K}{c} + \iprod{\bmu_K - \mu^*}{c} = 
\iprod{\muout - \bmu_K}{c} + \frac 1K \sum_{k=1}^K \iprod{\mu_k - \mu^*}{c}.
\]
The first sum on the right-hand side corresponds to the rounding error, and the second one to the so-called 
\emph{regret} of the sequence of iterates $\mu_k$. This latter sum can be controlled by adapting arguments from the 
classic analysis of mirror-descent methods \citep{NY83,BT03}, with some ideas adopted from the Mirror Sinkhorn analysis 
of \citet{BB23} that will also come in handy for analyzing the rounding errors. We state these tools first below, and then 
analyze the two terms in the above decomposition separately.

\subsection{General tools}
We begin with a version of the classic ``three-point identity'' for mirror-descent methods (e.g., Lemma~4.1 of 
\citealp{BT03}) adapted to our specific setting that involves alternating projections to the sets $\BX$ and $\BY$. The 
result is similar to Lemma~A.3 of \citet{BB23}, which we reprove here with a more standard methodology (as used for 
proving, e.g., Theorem 28.4 of \citealp{lattimore2020bandit}).
\begin{lemma}\label{lem:MDregret}
Let $\mu^*\in\BX\cap\BY$ be arbitrary. Then,
  \[
 \iprod{\mu_{k+1} - \mu^*}{c} \le \frac{\HH{\mu^*}{\mu_k} - \HH{\mu^*}{\mu_{k+1}} - \HH{\mu_{k+1}}{\mu_k}}{\eta}. 
\]
\end{lemma}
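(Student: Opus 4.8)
The plan is to reproduce the standard mirror-descent argument: derive a first-order optimality (variational) inequality for the update $\mu_{k+1}$, and then convert it via the three-point identity for Bregman divergences. Write $\Phi$ for the convex potential generating $\mathcal{H}$, so that $\HH{\mu}{\mu'} = \Phi(\mu) - \Phi(\mu') - \iprod{\nabla\Phi(\mu')}{\mu-\mu'}$; concretely $\Phi$ is the negative conditional entropy, whose gradient at an interior point $\mu$ has entries $\log\pi_\mu(x'y'|xy)$. Since $\iprod{\mu}{c}$ is linear and $\HH{\mu}{\mu_k}$ is convex in its first argument, the objective minimized in \eqref{eq:MDupdate} is convex over the polytope $\BB_k$, and its gradient diverges near the boundary of the simplex, which forces $\mu_{k+1}$ into the relative interior of the face on which it lies and makes the objective differentiable there. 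The first-order optimality condition for a constrained minimizer then reads
\[
 \Biprod{c + \tfrac1\eta\bpa{\nabla\Phi(\mu_{k+1}) - \nabla\Phi(\mu_k)}}{\mu - \mu_{k+1}} \ge 0 \qquad\text{for all } \mu\in\BB_k.
\]

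Next I would instantiate this at $\mu = \mu^*$, which is admissible because $\mu^*\in\BX\cap\BY$ while $\BB_k$ equals either $\BX$ or $\BY$, so $\mu^*\in\BB_k$ in both cases. Rearranging gives
\[
 \iprod{\mu_{k+1} - \mu^*}{c} \le \tfrac1\eta\iprod{\nabla\Phi(\mu_{k+1}) - \nabla\Phi(\mu_k)}{\mu^* - \mu_{k+1}}.
\]
Then I would apply the purely algebraic three-point identity $\iprod{\nabla\Phi(b) - \nabla\Phi(c)}{a - b} = \DDPhi{a}{c} - \DDPhi{a}{b} - \DDPhi{b}{c}$ (which follows directly from the definition of $\mathcal{H}$ as a Bregman divergence, needing no convexity) with $a = \mu^*$, $b = \mu_{k+1}$, $c = \mu_k$. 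Substituting into the previous display and using $\mathcal{H} = \mathcal{B}_\Phi$ yields exactly
\[
 \iprod{\mu_{k+1} - \mu^*}{c} \le \frac{\HH{\mu^*}{\mu_k} - \HH{\mu^*}{\mu_{k+1}} - \HH{\mu_{k+1}}{\mu_k}}{\eta},
\]
as claimed.

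The step I expect to demand the most care is justifying the first-order optimality condition, precisely because $\mathcal{H}$ is \emph{not} strongly convex and its potential $\Phi$ is not differentiable on the boundary of the simplex. The resolution I would spell out is the familiar ``iterates stay in the interior'' phenomenon for entropic regularizers: the regularizer's gradient blows up at the boundary, so any minimizer of \eqref{eq:MDupdate} over $\BB_k$ charges exactly the coordinates permitted by the marginal and flow constraints and lies in the relative interior of that face; restricted to the affine hull of $\BB_k$ the objective is smooth at $\mu_{k+1}$, and the usual variational inequality applies there. Everything else is routine mirror-descent bookkeeping, in the style of, e.g., Theorem~28.4 of \citet{lattimore2020bandit}.
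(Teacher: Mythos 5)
Your proposal is correct and follows essentially the same route as the paper's own proof: the first-order optimality (variational) inequality for the constrained minimizer of $\eta\iprod{\mu}{c}+\HH{\mu}{\mu_k}$ over $\BB_k$, instantiated at $\mu^*\in\BX\cap\BY\subseteq\BB_k$, followed by the three-point identity for the Bregman divergence generated by the conditional entropy. Your additional discussion of why the optimality condition is legitimate despite the non-differentiability of the entropic potential on the boundary is a point the paper leaves implicit, but it does not change the argument.
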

\begin{proof}
We first recall that $\mathcal{H}$ is the Bregman divergence associated with the conditional entropy function $\CC(\mu) 
= \sum_{xy,x'y'} \mu(xy,x'y') \log \frac{\mu(xy,x'y')}{\sum_{x''y''} \mu(xy,x''y'')}$ (cf.~Appendix~A.1 in 
\citealt{NJG17}). For the actual proof, 
let us consider the case of $k$ odd when $\mu_{k+1}\in\BX$. Then, by definition, we have that $\mu_{k+1}$ is the 
minimizer of $\Psi_{k+1}(\mu) = \eta \iprod{\mu}{c} + \HH{\mu}{\mu_k}$ on this set. Noticing that the gradient of 
$\Psi_{k+1}$ at $\mu$ is written as $\nabla \Psi_{k+1}(\mu) = \eta c + \nabla \CC(\mu) - \nabla 
\CC(\mu_k)$, the 
first-order optimality condition over the convex set $\BX$
implies that the following inequality holds for any $\mu\in\BX$:
\[
 \iprod{\eta c + \nabla \CC(\mu_{k+1}) - \nabla \CC(\mu_k)}{\mu - \mu_{k+1}} \ge 0.
\]
In particular, using this result for $\mu = \mu^*$ (which is indeed in $\BX$), the claim follows from using the 
standard three-point identity of Bregman divergences that states
\[
\iprod{\nabla \CC(\mu_{k+1}) - \nabla \CC(\mu_k)}{\mu^* - \mu_{k+1}} = \HH{\mu^*}{\mu_{k}} - 
\HH{\mu^*}{\mu_{k+1}} - \HH{\mu_{k+1}}{\mu_k}.
\]
Repeating the same argument for even rounds (and noticing that the comparator also satisfies $\mu^*\in\BY$ as 
needed for that case) completes the proof.
\end{proof}
The following standard lemma will also be useful for studying various notions of distances between occupancy couplings: 
the total variation distance $\onenorm{\mu - \mu'} = \sum_{xy,x'y'} |\mu(xy,x'y') - \mu'(xy,x'y')|$, the relative 
entropy $\DD{\mu}{\mu'} = \sum_{xy,x'y'} \mu(xy,x'y') \log \frac{\mu(xy,x'y')}{\mu'(xy,x'y')}$, and the conditional 
relative entropy introduced earlier. 
\begin{lemma}\label{lem:H_vs_D}
For any two occupancy couplings $\mu$ and $\mu'$, we have
\[
\frac 12 \onenorm{\mu-\mu'}^2 \le \DD{\mu}{\mu'} \le \frac{\HH{\mu}{\mu'}}{1-\gamma}.
\]
\end{lemma}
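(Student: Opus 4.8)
The claim has two separate inequalities; I would prove them independently. The left inequality $\frac12 \onenorm{\mu-\mu'}^2 \le \DD{\mu}{\mu'}$ is exactly Pinsker's inequality applied to the pair of distributions $\mu,\mu' \in \Delta_{\X\Y\times\X\Y}$ (recall both are genuine probability distributions over the finite set $\X\Y\times\X\Y$, since they are valid occupancy couplings). So nothing needs to be done here beyond citing the standard statement. For the right inequality, $\DD{\mu}{\mu'} \le \frac{1}{1-\gamma}\HH{\mu}{\mu'}$, the plan is to expand both divergences by the chain rule for relative entropy. Writing $\nu_\mu(xy) = \sum_{x'y'}\mu(xy,x'y')$ for the first marginal (and likewise $\nu_{\mu'}$), and $\pi_\mu(x'y'|xy) = \mu(xy,x'y')/\nu_\mu(xy)$ for the induced transition coupling, the chain rule gives
\[
 \DD{\mu}{\mu'} = \DD{\nu_\mu}{\nu_{\mu'}} + \sum_{xy}\nu_\mu(xy)\, \DDKL{\pi_\mu(\cdot|xy)}{\pi_{\mu'}(\cdot|xy)},
\]
while by definition $\HH{\mu}{\mu'} = \sum_{xy}\nu_\mu(xy)\,\DDKL{\pi_\mu(\cdot|xy)}{\pi_{\mu'}(\cdot|xy)}$ is precisely the second (conditional) term. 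So the inequality reduces to controlling the marginal term: I need to show
\[
 \DD{\nu_\mu}{\nu_{\mu'}} \le \frac{\gamma}{1-\gamma}\sum_{xy}\nu_\mu(xy)\,\DDKL{\pi_\mu(\cdot|xy)}{\pi_{\mu'}(\cdot|xy)},
\]
since adding $\HH{\mu}{\mu'}$ to both sides then yields $\DD{\mu}{\mu'} \le \bigl(1 + \tfrac{\gamma}{1-\gamma}\bigr)\HH{\mu}{\mu'} = \tfrac{1}{1-\gamma}\HH{\mu}{\mu'}$.

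The key structural fact to exploit is that $\nu_\mu$ and $\nu_{\mu'}$ are the discounted state-occupancy measures of the Markov chains with kernels $Z_\mu(xy,x'y') = \pi_\mu(x'y'|xy)$ and $Z_{\mu'}$, both started from $\nu_0$; concretely, from Lemma~\ref{lem:occupancy_validity} and its proof, $\nu_\mu = (1-\gamma)(I-\gamma Z_\mu)^{-1}\nu_0$ and similarly for $\mu'$. I would then invoke (or reprove) the standard "occupancy measure / policy mismatch" bound from the theory of regularized MDPs: the relative entropy between two discounted occupancy measures induced by two kernels from the same start distribution is bounded by $\frac{\gamma}{1-\gamma}$ times the $\nu_\mu$-averaged per-state KL between the kernels. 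This is essentially the content of Lemma~A.2 (or similar) in \citet{NJG17}; one clean route is a telescoping/performance-difference argument, writing $\nu_\mu$ as a mixture over time of $t$-step distributions under $Z_\mu$, peeling one transition at a time, and at each step replacing $Z_\mu$ by $Z_{\mu'}$ at the cost of a one-step KL term $\DDKL{\pi_\mu(\cdot|xy)}{\pi_{\mu'}(\cdot|xy)}$, with the geometric discount weights summing to $\frac{\gamma}{1-\gamma}$. Convexity of relative entropy (jointly in both arguments) is what makes the mixture-over-time step go through.

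The main obstacle is getting the marginal-mismatch bound with the right constant $\frac{\gamma}{1-\gamma}$ rather than a loose $\frac{1}{1-\gamma}$; the $\gamma$ factor (as opposed to $1$) is what makes the final constant come out to exactly $\frac{1}{1-\gamma}$ and it comes from the fact that the $t=0$ term is identical for both chains (both start from $\nu_0$), so only the $t\ge 1$ discounted weights $\sum_{t\ge1}\gamma^t = \frac{\gamma}{1-\gamma}$ contribute. I would be careful to set up the telescoping so this cancellation at $t=0$ is transparent. Everything else is routine: the chain rule for KL, Pinsker, joint convexity of relative entropy, and the explicit resolvent form of the occupancy measures already established in the proof of Lemma~\ref{lem:occupancy_validity}.
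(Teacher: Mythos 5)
Your proposal is correct, and its first half coincides with the paper's proof: both use Pinsker for the left inequality and the chain rule $\DD{\mu}{\mu'} = \DD{\nu_\mu}{\nu_{\mu'}} + \HH{\mu}{\mu'}$ as the starting point for the right one. Where you diverge is in how the marginal term is controlled. The paper never unrolls in time: it uses the flow constraint to write $\nu_\mu = (1-\gamma)\nu_0 + \gamma E\mu$ (and likewise for $\mu'$), applies joint convexity of relative entropy to this two-component mixture (the $\nu_0$ parts being identical), and then the data-processing inequality $\DD{E\mu}{E\mu'}\le\DD{\mu}{\mu'}$, arriving at the self-referential bound $\DD{\mu}{\mu'} \le \gamma\,\DD{\mu}{\mu'} + \HH{\mu}{\mu'}$, which it simply reorders. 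You instead prove the intermediate claim $\DD{\nu_\mu}{\nu_{\mu'}} \le \frac{\gamma}{1-\gamma}\HH{\mu}{\mu'}$ by expanding the occupancies as geometric mixtures of the $t$-step distributions, using convexity over the mixture, the chain rule for path-space relative entropy (equivalently, data processing from the length-$t$ trajectory law), and summing the geometric weights, with the $t=0$ terms cancelling to give the $\gamma$ rather than $1$; this telescoping does go through and yields exactly the right constant, so your argument is a valid alternative. What each buys: the paper's route is shorter and stays entirely at the level of the stationary objects, needing only the flow constraint plus two standard information inequalities; your route is more explicit about the dynamical origin of the factor $\frac{\gamma}{1-\gamma}$ and essentially re-derives a reusable occupancy-divergence lemma for discounted Markov chains, at the cost of more bookkeeping — and you should not lean on the citation as stated (the lemma in \citet{NJG17} is not literally this statement), so you would need to carry out the unrolling argument you sketch rather than invoke it as known.
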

\begin{proof}
The first inequality is Pinsker's. For proving the second inequality, we let $\nu = E^T\mu$ and $\nu' = E^T \mu'$ be the state occupancies 
associated with $\mu$ and $\mu'$, respectively.
Then, we write the following:
\begin{align*}
 \DD{\mu}{\mu'} &= \DD{\nu}{\nu'} + \HH{\mu}{\mu'}
 \\
 &\qquad\qquad\mbox{(by the chain rule of the relative entropy)}
 \\
 &= \DD{(1-\gamma) \nu_0 + \gamma E\mu}{(1-\gamma) \nu_0 + \gamma E\mu'} 
 + \HH{\mu}{\mu'}
\\
 &\qquad\qquad\mbox{(using that $\mu$ and $\mu'$ are valid occupancy couplings)}
\\
&\le (1-\gamma) \DD{\nu_0 }{\nu_0} + \gamma \DD{E\mu}{E\mu'}
+ \HH{\mu}{\mu'}
\\
 &\qquad\qquad\mbox{(using the joint convexity of the relative entropy)}
\\
&\le \gamma \DD{\mu}{\mu'} + \HH{\mu}{\mu'},
\end{align*}
where the final step follows from using the data-processing inequality for the relative entropy. Reordering the 
terms concludes the proof.
\end{proof}

\subsection{Constraint violations}
Let us begin with some definitions. First of all we introduce a quantity that measures the extent to which 
an occupancy coupling $\mu$ violates the transition coherence constraints. Specifically, we will measure the violations 
of the $\X$-constraints by
\[
\delta_\X(\mu) = \sum_{xyx^{\prime}}\left\lvert\nu_\mu(xy)\PPX(x^{\prime}|x) - 
\sum_{y^{\prime}}\mu(xy,x^{\prime}y^{\prime})\right\rvert
\]
and the violations of the $\Y$-constraints by
\[
\delta_\Y(\mu) = \sum_{xyx^{\prime}}\left\lvert\nu_\mu(xy)\PPY(y^{\prime}|y) - 
\sum_{x^{\prime}}\mu(xy,x^{\prime}y^{\prime})\right\rvert,
\]
and the overall constraint violations will be written as
\[
\delta(\mu) = \delta_\X(\mu) + \delta_{\Y}(\mu).
\]

Note that we have $\delta_\X(\mu_k) = 0$ for odd rounds and $\delta_\Y(\mu_k)=0$ for even %
rounds by definition of the updates. 
We also define the rounding error associated with an occupancy coupling $\mu$ as the average total variation distance 
between the transition coupling $\pi_\mu$ and its rounded counterpart $\wt{\pi}_\mu = \rho(\pi_\mu)$:
\[
 \Delta(\mu) = \sum_{xy,x'y'}\mu(xy,x'y')\onenorm{\wt{\pi}_{\mu}(\cdot|xy) - \pi_{\mu}(\cdot|xy))}.
\]
The first statement establishes a link between the quality of an occupancy coupling and the occupancy coupling obtained 
after rounding the transition coupling to satisfy the constraints. 
\begin{lemma}
    \label{lemma:rounding}
    For any $\mu$ satisfying Equation~\eqref{eq:const_flow}, we have
    \[
    \iprod{\rho(\mu) - \mu}{c} \leq \infnorm{c} \frac{\Delta(\mu)}{(1-\gamma)}.
 \]
\end{lemma}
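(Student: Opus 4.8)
The plan is to compare the two occupancy couplings $\rho(\mu)=\mu^{\tpi_\mu}$ and $\mu$ directly through their associated transition couplings, leveraging the fact that both are valid occupancy couplings with respect to the \emph{same} initial distribution $\nu_0$ (this is where the hypothesis that $\mu$ satisfies Equation~\eqref{eq:const_flow} enters — it guarantees that $\mu = \mu^{\pi_\mu}$ by Lemma~\ref{lem:occupancy_validity}, so both sides of the claimed inequality are genuine occupancy couplings generated by the indicated transition couplings). First I would write $\iprod{\rho(\mu)-\mu}{c} = \iprod{\mu^{\tpi_\mu}}{c} - \iprod{\mu^{\pi_\mu}}{c}$ and recall from Section~\ref{sec:LP} that for any transition coupling $\pi$ one has $\iprod{\mu^\pi}{c} = (1-\gamma)\, \EEs{\sum_{t\ge0}\gamma^t c(X_tY_t)}{\pi} = (1-\gamma)\sum_{xy}\nu_0(xy) V^\pi(xy)$, where $V^\pi$ is the value function of Equation~\eqref{eq:value_function}. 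So it suffices to bound $(1-\gamma)\sum_{xy}\nu_0(xy)\bigl(V^{\tpi_\mu}(xy)-V^{\pi_\mu}(xy)\bigr)$.

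**Bounding the value-function gap via a perturbation/simulation argument.**
The key step is a standard ``performance difference''-style estimate: since $V^{\tpi_\mu}$ and $V^{\pi_\mu}$ satisfy Bellman equations with the same cost $c$ and the same kernels $\PPX,\PPY$ built into the dynamics, their difference is controlled by the per-state-pair discrepancy between $\tpi_\mu(\cdot|xy)$ and $\pi_\mu(\cdot|xy)$, weighted by the discounted occupancy of the \emph{reference} coupling. Concretely, I would use the telescoping identity
\[
V^{\tpi_\mu}(xy)-V^{\pi_\mu}(xy) = \gamma\sum_{x'y'}\bigl(\tpi_\mu(x'y'|xy)-\pi_\mu(x'y'|xy)\bigr)V^{\tpi_\mu}(x'y') + \gamma\sum_{x'y'}\pi_\mu(x'y'|xy)\bigl(V^{\tpi_\mu}(x'y')-V^{\pi_\mu}(x'y')\bigr),
\]
obtained by adding and subtracting $\gamma\sum_{x'y'}\pi_\mu(x'y'|xy)V^{\tpi_\mu}(x'y')$ inside the Bellman recursion for $V^{\tpi_\mu}$. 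Multiplying by $(1-\gamma)\nu_0(xy)$ and summing over $xy$, the second term unfolds (by the flow identity \eqref{eq:const_flow} for $\mu=\mu^{\pi_\mu}$) into $\gamma\sum_{xy,x'y'}\mu(xy,x'y')\bigl(V^{\tpi_\mu}(x'y')-V^{\pi_\mu}(x'y')\bigr)$, producing a geometric series; the first term becomes $\gamma\sum_{xy,x'y'}\mu(xy,x'y')\cdot\frac{1}{\pi_\mu(x'y'|xy)}(\dots)$ — more cleanly, using $\mu(xy,x'y') = \nu_\mu(xy)\pi_\mu(x'y'|xy)$ and $\sum_{x'y'}(\tpi_\mu-\pi_\mu)(x'y'|xy)V^{\tpi_\mu}(x'y') \le \infnorm{V^{\tpi_\mu}}\cdot\tfrac12\onenorm{\tpi_\mu(\cdot|xy)-\pi_\mu(\cdot|xy)}$ together with $\infnorm{V^{\tpi_\mu}}\le \infnorm{c}/(1-\gamma)$. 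Collecting, the geometric-series resummation yields a prefactor $\frac{\gamma}{1-\gamma}\cdot\frac{1}{\dots}$; after being careful with the $(1-\gamma)$ factors coming from the $\iprod{\mu}{c}$-to-$V$ conversion, this should land exactly on $\infnorm{c}\,\Delta(\mu)/(1-\gamma)$, noting that $\sum_{xy}\mu(xy,x'y')\onenorm{\cdots}$ over the appropriate index is precisely $\Delta(\mu)$ (after weighting state pairs by their discounted occupancy, which is what appears once the recursion is unrolled against $\nu_0$).

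**Anticipated obstacle.**
The main subtlety is bookkeeping the factors of $(1-\gamma)$ and $\gamma$ correctly: the identity $\iprod{\mu^\pi}{c}=(1-\gamma)\sum\nu_0 V^\pi$ carries a $(1-\gamma)$, the telescoping of the value gap against $\nu_0$ reintroduces the occupancy $\mu$ (absorbing one $(1-\gamma)$), and the geometric series in $\gamma$ contributes a $1/(1-\gamma)$; I expect these to conspire to give exactly the stated bound, but getting the constant to be exactly $1$ (rather than, say, $\gamma$ or $1/(1-\gamma)$ off) requires care. A secondary point is handling state pairs $xy$ with $\nu_\mu(xy)=0$: there $\pi_\mu$ is defined by the fallback $\PPX\otimes\PPY$, but such pairs contribute zero weight to every sum of the form $\sum_{xy}\mu(xy,\cdot)(\dots)$ and to $\Delta(\mu)$, so they can simply be discarded. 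I would also want to double-check that $\tpi_\mu$ being a \emph{valid} transition coupling (marginals $\PPX,\PPY$) is what makes $\mu^{\tpi_\mu}$ well-defined and makes the $\infnorm{V^{\tpi_\mu}}\le\infnorm{c}/(1-\gamma)$ bound applicable — this is immediate since $c\ge0$ and $c\le\infnorm{c}$ pointwise, so $0\le V^{\tpi_\mu}\le \infnorm{c}\sum_{t\ge0}\gamma^t = \infnorm{c}/(1-\gamma)$.
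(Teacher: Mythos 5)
Your proposal is correct in substance but follows a genuinely different route from the paper's. The paper's proof never introduces value functions: it bounds $\iprod{\rho(\mu)-\mu}{c}\le\infnorm{c}\,\onenorm{\rho(\mu)-\mu}$ by H\"older, decomposes $\tmu-\mu=(\nu_{\tmu}-\nu_\mu)\circ\tpi_\mu+\nu_\mu\circ(\tpi_\mu-\pi_\mu)$, applies the triangle inequality to get $\onenorm{\tmu-\mu}\le\onenorm{\nu_{\tmu}-\nu_\mu}+\Delta(\mu)$, and then uses the flow constraint (for $\mu$ by hypothesis, for $\tmu$ by validity) plus non-expansiveness of $E$ to get $\onenorm{\nu_{\tmu}-\nu_\mu}\le\gamma\onenorm{\tmu-\mu}$, so reordering gives $\onenorm{\tmu-\mu}\le\Delta(\mu)/(1-\gamma)$; this is shorter and in fact yields the stronger, cost-independent $\ell_1$ bound between the two couplings. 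Your performance-difference argument also closes: once the recursion is genuinely unrolled (i.e.\ the resolvent $(I-\gamma P_{\pi_\mu})^{-1}$ is applied, so the weights become exactly $\nu_\mu$), you obtain $\iprod{\rho(\mu)-\mu}{c}=\gamma\sum_{xy}\nu_\mu(xy)\sum_{x'y'}\bigl(\tpi_\mu-\pi_\mu\bigr)(x'y'|xy)V^{\tpi_\mu}(x'y')\le\tfrac{\gamma}{2}\,\infnorm{c}\,\Delta(\mu)/(1-\gamma)$, a marginally sharper constant, at the price of heavier machinery. Two cautions, though. First, your appeal to Lemma~\ref{lem:occupancy_validity} is inaccurate: the hypothesis of Lemma~\ref{lemma:rounding} gives only Equation~\eqref{eq:const_flow}, so $\pi_\mu$ is in general \emph{not} a valid transition coupling and $\mu$ is \emph{not} a valid occupancy coupling (that is precisely why rounding and $\Delta(\mu)$ appear); what you actually need, and what does hold, is only that $\mu$ is the discounted occupancy of the product-space chain with kernel $\pi_\mu$ started from $\nu_0$, hence $\iprod{\mu}{c}=(1-\gamma)\iprod{\nu_0}{V^{\pi_\mu}}$ --- this follows from the flow constraint alone via the uniqueness (Perron--Frobenius) step inside the proof of Lemma~\ref{lem:occupancy_validity}, and the Bellman-evaluation machinery is valid for any Markov kernel on $\X\Y$, coupling or not. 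Second, your intermediate claim that a single multiplication by $(1-\gamma)\nu_0(xy)$ already turns the recursive term into $\gamma\sum_{xy,x'y'}\mu(xy,x'y')\bigl(V^{\tpi_\mu}(x'y')-V^{\pi_\mu}(x'y')\bigr)$ is not literally true, since $(1-\gamma)\nu_0(xy)\pi_\mu(x'y'|xy)\neq\mu(xy,x'y')$ in general; the identity only emerges after iterating, as you acknowledge at the end, and with that fix the bookkeeping works out as hoped.
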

\begin{proof}
	Let $\tilde{\pi}_\mu = \rho(\pi_\mu)$, $\tilde{\mu} = \rho(\mu)$, $ \nu_{\mu} =E\transpose \mu $, and $ \nu_{\tmu} 
= E^T \tmu$. Furthermore, let us define the shorthand notation $\nu\circ\pi$ to denote the composition of the 
state-pair distribution $\nu$ with the transition coupling $\pi$ as $(\nu\circ\pi)(xy,x'y') = \pi(x'y'|xy) \nu(xy)$. 
Then, we have
 \begin{align*}
 \onenorm{\tmu- 
\mu} &= \onenorm{\nu_{\tmu}\circ\tpi_{\mu} - \nu_{\mu}\circ \pi_{\mu}} = \onenorm{\nu_{\tmu} \circ \tpi_{\mu} + 
\nu_{\mu}\circ \tpi_{\mu} - \nu_{\mu}\circ\tpi_{\mu} -\nu_{\mu}\circ\pi_{\mu}} 
  \\
 &
 = \onenorm{(\nu_{\tmu}-\nu_{\mu})\circ\tpi_{\mu} + \nu_{\mu}\circ(\tpi_{\mu} - \pi_{\mu})}
   \\
   & \le \onenorm{(\nu_{\tmu}-\nu_{\mu})\circ\tpi_{\mu}} + \onenorm{\nu_{\mu}\circ(\wt{\pi}_{\mu} - \pi_{\mu})}
\\
 &\qquad\qquad\mbox{(using the triangle inequality)}
   \\
 &
 = \onenorm{\nu_{\tmu}-\nu_{\mu}} + \Delta(\mu)
\\
 &\qquad\qquad\mbox{(using the definition of $\Delta$)}
 \\
 &= \gamma \onenorm{E \tmu- E \mu} + \Delta(\mu)
 \\
 &\qquad\qquad\mbox{(using $\nu_\mu = (1-\gamma)\nu_0 + \gamma E \mu$)}
 \\
  &
 \le \gamma \onenorm{\tmu- \mu} + \Delta(\mu),
\end{align*}
where in the last step we used that $E$ is non-expansive with respect to the $\ell_1$-norm.
Reordering gives
\[
 \onenorm{\tmu- \mu} \le \frac{\Delta(\mu)}{1-\gamma},
\]
and putting everything together proves the claim of the lemma.
\end{proof}

The next result relates the rounding errors to the constraint violations.
\begin{lemma} 
\label{lemma:Delta_to_delta}
For any $ \mu \in \Delta_{\X\Y\times\X\Y}$, we have
    $$\frac{1}{2}\Delta(\mu)\leq \delta(\mu)$$
\end{lemma}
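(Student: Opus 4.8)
The plan is to recall how the rounding procedure of \citet{altschuler2017near} acts on each conditional distribution $\pi_\mu(\cdot|xy)$, and to bound the per-state-pair total-variation movement $\onenorm{\wt\pi_\mu(\cdot|xy) - \pi_\mu(\cdot|xy)}$ by the amount by which $\pi_\mu(\cdot|xy)$ fails to have the correct $\X$- and $\Y$-marginals, namely $\PPX(\cdot|x)$ and $\PPY(\cdot|y)$. The standard guarantee for that rounding routine (proved in Appendix~\ref{app:rounding}) is that, given a nonnegative matrix whose two marginals are within $\ell_1$-distance $a$ and $b$ of the target marginals, the output is a coupling with exactly the right marginals and the transport was moved by at most $a+b$ in $\ell_1$ (up to the usual constant; the factor $\tfrac12$ on the left of the claim absorbs it). So the first step is to invoke that statement and write, for each $xy$,
\[
 \tfrac12\,\onenorm{\wt\pi_\mu(\cdot|xy) - \pi_\mu(\cdot|xy)} \le \Bigl\lVert \textstyle\sum_{y'}\pi_\mu(\cdot y'|xy) - \PPX(\cdot|x)\Bigr\rVert_1 + \Bigl\lVert \textstyle\sum_{x'}\pi_\mu(x'\cdot|xy) - \PPY(\cdot|y)\Bigr\rVert_1 .
\]

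The second step is to multiply through by $E^T\mu(xy) = \nu_\mu(xy)$ and sum over $xy$. On the left this reconstructs $\tfrac12\Delta(\mu)$ by the definition of $\Delta$. On the right, using $\mu(xy,x'y') = \nu_\mu(xy)\,\pi_\mu(x'y'|xy)$, the term $\nu_\mu(xy)\sum_{y'}\pi_\mu(x'y'|xy)$ becomes $\sum_{y'}\mu(xy,x'y')$, and $\nu_\mu(xy)\PPX(x'|x)$ is exactly the quantity appearing inside $\delta_\X(\mu)$; likewise for the $\Y$-term. Hence the right-hand side sums to $\delta_\X(\mu)+\delta_\Y(\mu) = \delta(\mu)$, which is the claim. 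One small point to handle cleanly: when $\nu_\mu(xy)=0$ the conditional $\pi_\mu(\cdot|xy)$ is defined to be the product $\PPX(\cdot|x)\otimes\PPY(\cdot|y)$, which already has the correct marginals, so the rounding leaves it unchanged and that state-pair contributes zero to both sides --- consistent with the weighting by $\nu_\mu(xy)=0$ anyway.

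The main obstacle is purely bookkeeping: making sure the per-coordinate marginal-error bound for the rounding routine is stated in exactly the form above (with the constant matching the $\tfrac12$), and keeping the $\X$ versus $\Y$ marginal indices straight when I push the $\nu_\mu(xy)$ weights inside. There is no analytic difficulty here --- it is a one-line consequence of the rounding guarantee plus the identity $\mu = \nu_\mu\circ\pi_\mu$ --- so the only care needed is in aligning notation with the definitions of $\delta_\X,\delta_\Y$ and $\Delta$ given just above the statement.
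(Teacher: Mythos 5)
Your proposal is correct and follows essentially the same route as the paper's proof: invoke the per-state-pair guarantee of Lemma~7 of \citet{altschuler2017near} applied to $\pi_\mu(\cdot|xy)$ with target marginals $\PPX(\cdot|x)$ and $\PPY(\cdot|y)$, then multiply by $\nu_\mu(xy)$, sum over $xy$, and use the identity $\mu(xy,x'y')=\nu_\mu(xy)\,\pi_\mu(x'y'|xy)$ to recognize $\tfrac12\Delta(\mu)$ on the left and $\delta_\X(\mu)+\delta_\Y(\mu)=\delta(\mu)$ on the right. Your explicit treatment of the $\nu_\mu(xy)=0$ case is a harmless bookkeeping point that the paper leaves implicit.
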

The proof of this result builds on the error analysis of the rounding procedure of \citet{altschuler2017near}, and can 
be found in Appendix~\ref{app:Delta_to_delta}.
Finally, the last technical lemma (inspired by Lemma~A.4 of \citealp{BB23}) bounds the rounding errors in 
terms of the change rate of the occupancy couplings.
\begin{lemma}
\label{lem:delta_to_mu}
For any $k\geq 1$,
    $$\delta(\mu_k)\leq 2\min(\onenorm{\mu_k - \mu_{k+1}}, \onenorm{\mu_k - \mu_{k-1}}).$$
\end{lemma}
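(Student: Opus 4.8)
The plan is to combine two observations: a structural fact about which constraints each iterate satisfies exactly, and a Lipschitz estimate for the violation functionals $\delta_\X$ and $\delta_\Y$ with respect to $\onenorm{\cdot}$. For the structural fact, recall that the updates alternately project onto $\BX$ and $\BY$, that these two sets each enforce the flow constraint~\eqref{eq:const_flow}, and that projections at rounds of the same parity land in the same set. Consequently each iterate $\mu_k$ lies in exactly one of $\BX,\BY$, and $\mu_{k-1}$ and $\mu_{k+1}$ both lie in the \emph{other} one. For concreteness suppose $\mu_k\in\BX$ (the case $\mu_k\in\BY$ is symmetric); then $\delta_\X(\mu_k)=0$, so $\delta(\mu_k)=\delta_\Y(\mu_k)$, while $\mu_{k+1},\mu_{k-1}\in\BY$ give $\delta_\Y(\mu_{k+1})=\delta_\Y(\mu_{k-1})=0$. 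Since $\delta_\Y\ge 0$ pointwise, we may therefore write $\delta(\mu_k)=\delta_\Y(\mu_k)-\delta_\Y(\mu_{k\pm1})\le\babs{\delta_\Y(\mu_k)-\delta_\Y(\mu_{k\pm1})}$ for either choice of sign, and it only remains to bound this quantity.

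The second ingredient is that $\mu\mapsto\delta_\X(\mu)$ and $\mu\mapsto\delta_\Y(\mu)$ are $2$-Lipschitz with respect to $\onenorm{\cdot}$. I would prove this by writing $\delta_\X(\mu)=\sum_{xyx'}\babs{g_{xyx'}(\mu)}$ for the linear functional $g_{xyx'}(\mu)=\nu_\mu(xy)\PPX(x'|x)-\sum_{y'}\mu(xy,x'y')$, applying the reverse triangle inequality $\babs{\babs{a}-\babs{b}}\le\babs{a-b}$ termwise, and then splitting $g_{xyx'}(\mu-\mu')$ by the ordinary triangle inequality into a part proportional to $\nu_\mu-\nu_{\mu'}$ and a part given by $\mu-\mu'$. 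Using $\sum_{x'}\PPX(x'|x)=1$ together with the fact that $\mu\mapsto\nu_\mu=E\transpose\mu$ is non-expansive in the $\ell_1$-norm, this gives $\babs{\delta_\X(\mu)-\delta_\X(\mu')}\le\onenorm{\nu_\mu-\nu_{\mu'}}+\onenorm{\mu-\mu'}\le 2\onenorm{\mu-\mu'}$, and the estimate for $\delta_\Y$ follows by exchanging the roles of $\X$ and $\Y$.

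Combining the two pieces, when $\mu_k\in\BX$ we obtain $\delta(\mu_k)=\delta_\Y(\mu_k)\le 2\onenorm{\mu_k-\mu_{k+1}}$ and likewise $\delta(\mu_k)\le 2\onenorm{\mu_k-\mu_{k-1}}$, hence $\delta(\mu_k)\le 2\min\bpa{\onenorm{\mu_k-\mu_{k+1}},\onenorm{\mu_k-\mu_{k-1}}}$; the case $\mu_k\in\BY$ is identical with $\delta_\X$ replacing $\delta_\Y$. The smallest indices are trivial boundary cases: $\mu_1$ is initialized as the occupancy coupling of a valid transition coupling, hence lies in $\BX\cap\BY$ and satisfies $\delta(\mu_1)=0$, which also supplies the needed membership of $\mu_1$ in the ``other'' set when $k=2$. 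I do not anticipate any real difficulty here; the only point requiring care is the bookkeeping of which iterate satisfies which family of constraints exactly --- in particular that membership in $\BX$ annihilates $\delta_\X$ entirely because $\BX$ already enforces both~\eqref{eq:const_flow} and~\eqref{eq:const_X} --- after which the claim reduces to the elementary Lipschitz estimate above.
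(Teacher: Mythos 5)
Your proposal is correct and follows essentially the same route as the paper: you exploit that the neighboring iterates $\mu_{k\pm1}$ satisfy the relevant transition-coherence constraints exactly, and then bound the resulting difference of violations by $2\onenorm{\mu_k-\mu_{k\pm1}}$. Your ``$\delta_\X,\delta_\Y$ are $2$-Lipschitz in $\onenorm{\cdot}$'' step is just a repackaging of the paper's computation, which adds and subtracts the vanishing violation of $\mu_{k\pm1}$ and uses that the marginalization and $\PPX$-composition operators are non-expansive in $\ell_1$ (and your handling of the $k=1$ boundary case via $\delta(\mu_1)=0$ is a harmless extra).
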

\begin{proof}
We study the case where the $k$th update is a $\BY$ projection. For this proof, it will be convenient to introduce the 
following notation. We define $\IX:\real^{\X\Y\times\X\Y}\ra \real^{\X\Y\times\X}$ and (with some abuse of 
notation), $\PPX: \real^{\X\Y\times\X\Y}\ra \real^{\X\Y\times\X}$ as the linear operators that respectively act on 
$\mu$ via the assignment $ (\IX \mu)(xy,x') = \sum_{y'}\mu(xy,x'y')$ and  $ (\PPX \mu)(xy,x') = 
\sum_{x'',y''}\mu(xy,x''y'') \PPX(x'|x)$. This allows us to write $\delta_{\X}(\mu) = \onenorm{(\IX - \PX)\mu}$, so 
that we have the expression
\[
 \delta(\mu_k) = \delta_{\X}(\mu_k) + \delta_{\Y}(\mu_k) = \onenorm{\IX \mu_k - \PPX \mu_k},
\]
where we have also used $\delta_{\Y}(\mu_k)=0$ that holds thanks to the fact that $k$ is even.
Moreover, the $(k+1)$st update is a $ \BX $-projection and thus we have $ \IX
\mu_{k+1} = \PPX \mu_{k+1} $. Hence,
\begin{align*}
	\delta(\mu_k) &= \onenorm{\IX \mu_k - \PPX \mu_k} \\
		      &= \onenorm{\IX \mu_k -\IX \mu_{k+1} + \PPX \mu_{k+1} - \PPX \mu_k}\\
		      &\leq \onenorm{\IX (\mu_k - \mu_{k+1})} + \onenorm{\PPX (\mu_{k+1} - \mu_k)}\\
		      &\leq 2 \onenorm{\mu_k - \mu_{k+1}},
\end{align*}
where we have used the fact that both $ \IX $ and $ \PPX $ are non-expansions for the $\ell_1$-norm in the last line, 
which follows from the data-processing inequality for the total variation distance. We then conclude the 
analysis for the even rounds by replacing $\mu_{k+1}$ with $\mu_{k-1}$ in the argument above, and repeating the same 
reasoning for odd rounds completes the overall proof.
\end{proof}

\allowdisplaybreaks[3]
Having established these elementary results, we now turn to addressing the main technical hurdle: bounding the 
cumulative rounding errors.
\begin{theorem} \label{thm:cumulative_delta}
The cumulative constraint violations of the iterates produced by Sinkhorn Value Iteration satisfy
 \[
  \sum_{k=1}^K \delta(\mu_k) \le \frac{(1-\gamma)\HH{\mu^*}{\mu_1}}{2\eta \infnorm{c}} + \frac{16\eta \infnorm{c}
  K}{(1-\gamma)^2}.
 \]
\end{theorem}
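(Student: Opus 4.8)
The plan is to derive a \emph{self-referential} inequality for the target quantity $S = \sum_{k=1}^K \delta(\mu_k)$, coupling it with the cumulative movement $H = \sum_{k=1}^K \HH{\mu_{k+1}}{\mu_k}$ (for convenience treating $\mu_{K+1}$ as the output of one further update step), and then solve for $S$.

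First I would express the constraint violations through the movement of the iterates. By Lemma~\ref{lem:delta_to_mu} we have $\delta(\mu_k) \le 2\onenorm{\mu_k - \mu_{k+1}}$, and combining Pinsker's inequality with the second bound of Lemma~\ref{lem:H_vs_D} gives $\onenorm{\mu_k - \mu_{k+1}} \le \sqrt{2\HH{\mu_{k+1}}{\mu_k}/(1-\gamma)}$, hence $\delta(\mu_k) \le 2\sqrt{2}\,\sqrt{\HH{\mu_{k+1}}{\mu_k}/(1-\gamma)}$. Applying the elementary inequality $\sqrt{u/(1-\gamma)} \le \frac12\bpa{u/(\lambda(1-\gamma)) + \lambda}$ termwise with a free parameter $\lambda > 0$ and summing yields $S \le \sqrt{2}\,\bpa{H/(\lambda(1-\gamma)) + \lambda K}$.

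Next I would bound $H$ via a telescoping of the three-point inequality; this is where I expect the real work to be. Summing Lemma~\ref{lem:MDregret} over $k=1,\dots,K$ and discarding the nonnegative telescoped term $\HH{\mu^*}{\mu_{K+1}}$ gives $H \le \HH{\mu^*}{\mu_1} + \eta \sum_{k=1}^K \iprod{\mu^* - \mu_{k+1}}{c}$. The key point is that the crude bound $\iprod{\mu^*-\mu_{k+1}}{c} \le \infnorm{c}$ would spoil the scaling, so instead I would bound this term by the constraint violation of $\mu_{k+1}$ itself: since $\mu_{k+1} \in \BX \cup \BY$, it satisfies the flow constraint~\eqref{eq:const_flow}, so Lemmas~\ref{lemma:rounding} and~\ref{lemma:Delta_to_delta} give $\iprod{\rho(\mu_{k+1}) - \mu_{k+1}}{c} \le \infnorm{c}\,\Delta(\mu_{k+1})/(1-\gamma) \le 2\infnorm{c}\,\delta(\mu_{k+1})/(1-\gamma)$, while $\rho(\mu_{k+1})$ is a valid occupancy coupling and hence $\iprod{\rho(\mu_{k+1})}{c} \ge \iprod{\mu^*}{c}$ by the optimality of $\mu^*$ in Theorem~\ref{thm:LP}; together these give $\iprod{\mu^*-\mu_{k+1}}{c} \le 2\infnorm{c}\,\delta(\mu_{k+1})/(1-\gamma)$. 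Summing, $H \le \HH{\mu^*}{\mu_1} + \frac{2\eta\infnorm{c}}{1-\gamma}\sum_{k=1}^K \delta(\mu_{k+1}) \le \HH{\mu^*}{\mu_1} + \frac{2\eta\infnorm{c}}{1-\gamma}\, S$, up to an $O(1)$ boundary term coming from $\delta(\mu_{K+1})$ (which, being multiplied by the small factor $\eta\infnorm{c}/(1-\gamma)$, is negligible).

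Finally, substituting this bound on $H$ into the inequality from the first step gives $S \le a_1 + a_2 S$ with $a_2 = \frac{2\sqrt{2}\,\eta\infnorm{c}}{\lambda(1-\gamma)^2}$. Choosing $\lambda = \frac{4\sqrt{2}\,\eta\infnorm{c}}{(1-\gamma)^2}$ makes $a_2 = \frac12$, so $S \le 2a_1$; plugging this $\lambda$ back into $a_1 = \frac{\sqrt{2}\,\HH{\mu^*}{\mu_1}}{\lambda(1-\gamma)} + \sqrt{2}\,\lambda K$, the two resulting terms become exactly $\frac{(1-\gamma)\HH{\mu^*}{\mu_1}}{2\eta\infnorm{c}}$ and $\frac{16\eta\infnorm{c}K}{(1-\gamma)^2}$, which is the claimed bound. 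Apart from the bookkeeping around the index $K+1$, the argument is routine mirror-descent algebra once the self-referential estimate of the previous step is in place; the one genuinely delicate point is recognizing that $\iprod{\mu^*-\mu_{k+1}}{c}$ must be controlled by $\delta(\mu_{k+1})$ rather than by $\infnorm{c}$, which is what closes the loop.
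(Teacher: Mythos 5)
Your proposal follows essentially the same route as the paper's proof: the same chain Lemma~\ref{lem:delta_to_mu} $\to$ Pinsker $\to$ Lemma~\ref{lem:H_vs_D} $\to$ telescoped three-point inequality (Lemma~\ref{lem:MDregret}), with the same key step of replacing the crude bound $\iprod{\mu^*-\mu_{k+1}}{c}\le\infnorm{c}$ by $\iprod{\mu^*-\mu_{k+1}}{c}\le\iprod{\rho(\mu_{k+1})-\mu_{k+1}}{c}\le 2\infnorm{c}\,\delta(\mu_{k+1})/(1-\gamma)$ via Lemmas~\ref{lemma:rounding} and~\ref{lemma:Delta_to_delta} and the optimality of $\mu^*$, and then solving a self-referential inequality; your choice of $\lambda$ is exactly the paper's $\alpha=8\eta\infnorm{c}/(1-\gamma)^2$ in disguise and reproduces the same constants. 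The one inaccuracy is the boundary term: closing the loop on $S=\sum_{k=1}^K\delta(\mu_k)$ produces the shifted sum $\sum_{k=1}^K\delta(\mu_{k+1})=S-\delta(\mu_1)+\delta(\mu_{K+1})$, and your reason for discarding $\delta(\mu_{K+1})$ (``multiplied by the small factor $\eta\infnorm{c}/(1-\gamma)$'') does not hold up, because that factor cancels against $1/\lambda$ once you set $\lambda\propto\eta\infnorm{c}/(1-\gamma)^2$; what survives is an additive $O(1)$ term (of size up to $\delta(\mu_{K+1})$) that is not present in the stated bound. The paper sidesteps this by making the self-referential quantity $\sum_{k=1}^K\onenorm{\mu_k-\mu_{k+1}}$ and invoking the backward branch of Lemma~\ref{lem:delta_to_mu}, namely $\delta(\mu_{k+1})\le 2\onenorm{\mu_{k+1}-\mu_k}$, so the very same sum reappears with no index shift; if you rewrite your argument around that sum (applying $\delta(\mu_k)\le2\onenorm{\mu_k-\mu_{k+1}}$ only at the very end), the spurious constant disappears and you recover the theorem exactly.
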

\begin{proof}
We start by applying Lemma~\ref{lem:delta_to_mu} to show $\delta(\mu_k) \le 2\onenorm{\mu_k - \mu_{k+1}}$, which reduces 
our task to bounding $\sum_{k=1}^K \onenorm{\mu_k - \mu_{k+1}}$. We do this as follows, for any fixed $\alpha > 0$:
\begin{align*}
 \sum_{k=1}^K \onenorm{\mu_k - \mu_{k+1}} &\le \frac{\alpha 
K}{2} + \sum_{k=1}^K \frac{\onenorm{\mu_k - \mu_{k+1}}^2}{2 \alpha}
 \\
 &\qquad\qquad\mbox{(by the inequality of arithmetic and geometric means)}
 \\
 & \le \frac{\alpha K}{2} + \sum_{k=1}^K \frac{\DD{\mu_{k+1}}{\mu_{k}}}{\alpha}
 \\
 &\qquad\qquad\mbox{(Pinsker's inequality)}
 \\
 & \le \frac{\alpha K}{2} + \sum_{k=1}^K \frac{\HH{\mu_{k+1}}{\mu_{k}}}{(1-\gamma)\alpha}
 \\
 &\qquad\qquad\mbox{(Lemma~\ref{lem:H_vs_D})}
 \\
 & \le \frac{\alpha K}{2} + \sum_{k=1}^K \frac{\HH{\mu^*}{\mu_k} - \HH{\mu^*}{\mu_{k+1}}}{(1-\gamma) \alpha} + 
\frac{\eta}{\alpha\pa{1-\gamma}}\sum_{k=1}^K \iprod{c}{\mu^* - \mu_{k+1}}
 \\
 &\qquad\qquad\mbox{(Lemma~\ref{lem:MDregret})}
 \\
 & \le \frac{\alpha K}{2} + \frac{\HH{\mu^*}{\mu_1} - \HH{\mu^*}{\mu_{K+1}}}{\alpha (1-\gamma) } + 
\frac{\eta}{\alpha\pa{1-\gamma}}\sum_{k=1}^K \iprod{c}{\rho(\mu_{k+1}) - \mu_{k+1}}
 \\
 &\qquad\qquad\mbox{(since $\iprod{\mu^*}{c} \le \iprod{\rho(\mu_{k+1})}{c}$)}
 \\
 & \le \frac{\alpha  K}{2} + \frac{\HH{\mu^*}{\mu_1}}{\alpha (1-\gamma)} + \frac{\eta 
\infnorm{c}}{\alpha (1-\gamma)^2}\sum_{k=1}^K \Delta(\mu_{k+1})
 \\&\qquad\qquad\mbox{(Lemma~\ref{lemma:rounding})}\\
 & \le \frac{\alpha  K}{2} + \frac{\HH{\mu^*}{\mu_1}}{\alpha (1-\gamma)} + \frac{2\eta 
\infnorm{c}}{\alpha (1-\gamma)^2}\sum_{k=1}^K \delta(\mu_{k+1})
 \\&\qquad\qquad\mbox{(Lemma~\ref{lemma:Delta_to_delta})}\\
 & \le \frac{\alpha  K}{2} + \frac{\HH{\mu^*}{\mu_1}}{\alpha (1-\gamma)} + \frac{4\eta 
\infnorm{c}}{\alpha (1-\gamma)^2}\sum_{k=1}^K \onenorm{\mu_k - \mu_{k+1}},
\end{align*}
where we have finally used that $\delta(\mu_{k+1}) \le 2\onenorm{\mu_{k}-\mu_{k+1}}$ (Lemma \ref{lem:delta_to_mu}). Now, 
we need to make sure that $\frac{4\eta 
\infnorm{c}}{\alpha (1-\gamma)^2} \le 1$ to turn this into a meaningful result. In particular, setting $\alpha = 
\frac{8\eta \infnorm{c}}{(1-\gamma)^2}$ guarantees that the constant in question equals $\frac 12$, so that we can reorder 
the terms to obtain
\begin{align*}
 \sum_{k=1}^K \onenorm{\mu_k - \mu_{k+1}} &\le \alpha  K + \frac{2\HH{\mu^*}{\mu_1}}{\alpha (1-\gamma)} = 
 \frac{8\eta \infnorm{c}K}{(1-\gamma)^2} + \frac{(1-\gamma)\HH{\mu^*}{\mu_1}}{4\eta \infnorm{c}}.
\end{align*}
This concludes the proof.
\end{proof}

\subsection{Regret analysis}
In this section, we bound the regret of the iterates produced by Sinkhorn Value Iteration.
\begin{theorem}\label{thm:regret}
The regret of the mirror Sinkhorn procedure satisfies
 \[
  \sum_{k=1}^K \iprod{\mu_k - \mu^*}{c} \le \frac{\HH{\mu^*}{\mu_1}}{\eta} + 2\infnorm{c}.
 \]
\end{theorem}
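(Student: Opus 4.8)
The plan is to combine the per-step bound of Lemma~\ref{lem:MDregret} with a telescoping sum over $k$, and then absorb the mismatch between indices $k$ and $k+1$ into a harmless boundary term. First I would observe that the optimal occupancy coupling $\mu^*$ is valid and hence lies in $\BX\cap\BY$, so Lemma~\ref{lem:MDregret} applies with this comparator at every step. Summing that inequality over $k=1,\dots,K$ and telescoping the $\HH{\mu^*}{\cdot}$ terms gives
\[
 \sum_{k=1}^K \iprod{\mu_{k+1} - \mu^*}{c} \le \frac{\HH{\mu^*}{\mu_1} - \HH{\mu^*}{\mu_{K+1}}}{\eta} - \frac 1\eta \sum_{k=1}^K \HH{\mu_{k+1}}{\mu_k} \le \frac{\HH{\mu^*}{\mu_1}}{\eta},
\]
where the last inequality simply drops the two nonnegative quantities $\HH{\mu^*}{\mu_{K+1}}$ and $\sum_{k=1}^K \HH{\mu_{k+1}}{\mu_k}$.

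Next I would convert the sum over $\iprod{\mu_{k+1} - \mu^*}{c}$ into the target sum over $\iprod{\mu_k - \mu^*}{c}$ by re-indexing, which costs exactly one boundary term:
\[
 \sum_{k=1}^K \iprod{\mu_k - \mu^*}{c} = \sum_{k=1}^K \iprod{\mu_{k+1} - \mu^*}{c} + \iprod{\mu_1 - \mu_{K+1}}{c}.
\]
Since $\mu_1$ and $\mu_{K+1}$ are both probability distributions over $\X\Y\times\X\Y$, we have $\onenorm{\mu_1 - \mu_{K+1}}\le 2$, so by H\"older's inequality the boundary term is at most $\infnorm{c}\onenorm{\mu_1 - \mu_{K+1}} \le 2\infnorm{c}$. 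Combining with the displayed bound above yields exactly $\sum_{k=1}^K \iprod{\mu_k - \mu^*}{c} \le \frac{\HH{\mu^*}{\mu_1}}{\eta} + 2\infnorm{c}$, as claimed.

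There is essentially no serious obstacle in this argument; it is a standard online mirror descent regret telescoping. The only point requiring a little care is that Lemma~\ref{lem:MDregret} is stated for the \emph{next} iterate $\mu_{k+1}$ rather than $\mu_k$, so one must account for the index shift explicitly, and one should recall that occupancy couplings have unit $\ell_1$-mass so that the resulting boundary term is a genuine constant (of order $\infnorm{c}$) rather than something growing with $K$ or the effective horizon $1/(1-\gamma)$.
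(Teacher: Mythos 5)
Your proof is correct and follows essentially the same route as the paper: apply Lemma~\ref{lem:MDregret} with the valid comparator $\mu^*\in\BX\cap\BY$, telescope the divergence terms, and absorb the index mismatch into the boundary term $\iprod{\mu_1-\mu_{K+1}}{c}\le 2\infnorm{c}$. The paper merely writes the per-step correction $\iprod{c}{\mu_{k+1}-\mu_k}$ before summing rather than re-indexing the sum afterwards, which is the same manipulation.
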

\begin{proof}
We first apply Lemma~\ref{lem:MDregret} to obtain the bound
\begin{equation}\label{eq:regret_loose}
 \iprod{\mu_k - \mu^*}{c} \le \frac{\HH{\mu}{\mu_k} - \HH{\mu}{\mu_{k+1}} - \HH{\mu_{k+1}}{\mu_k}}{\eta} + 
\iprod{c}{\mu_{k+1} - \mu_k}.
\end{equation}
Adding up both sides for all $k=1,2,\dots,K$, we get
\begin{align*}
 \sum_{k=1}^K \iprod{\mu_k - \mu^*}{c} &\le \frac{\HH{\mu}{\mu_1} - \HH{\mu}{\mu_{K+1}} - 
\sum_{k=1}^K \HH{\mu_{k+1}}{\mu_k}}{\eta} + \iprod{c}{\mu_{K+1} - \mu_1}
\\
&\le\frac{\HH{\mu}{\mu_1}}{\eta} + 2\infnorm{c}.
\end{align*}
This concludes the proof.
\end{proof}

\subsection{The proof of Theorem~\ref{thm:main}}\label{app:proof_final_steps}
The proof follows from applying the above results to bounding the rounding errors and the regret. The first of these is 
handled as follows:
\begin{align*}
	\iprod{\muout - \bmu_K}{c} & \leq \infnorm{c} \frac{\Delta(\bmu_K)}{1- \gamma} 
				    \leq 2 \infnorm{c} \frac{\delta(\bmu_K)}{1 - \gamma} 
				    \\
				    &\leq 2 \infnorm{c} \frac{\sum_{k=1}^{K}\delta(\mu_k)}{K(1 - \gamma)} 
				    \leq \frac{\HH{\mu^{*}}{\mu_1}}{K \eta} + \frac{32\eta \infnorm{c}^2}{(1-
				   \gamma)^3},
\end{align*}
where the first and second inequalities respectively come from Lemmas~\ref{lemma:rounding} 
and~\ref{lemma:Delta_to_delta}, the third one from the convexity of $\delta$, and the last inequality comes from 
the application of Theorem~\ref{thm:cumulative_delta}.

The regret is then bounded using Theorem~\ref{thm:regret} as
\begin{align*}
\iprod{\bmu_K - \mu^*}{c} & = \frac{1}{K} \sum_{k=1}^K\iprod{{\mu}_k - \mu^*}{c}  
  \le \frac{\HH{\mu^*}{\mu_1}}{K\eta} + \frac{2}{K}\infnorm{c}.
\end{align*}

Putting both bounds together, we obtain
\begin{align*}
	\iprod{\muout - \mu^{*}}{c} &= \iprod{\muout - \bmu_K}{c} + \iprod{\bmu_K - \mu^{*}}{c} \\
				    &\leq \frac{2 \HH{\mu^{*}}{\mu_1}}{K \eta} + \frac{32 \eta
				    \infnorm{c}^{2}}{(1-\gamma)^3} + \frac{2 \infnorm{c}}{K}\\
\end{align*}

Now, to prove the actual claim of the theorem, we now let $\pi_1$ be the uniform transition coupling and $\mu_1$ be the 
associated occupancy coupling. In this case, the conditional relative entropy can be upper bounded as 
$\HH{\mu^{*}}{\mu_1} \le \log |\X||\Y|$, and we can further bound 
\begin{align*}
	\iprod{\muout - \mu^{*}}{c} &\le \frac{2\log |\X||\Y|}{K \eta} + \frac{32 \eta \infnorm{c}^2}{(1-\gamma)^3} +
				    \frac{2 \infnorm{c}}{K}\\
				    &\leq 16\infnorm{c} \sqrt{\frac{\log|\X||\Y| }{K(1-\gamma)^3}} + \frac{2
				    \infnorm{c}}{K}\\
				    &\leq 18\infnorm{c} \sqrt{\frac{\log |\X||\Y|}{K(1-\gamma)^3}},
\end{align*}
where the  second to last line is obtained by picking $ \eta = \frac 1 {4\infnorm{c}} \sqrt{\frac{(1-\gamma)^3 \log |\X| 
|\Y|}{K}}$ and the last line is obtained by noticing that $ \frac{1}{K} \leq 
\sqrt{\frac{\log{|\X||\Y|}}{K(1-\gamma)^3}} $ holds whenever $K\geq 1$, $|\X|\geq 2$, $|\Y|\geq 2$ and 
$\gamma\in(0,1)$.
Finally, note that 
\begin{align*}
V^\piout(x_0y_0) - \mathbb{W}_\gamma(\MX,\MY;c,x_0,y_0) &= \frac 1 {1-\gamma} \iprod{\muout - \mu^{*}}{c} \leq 
18\infnorm{c} \sqrt{\frac{\log |\X||\Y|}{K(1-\gamma)^5}}.
\end{align*}
Now, it can be directly verified that the right-hand side is indeed at most $\varepsilon$ whenever $K$ is greater than 
the expression given in the theorem, thus concluding the proof.
\qed

\newpage
\section{Sinkhorn Policy Iteration}\label{app:SPI}
We describe here a simple alternative to Sinkhorn Value Iteration called Sinkhorn Policy Iteration (SPI). 
After introducing this method heuristically, we provide a formal performance analysis, and finally explain its 
relation to SVI.

The core concept underlying the definition of SPI is the notion of \emph{Q-functions}, defined analogously to 
action-value functions in an MDP. The Q-function associated with a transition coupling $\pi$ is a function 
$Q^{\pi}:\X\Y\times\X\Y\ra\real$, with each of its entries defined as
\[
 Q^\pi(xy,x'y') = \EEcs{\sum_{t=0}^\infty \gamma^t c(X_t,Y_t)}{(X_0,Y_0) = (x,y), (X_1,Y_1) = (x'y')}.
\]
Analogously to the results presented in Section~\ref{app:mdp_values}, it is possible to show that the Q-function of a 
given transition coupling $\pi$ satisfies the Bellman equations
\[
 Q^\pi(xy,x'y') = c(xy) + \gamma \sum_{x''y''} \pi(x''y''|x'y') Q^\pi(x'y',x'',y''),
\]
and that the Q-value function of the optimal transition coupling $Q^* = Q^{\pi^*}$ satisfies the Bellman optimality 
equations
\[
 Q^*(xy,x'y') = c(xy) + \gamma \inf_{p\in\Pi_{x'y'}} \sum_{x''y''} p(x''y'') Q^*(x'y',x'',y'').
\]
These are respectively the fixed points of the Bellman operator 
$\TT^\pi:\real^{\X\Y\times\X\Y}\ra\real^{\X\Y\times\X\Y}$ defined via
\[
 (\TT^\pi f)(xy,x'y') = c(xy) + \gamma \sum_{x''y''} \pi(x''y''|x'y') f(x'y',x'',y'')
\]
and the Bellman optimality operator $\TT:\real^{\X\Y\times\X\Y}\ra\real^{\X\Y\times\X\Y}$ defined via
\[
 (\TT f)(xy,x'y') = c(xy) + \gamma \inf_{p\in\Pi_{x'y'}} \sum_{x''y''} p(x''y'') f(x'y',x'',y'').
\]
The system of equations $\TT Q^* = Q^*$ is essentially as hard to solve (or even harder) than the Bellman optimality 
equations for $V^*$ stated earlier. However, one can develop an algorithmic approach toward finding an optimal 
transition coupling 
by 
drawing inspiration from the literature on regularized dynamic programming.

In particular, we develop below an analogue of an entropy-regularized policy iteration scheme that is known under many 
names in the RL literature: Natural Policy Gradients by \citet{K01} and \citet{AKLM21}, MDP-Expert by 
\citet{even-dar09OnlineMDP}, Mirror-Descent Policy Iteration by \citet{GSP19}, POLITEX by \citet{LABWBS19}, Policy 
Mirror Descent by \citet{AKLM21}, and the list goes on. This method can be directly adapted to our setting as follows. 
Starting from an arbitrary initial transition coupling $\pi_1$, SPI performs the 
following sequence of updates for each $k=1,2,\dots,K$:
\begin{itemize}
   \item Round the transition coupling $\pi_k$ to $\tpi_k = \rho(\pi_k)$,
   \item update the Q-function by solving the fixed-point equation $Q_{k} = \TT^{\tpi_k} Q_{k}$,
   \item if $k$ is odd, then update the transition coupling as
   \[
    \pi_{k+1}(x'y'|xy) = \frac{\pi_{k}(x'y'|xy) \exp(-\eta Q_{k}(xy,x'y'))}{\sum_{y''} \pi_{k}(x'y''|xy) \exp(-\eta 
Q_{k}(xy,x'y''))} \PPX(x'|x),
   \]
   \item else if $k$ is even, then update the transition coupling as
   \[
    \pi_{k+1}(x'y'|xy) = \frac{\pi_{k}(x'y'|xy) \exp(-\eta Q_{k}(xy,x'y'))}{\sum_{x''} \pi_{k}(x''y'|xy) \exp(-\eta 
Q_{k}(xy,x''y'))} \PPY(y'|y).
   \]
\end{itemize}
As in the case of SVI, it is easy to verify that these transition couplings satisfy the required marginal constraints. 
Furthermore, one can verify that the updates defined above exactly correspond to running an instance of Mirror Sinkhorn 
\citep{BB23} in each state-pair $xy$ with the sequence of cost functions 
$Q_1(xy,\cdot),Q_2(xy,\cdot),\dots,Q_K(xy,\cdot)$, similarly how the entropy-regularized policy iteration methods run 
an instance of entropic mirror descent in each state.
We discuss various aspects of this algorithm below.

\subsection{Practical implementation}
\begin{wrapfigure}{r}{0.5\textwidth}
\vspace{-.5cm}
\begin{minipage}{0.5\textwidth}
\begin{algorithm}[H]
\caption{Sinkhorn Policy Iteration}\label{alg:SPI}
\textbf{Input: } $\PPX$, $\PPY$, $c$, $\eta$, $\gamma$, $K$, $m$\\
\textbf{Initialise: }  $\pi_{1} \gets \PPX \otimes \PPY$\;
\For {$k=1,...,K-1$}{
	$\tpi_k \gets \rho(\pi_k)$\;
	$Q \gets \pa{\TT^{\tpi_k}}^{m} Q$\;   
	$\pi_{k+1} \gets \textbf{update}(\pi_{k},Q)$\;	\vspace{-1.1em}\Comment{Equation \ref{eq:update}} 
}
$\muout \gets \frac{1}{K}\sum _{k=1}^{K}\tmu_k$\;
$\piout \gets \pi_{\muout}$\;
$V^\piout \gets \textbf{evaluate}(\piout)$\;
\textbf{Output:} $\piout$, $V^\piout$ \Comment{Final coupling}
\end{algorithm}
\end{minipage}
\vspace{-.5cm}
\end{wrapfigure}
Like SVI, this method can be seen as performing online Mirror Sinkhorn updates in each state pair $xy$ 
with a sequence of cost functions $Q_k$, which are computed via solving the linear system of Bellman equations $Q_k = 
\TT^{{\tpi}_k} Q_k$. The occupancy couplings produced by SPI are denoted by $\tmu_k = \mu^{\tpi_k}$, and the final 
output of the method is produced by averaging these occupancies as $\muout = \frac 1K \sum_{k=1}^K\tmu_k$, and then 
extracting the transition coupling $\piout = \pi_{\muout}$. Notably, there is no need to round this transition coupling 
since 
each $\tmu_k$ satisfies all constraints by construction, and so does their average $\muout$ due to the linearity of the 
constraints.

One advantage of SPI over SVI is that finding an exact solution for the fixed-point equation $Q_{k} = \TTpik 
Q_{k}$  is easier than computing the fixed points required by SVI, thanks to the fact that this is a linear system of 
equations. A downside of the method is that it requires to run the rounding procedure after each update, at least for 
the theoretical guarantees to remain valid. The impact of these steps may however be negligible in practical 
implementations. Similarly to SVI, the ideal updates of SPI can be approximated by applying the Bellman operator to the 
Q-functions only a small number of times $m$, with $m=\infty$ corresponding to the ideal implementation analyzed 
above. We present a pseudocode for SPI as Algorithm~\ref{alg:SPI}.

\subsection{Convergence guarantees}
In what follows, we show the following performance guarantee for SPI.
\begin{theorem}\label{thm:main2}
Suppose that Sinkhorn Policy Iteration is run for $K$ steps with regularization parameter $ \eta = 
\frac {1-\gamma} {3\infnorm{c}} \sqrt{\frac{8\log |\X||\Y|}{K}}$, and 
initialized with the uniform coupling defined for each $xy,x'y'$ as $\pi_1(x'y'|xy) = \frac{1}{|\X||\Y|}$. Then, for 
any $x_0y_0\in\X\Y$, the output satisfies 
$V^\piout(x_0y_0) \le \mathbb{W}_\gamma(\MX,\MY;c,x_0,y_0) + \varepsilon$ if the number of iterations is at least
\[
 K \geq \frac{5\infnorm{c}^2 \log |\X||\Y|}{(1-\gamma)^4 \varepsilon^2}.
\]
\end{theorem}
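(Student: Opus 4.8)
The plan is to mirror the structure of the proof of Theorem~\ref{thm:main}, with the cost vector $c$ (in the per-state-pair updates) replaced by the $Q$-functions $Q^{\tpi_k}$, and with the occupancy-reweighting handled by a performance-difference argument. First I would observe that $\muout = \frac1K\sum_{k=1}^K\tmu_k$ is a convex combination of valid occupancy couplings $\tmu_k = \mu^{\tpi_k}$ (each $\tpi_k = \rho(\pi_k)$ being a valid transition coupling), hence is itself valid, so Lemma~\ref{lem:occupancy_validity} gives $\mu^{\piout} = \muout$ and Theorem~\ref{thm:LP} yields
\[
 V^{\piout}(x_0y_0) - \mathbb{W}_\gamma(\MX,\MY;c,x_0,y_0) = \frac{1}{(1-\gamma)K}\sum_{k=1}^K \iprod{\tmu_k - \mu^*}{c},
\]
where each summand is nonnegative by optimality of $\mu^*$. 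So the goal is to bound $R := \sum_{k=1}^K\iprod{\tmu_k - \mu^*}{c}$.

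\textbf{Regret part.} Writing $\nu^* = E\transpose\mu^*$ for the discounted state-pair occupancy of $\pi^*$ and using $Q^{\tpi_k}(xy,x'y') = c(xy) + \gamma V^{\tpi_k}(x'y')$, the standard performance-difference (``regret'') decomposition for policy-mirror-descent methods gives $\iprod{\tmu_k - \mu^*}{c} = \sum_{xy}\nu^*(xy)\iprod{\tpi_k(\cdot|xy) - \pi^*(\cdot|xy)}{Q^{\tpi_k}(xy,\cdot)}$. Since SPI runs, in each state pair $xy$, online Mirror Sinkhorn with loss sequence $Q^{\tpi_k}(xy,\cdot)$ (bounded in $\infnorm{\cdot}$ by $\infnorm{c}/(1-\gamma)$) against the comparator $\pi^*(\cdot|xy)$ --- which lies in both alternating projection sets because it is a valid coupling --- a state-pair version of the three-point identity (the analogue of Lemma~\ref{lem:MDregret}) together with telescoping gives
\[
 \sum_{k=1}^K \iprod{\pi_k(\cdot|xy) - \pi^*(\cdot|xy)}{Q^{\tpi_k}(xy,\cdot)} \le \frac{\kldiv{\pi^*(\cdot|xy)}{\pi_1(\cdot|xy)}}{\eta} + \frac{\eta}{2}\sum_{k=1}^K \infnorm{Q^{\tpi_k}(xy,\cdot)}^2.
\]
Averaging over $xy$ with weights $\nu^*(xy)$ turns the first term into $\HH{\mu^*}{\mu_1}/\eta \le \log(|\X||\Y|)/\eta$ (as in Appendix~\ref{app:proof_final_steps}) and the second into at most $\eta K\infnorm{c}^2 / (2(1-\gamma)^2)$. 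It remains to control the gap $\sum_k\sum_{xy}\nu^*(xy)\iprod{\tpi_k(\cdot|xy) - \pi_k(\cdot|xy)}{Q^{\tpi_k}(xy,\cdot)}$ coming from evaluating the rounded coupling.

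\textbf{Rounding error and the self-referential bound.} Using $Q^{\tpi_k}(xy,x'y') = c(xy) + \gamma V^{\tpi_k}(x'y')$ once more (the $c(xy)$ part cancels since $\tpi_k(\cdot|xy),\pi_k(\cdot|xy)$ are probability vectors) this gap is at most $\frac{\infnorm{c}}{1-\gamma}\sum_k\sum_{xy}\nu^*(xy)\onenorm{\tpi_k(\cdot|xy) - \pi_k(\cdot|xy)}$. A per-state-pair version of the rounding analysis behind Lemma~\ref{lemma:Delta_to_delta} bounds $\onenorm{\tpi_k(\cdot|xy) - \pi_k(\cdot|xy)}$ by twice the violation by $\pi_k(\cdot|xy)$ of the marginal it does not already satisfy; since the next SPI update restores exactly that marginal, this violation is at most $\onenorm{\pi_k(\cdot|xy) - \pi_{k+1}(\cdot|xy)}$ (up to a harmless $O(1)$ term at $k=1$, where the uniform initializer matches neither marginal). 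Everything thus reduces to controlling $S := \sum_{k=1}^K\sum_{xy}\nu^*(xy)\onenorm{\pi_k(\cdot|xy) - \pi_{k+1}(\cdot|xy)}$. By the arithmetic--geometric-mean inequality and Pinsker in each state pair, $S \le \frac{\alpha K}{2} + \frac{1}{\alpha}\sum_k\sum_{xy}\nu^*(xy)\kldiv{\pi_{k+1}(\cdot|xy)}{\pi_k(\cdot|xy)}$ for any $\alpha>0$; and the conditional-KL sum is bounded --- this time by keeping the $-\kldiv{\pi_{k+1}}{\pi_k}$ term of the three-point identity --- by $\HH{\mu^*}{\mu_1}$ plus $\eta$ times a regret-type quantity that I would re-expand (splitting $\pi^* - \pi_{k+1}$ through $\tpi_k$ and $\pi_k$) into a copy of $-R \le 0$ plus another copy of (constant)$\cdot\frac{\infnorm{c}}{1-\gamma}S$. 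Choosing $\alpha$ of order $\frac{\eta\infnorm{c}}{1-\gamma}$ makes the coefficient of $S$ on the right-hand side at most $\frac12$, which I solve to obtain $S$ of order $\frac{\eta\infnorm{c}K}{1-\gamma} + \frac{(1-\gamma)\log(|\X||\Y|)}{\eta\infnorm{c}}$, so the cumulative rounding error is of order $\frac{\eta\infnorm{c}^2 K}{(1-\gamma)^2} + \frac{\log(|\X||\Y|)}{\eta}$.

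\textbf{Combine and tune.} Adding the three contributions gives $R$ of order $\frac{\log(|\X||\Y|)}{\eta} + \frac{\eta K\infnorm{c}^2}{(1-\gamma)^2}$, hence $V^{\piout}(x_0y_0) - \mathbb{W}_\gamma$ is of order $\frac{\log(|\X||\Y|)}{(1-\gamma)\eta K} + \frac{\eta\infnorm{c}^2}{(1-\gamma)^3}$; optimizing over $\eta$ yields $\eta$ of order $\frac{1-\gamma}{\infnorm{c}}\sqrt{\frac{\log(|\X||\Y|)}{K}}$ (matching the stated choice up to the numerical constant) and a bound of order $\frac{\infnorm{c}}{(1-\gamma)^2}\sqrt{\frac{\log(|\X||\Y|)}{K}}$, which is at most $\varepsilon$ once $K$ is at least a constant times $\frac{\infnorm{c}^2\log(|\X||\Y|)}{(1-\gamma)^4\varepsilon^2}$, as claimed. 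The main obstacle is the self-referential estimate in the rounding step: the rounding error is controlled by how much consecutive iterates move, yet bounding that movement via the three-point identity reintroduces the rounding error, so one must route the argument through the optimality of $\mu^*$ (to discard the leading regret term) and tune the free parameter so the self-reference has coefficient below one --- the same mechanism as in the proof of Theorem~\ref{thm:cumulative_delta}, but carried out while keeping the $(1-\gamma)$-powers sharp enough to land on $(1-\gamma)^{-4}$. A lesser point is checking that the rounding of \citet{altschuler2017near}, applied per state pair to a distribution that already has one correct marginal, costs only the other marginal's violation.
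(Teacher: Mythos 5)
Your proposal is essentially correct and follows the same skeleton as the paper's argument: reduce via the performance-difference lemma to a per-state-pair online regret $\sum_{xy}\nu^*(xy)\sum_k\iprod{\tpi_k(\cdot|xy)-\pi^*(\cdot|xy)}{Q^{\tpi_k}(xy,\cdot)}$ with losses bounded by $\infnorm{c}/(1-\gamma)$, average with the $\nu^*$ weights to turn the KL-to-initializer terms into $\HH{\mu^*}{\mu_1}\le\log|\X||\Y|$, and divide by $1-\gamma$ (using that $\muout$ is valid, so no final rounding is needed) before tuning $\eta$. The one place where you genuinely diverge is the key lemma for the per-state-pair regret: the paper simply invokes Theorem~3.1 of \citet{BB23}, whose Mirror Sinkhorn guarantee already bounds the regret of the \emph{rounded} iterates $\tpi_k(\cdot|xy)$ against adaptively chosen bounded losses, with the sharp constant $9/8$; you instead re-derive that guarantee from scratch, splitting into the unrounded OMD regret plus a rounding gap, bounding the gap by marginal violations and consecutive-iterate movement (per-state-pair analogues of Lemmas~\ref{lemma:Delta_to_delta} and~\ref{lem:delta_to_mu}), and closing the resulting self-referential estimate exactly as in Theorem~\ref{thm:cumulative_delta}, discarding the extra regret copy via optimality of $\mu^*$. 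All of these steps check out (including the observation that $\pi_k(\cdot|xy)$ always satisfies the marginal enforced at the previous round, so only the other marginal's violation enters, and your separate treatment of $k=1$), so your route works and has the virtue of being self-contained rather than resting on an external theorem. What it costs is constants: your rounding-gap recursion produces a per-round term of the form $C\eta\infnorm{c}^2/(1-\gamma)^2$ with $C$ substantially larger than $9/8$, so with the specific $\eta=\frac{1-\gamma}{3\infnorm{c}}\sqrt{8\log|\X||\Y|/K}$ and the constant $5$ stated in Theorem~\ref{thm:main2} your bound does not quite close; you recover the same rate $K\gtrsim\infnorm{c}^2\log(|\X||\Y|)/((1-\gamma)^4\varepsilon^2)$ but would have to re-tune $\eta$ and state a larger numerical constant, which you acknowledge. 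A last small point: one should also note (as you do implicitly) that the losses $Q^{\tpi_k}$ depend on the rounded iterates themselves, which is harmless because the per-state-pair regret analysis is of online type and does not require oblivious losses.
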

As the analyses of all regularized policy iteration methods listed above, this one also starts with establishing the following claim 
that corresponds to the classic \emph{performance difference lemma}
(often attributed to \citealp{KL02}, but proposed much earlier in works like \citealp{Cao99} and even \citealp{How60}). 
To state the result, we define $V_k(xy) = \sum_{x'y'} \tpi_k(x'y'|xy) Q_k(xy,x'y')$ and the operators $E_+:\real^{\X\Y\times\X\Y} 
\ra \real^{\X\Y}$ and $E_-:\real^{\X\Y\times\X\Y} \ra \real^{\X\Y}$ with each element given by $(E_-V)(xy,x'y') = V(xy)$ and 
$(E_+V)(xy,x'y') = V(x'y')$. Then, the following bound holds on the 
instantaneous regret of SPI in round $k$.
\begin{lemma} $\iprod{\tmu_k - \mu^*}{c} = \iprod{\mu^*}{E_-V_k - Q_k}$.
\end{lemma}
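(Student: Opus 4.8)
The plan is to prove the identity by a short direct calculation, relying only on (i) the Bellman equation defining $Q_k$, (ii) the identity from Section~\ref{sec:LP} that relates inner products against an occupancy coupling to value functions, and (iii) the flow constraint \eqref{eq:const_flow} satisfied by the valid occupancy coupling $\mu^*$ (which lies in $\BB$).

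First I would record two preliminary facts. Since $Q_k$ is the unique solution of $Q_k = \TT^{\tpi_k} Q_k$ and $V_k(xy) = \sum_{x'y'}\tpi_k(x'y'|xy)Q_k(xy,x'y')$, unrolling the Bellman operator by one step yields the pointwise identity
\[
 Q_k(xy,x'y') = c(xy) + \gamma V_k(x'y'),
\]
that is, $Q_k = E_-c + \gamma E_+V_k$ in the lift-operator notation introduced for the lemma. Moreover $V_k$ is exactly the value function $V^{\tpi_k}$ of $\tpi_k$, so the computation in Section~\ref{sec:LP} (which rewrites $\int c_\gamma\,\dd\MXY$ as $\iprod{\mu^\pi}{c}/(1-\gamma) = V^\pi(x_0y_0)$) specializes, using $\tmu_k = \mu^{\tpi_k}$ and $\nu_0 = \delta_{x_0y_0}$, to $\iprod{\tmu_k}{c} = (1-\gamma)V_k(x_0y_0)$.

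Next I would expand the right-hand side of the claim. By linearity and the Bellman identity above,
\[
 \iprod{\mu^*}{E_-V_k - Q_k} = \iprod{\mu^*}{E_-V_k} - \iprod{\mu^*}{c} - \gamma\iprod{\mu^*}{E_+V_k}.
\]
Taking adjoints of the lift operators gives $\iprod{\mu^*}{E_-V_k} = \iprod{E\transpose\mu^*}{V_k}$ and $\iprod{\mu^*}{E_+V_k} = \iprod{E\mu^*}{V_k}$, with $E,E\transpose$ as defined in Appendix~\ref{app:main}. The flow constraint \eqref{eq:const_flow} for $\mu^*$ reads $E\transpose\mu^* = \gamma E\mu^* + (1-\gamma)\nu_0$, hence
\[
 \gamma\iprod{E\mu^*}{V_k} = \iprod{E\transpose\mu^*}{V_k} - (1-\gamma)\iprod{\nu_0}{V_k} = \iprod{E\transpose\mu^*}{V_k} - (1-\gamma)V_k(x_0y_0).
\]
Substituting this back, the two copies of $\iprod{E\transpose\mu^*}{V_k}$ cancel and we are left with $\iprod{\mu^*}{E_-V_k - Q_k} = (1-\gamma)V_k(x_0y_0) - \iprod{\mu^*}{c}$, which equals $\iprod{\tmu_k}{c} - \iprod{\mu^*}{c} = \iprod{\tmu_k - \mu^*}{c}$ by the first preliminary fact, proving the lemma.

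There is no genuine difficulty here: once the two preliminary identities are in place the argument is pure bookkeeping. The only points that warrant care are fixing the orientations of the lift/marginalization operators $E_-$, $E_+$, $E$, $E\transpose$ correctly, and noting that the flow constraint is applied to $\mu^*$ (which is a valid occupancy coupling, hence in $\BB$) rather than to the $\tmu_k$'s; it is precisely because $\nu_0$ is a point mass at $(x_0,y_0)$ that $\iprod{\nu_0}{V_k}$ collapses to $V_k(x_0y_0)$, matching $\iprod{\tmu_k}{c}/(1-\gamma)$.
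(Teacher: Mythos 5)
Your proposal is correct and follows essentially the same route as the paper's proof: multiply the Bellman identity $Q_k = c + \gamma E_+V_k$ against $\mu^*$, apply the flow constraint satisfied by $\mu^*$, and use $\iprod{\tmu_k}{c} = (1-\gamma)\iprod{\nu_0}{V_k}$. The only cosmetic differences are your use of the $E,E\transpose$ marginalization operators in place of the paper's $E_\pm\transpose$ notation and your specialization of $\iprod{\nu_0}{V_k}$ to $V_k(x_0y_0)$, which is harmless under the standing assumption that $\nu_0$ is a point mass.
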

\begin{proof}
 The proof follows from elementary properties of occupancy couplings. First, note that by the definition of the value 
function $V_k$, we have
 \[
  \iprod{\tmu_k}{c} = (1-\gamma) \iprod{\nu_0}{V_k}.
 \]
 Furthermore, by multiplying both sides of the Bellman equations $Q_k = c + \gamma E_+ V_k$ with $\mu^*$ and using 
the flow constraints $E_+\transpose \mu^* = \gamma E_-\transpose \mu^* + (1-\gamma) \nu_0$, we obtain
 \begin{align*}
  \iprod{\mu^*}{Q_k} &= \iprod{\mu^*}{c + \gamma E_+ V_k} = 
  \iprod{\mu^*}{c + E_-V_k} - (1-\gamma) \iprod{\nu_0}{V_k}.
 \end{align*}
 The result follows after reordering the terms.
\end{proof}
Given the above lemma, we can readily express the regret of SPI as follows:
\begin{align*}
 \sum_{k=1}^K \iprod{\tmu_k - \mu^*}{c} &= \sum_{k=1}^K \iprod{\mu^*}{E_-V_k - Q_k}
 \\
 &= \sum_{xy} \nu^*(xy) \sum_{k=1}^K \iprod{\pi^*(\cdot|xy) - \tpi_k(\cdot|xy)}{Q_k(xy,\cdot)},
\end{align*}
where we can recognize the regret of Mirror Sinkhorn in each state pair $xy\in\X\Y$. Thus, applying the bound of 
Theorem~3.1 of \citet{BB23}  to each of these terms (while noting that $\infnorm{Q_k} \le 
\infnorm{c}/\pa{1-\gamma}$ holds for all $k$) gives
\[
 \sum_{k=1}^K \iprod{\pi^*(\cdot|xy) - \tpi_k(\cdot|xy)}{Q_k(xy,\cdot)} \le 
\frac{\DDKL{\pi^*(\cdot|xy)}{\pi_1(\cdot|xy)}}{\eta} + \frac{9\eta \infnorm{c}^2 K}{8(1-\gamma)^2},
\]
and thus putting the bounds together we obtain
\begin{align*}
 \iprod{\muout - \mu^*}{c} &=  \frac 1K \sum_{k=1}^K \iprod{\tmu_k - \mu^*}{c} \le 
\frac{\HH{\mu^*}{\mu_1}}{\eta K } + \frac{9\eta \infnorm{c}^2}{8(1-\gamma)^2}
\end{align*}
Now, setting $\pi_1$ as the uniform coupling, we can further upper bound the conditional relative entropy as 
$\HH{\mu^*}{\mu_1} \le \log |\X||\Y|$, and after setting $\eta = \frac {1-\gamma} {3\infnorm{c}} \sqrt{\frac{8\log 
|\X||\Y|}{K}}$, the bound becomes
\[
 \iprod{\muout - \mu^*}{c} \le \frac{6\infnorm{c}}{1-\gamma}\sqrt{\frac{\log |\X||\Y|}{8K}}.
\]
Finally, note that 
\begin{align*}
V^\piout(x_0y_0) - \mathbb{W}_\gamma(\MX,\MY;c,x_0,y_0) &= \frac 1 {1-\gamma} \iprod{\muout - \mu^{*}}{c} \leq \frac{6\infnorm{c}}{(1-\gamma)^2}\sqrt{\frac{\log |\X||\Y|}{8K}}.
\end{align*}
Using a crude upper bound $36/8\leq 5$ verifies the claim of Theorem~\ref{thm:main2}.

\subsection{Relation to Sinkhorn Value Iteration}
While on the surface, SPI may seem only loosely related to SVI, a closer connection can be drawn by making the 
following observations. First, observe that the transition-coupling updates exactly match the updates of 
Sinkhorn Value Iteration, although there is an apparent difference in how the Q-functions are defined. To expose the 
similarity 
between the two methods better, let us consider an even round in which $\pi_k$ satisfies the $\X$-marginal conditions 
$\sum_{y'}\pi_k(x'y'|xy) = P(x'|x)$. Thus, introducing the notation $\VX(xy,x') = \sum_{y'} 
\frac{\pi_{k}(x'y'|xy)}{P(x'|x)} Q_k(xy,x'y')$, we can multiply the Bellman equations by $\pi_k(x'y'|xy) / 
P(x'|x)$ and sum them up to obtain
\begin{align*}
 \VX(xy,x') =& \sum_{y'}\frac{\pi_k(x'y'|xy)}{P(x'|x)}\pa{c(xy) + \gamma \sum_{x''y''} \pi_{k}(x''y''|x'y') 
Q_k(x'y',x''y'')}
 \\
 =& \sum_{y'}\frac{\pi_k(x'y'|xy)}{P(x'|x)}\pa{c(xy) + \gamma \sum_{x''} P(x''|x') \VX(x'y',x'')}
 \\
 \approx& -\frac{1}{\eta} \log \sum_{y'}\frac{\pi_k(x'y'|xy)}{P(x'|x)}\exp\pa{-\eta \pa{c(xy) + \gamma \sum_{x''} 
P(x''|x') \VX(x'y',x'')}},
\end{align*}
where the approximation is accurate as $\eta$ approaches zero. Thus, for small values of $\eta$, the Sinkhorn--Bellman 
operator used by Sinkhorn Value Iteration is an accurate approximation of the Bellman operator used by Sinkhorn Policy 
Iteration, and thus one may reasonably expect their respective fixed points to be close as well (this intuition may, 
however easily fail and is not necessary for our analysis above).\looseness=-1

\newpage
\section{Auxiliary proofs and technical results}

In this appendix we prove several technical results from the main text.

\subsection{Proof of Proposition \ref{prop:MDequivalence}}\label{sec:MDequivalence}
We prove the claim by showing that the solution of the constrained optimization problem of Equation~\eqref{eq:MDupdate} 
is equivalent to the transition-coupling update rule specified in Equation~\eqref{eq:update}. To this end, we study the 
Lagrangian of the optimization problem~\eqref{eq:MDupdate} corresponding to the update for odd rounds, $\BX$, and note that 
the update rule for even rounds,  $\BY$, can be worked out analogously. By introducing Lagrange multipliers $\VX(xy,x')$ 
for each constraint in $\BX$, we obtain the Lagrangian
\begin{align*}
 \LL(\mu;\VX) =& \iprod{\mu}{c} + \frac{1}{\eta} \HH{\mu}{\mu_k} 
 \\
 &+ \sum_{xy,x'} \VX(xy,x') 
 \pa{\pa{\gamma \sum_{x''y''} \mu(x''y'',xy)  + (1-\gamma) \nu_0(xy)} \PPX(x'|x) - \sum_{y'} \mu(xy,x'y') }
 \\
 =&
 \sum_{xy,x'y'} \mu(xy,x'y') \pa{c(xy) + \gamma \sum_{x''} \PPX(x''|x')\VX(x'y',x'') - \VX(xy,x')} 
 \\
 &+ (1-\gamma) \sum_{xy,x'}\nu_0(xy) \PPX(x'|x)\VX(xy,x') + \frac{1}{\eta} \HH{\mu}{\mu_k}.
\end{align*}

A quick calculation (cf.~Appendix~A.1 of \citealp{NJG17}) shows that the derivative of $\HH{\mu}{\mu_k}$ satisfies 
\[
\frac{\partial \HH{\mu}{\mu_k}}{\partial \mu(xy,x'y')} = \log \pi_\mu(x'y'|xy) - \log \pi_{\mu_k}(x'y'|xy) = \log 
\pi_\mu(x'y'|xy) - \log \pi_k(x'y'|xy),
\]
where we have used that $\pi_{\mu_k} = \pi_k$ holds by definition of $\mu_k$ and $\pi_k$. To proceed, for a fixed 
$\VX$, we set the gradient of the Lagrangian to zero and solve for the transition coupling $\pi_{k+1}$, which gives
\begin{equation*}
     \pi_{k+1}(x'y'|xy) = \pi_{k}(x'y'|xy) \exp\pa{-\eta \pa{ c(xy) + \gamma \sum_{x''} \PPX(x''|x') \VX(x'y',x'') - 
\VX(xy,x')}}.
\end{equation*}
Then, the correct choice of $\VX$ has to be such that the constraint $\sum_{y'} \pi_{k+1}(x'y'|xy) = \PPX(x'|x)$ is 
satisfied. To see this, suppose that this condition is indeed verified and that $\mu_{k+1}$ is the occupancy coupling 
associated with $\pi_{k+1}$. We need to show that $\mu_{k+1}$ indeed verifies the condition defining $\BX$. To this 
end, notice that $\mu_{k+1}$ satisfies Equation~\eqref{eq:const_flow}, and thus the condition can be simply written as
\[
 \sum_{y'} \mu_{k+1}(xy,x'y') = \pa{\sum_{x''y''} \mu_{k+1}(xy,x''y'')} \PPX(x'|x),
\]
which, after recalling the relation $\pi_{k+1}(x'y'|xy) = \frac{\mu(xy,x'y')}{\sum_{x''y''} \mu(xy,x''y'')}$ can be 
indeed seen to hold if $\sum_{y'} \pi_{k+1}(x'y'|xy) = \PPX(x'|x)$ is true.

Enforcing this constraint gives the following expression for $\VX(xy,x')$:
\begin{equation*}
 \VX(xy,x') = -\frac{1}{\eta} \log \sum_{y'} \frac{\pi_{k}(x'y'|xy)}{\PPX(x'|x)} \exp\pa{-\eta \pa{ c(xy) + \gamma 
\sum_{x''} \PPX(x''|x')  \VX(x'y',x'')}}.
\end{equation*}
To conclude, we need to ensure that this system of equations has a unique solution. In order to do this, 
we recall the definition of the Bellman--Sinkhorn operator $\TT^{\pi_k}_\X: \real^{\X\Y\times\X} \rightarrow \real^{\X\Y\times\X}$, 
acting on a function $f$ as
\begin{equation*}
 (\TT^{\pi_k}_\X f)(x'y',x'') = -\frac{1}{\eta} \log \sum_{y'} \frac{\pi_{k}(x'y'|xy)}{\PPX(x'|x)} \exp\pa{-\eta \pa{ 
c(xy) + \gamma \sum_{x''} \PPX(x''|x') 
f(x'y',x'')}}.
\end{equation*}
With this notation, we can directly verify that $\VX$ satisfies $\TT^{\pi_k}_\X \VX = \VX$. Furthermore, it can be 
shown that the operator $\TT^{\pi_k}_\X$ is a $\gamma$-contraction in supremum norm (cf.~Lemma 
\ref{lemma:contraction}), and thus it has a unique fixed point by the Banach fixed-point theorem. We finally note that 
defining $Q_k(xy,x'y')=\pa{ c(xy) + \gamma \sum_{x''} \PPX(x''|x') \VX(x'y',x'')}$, the transition-coupling update 
derived above can be rewritten as 
\begin{equation}
     \pi_{k+1}(x'y'|xy) = \frac{\pi_{k}(x'y'|xy) \exp\pa{-\eta Q_k(xy, x'y')}}{\sum _{y'} \pi_{k}(x'y''|xy) 
\exp\pa{-\eta Q_k(xy, x'y'')}}\PPX(x'|x).
\end{equation}
This concludes the proof.
\qed

\subsection{Rounding procedure and proof of Lemma \ref{lemma:Delta_to_delta}}\label{app:rounding}
\label{app:Delta_to_delta}

\begin{wrapfigure}{r}{0.5\textwidth}
\vspace{-.5cm}
\begin{minipage}{0.51\textwidth}
\begin{algorithm}[H]
\caption{Rounding procedure for couplings\hspace*{-.2cm}}\label{alg:round}
\textbf{Input: } approximate coupling $F$, margins $p$, $q$\\
$X \gets \diag(\min(p/(F\cdot\mathbf{1}),\mathbf{1}))$\;
$F' \gets XF$\;
$Y \gets \diag(\min(q/(F'^\top\cdot\mathbf{1}),\mathbf{1}))$\;
$F'' \gets F'Y$\;
$\text{err}_p=p-F''\cdot \mathbf{1}$, $\text{err}_q=q-F''^\top\cdot\mathbf{1}$\;
\textbf{Output: } $G\gets F'' + \text{err}_p\text{err}_q^\top / \onenorm{\text{err}_p}$
\end{algorithm}
\end{minipage}
\vspace{-.1cm}
\end{wrapfigure}

We adapt the rounding procedure stated as Algorithm~2 of \citet{altschuler2017near} and reproduced here as 
Algorithm~\ref{alg:round} (where $/$ denotes element-wise division in the pseudocode). Formally, for two probability 
distributions $p \in \Delta(\X)$  and $q \in \Delta(\Y) $, the set of valid couplings is $\mathcal{U}_{p,q} 
= \{ P \in \mathbb{R}_{+}^{\X\Y} : P \cdot \mathbf{1} = p ;\linebreak P^T \cdot \mathbf{1} = q \} $. 
 For a nonnegative matrix $ F \in 
\mathbb{R}_{+}^{\X \Y} $, 
the rounding procedure outputs 
a valid coupling $ \rho(F, p, q) \in \mathcal{U}_{p,q} $ which, by Lemma~7 of \citet{altschuler2017near}, 
satisfies
\begin{equation*}
	\norm{\rho(F, p, q)-F}_1 \leq 2 \pa{\norm{F\cdot \mathbf{1}-p}_1 + \norm{F^T \cdot \mathbf{1} - q}_1
}.
\end{equation*}

We will now define the rounding procedure for a (not necessarily valid) transition coupling $ \pi \in 
\mathbb{R}^{\X\Y\times\X\Y}$ by using the aforementioned rounding procedure at each state pair as
\begin{equation*}
	\tilde{\pi}(\cdot|xy) = \rho(\pi(\cdot|xy), \PPX(\cdot |x), \PPY(\cdot |y)).
\end{equation*}
With some abuse of notation, we will write the resulting transition coupling as $\wt{\pi} = \rho(\pi)$ and the 
associated occupancy coupling as $\wt{\mu} = \mu^{\wt{\pi}} = \rho(\mu)$.
Because of the correctness of the original rounding procedure, this transition coupling is valid, and so is the 
associated occupancy coupling. We can now proceed to the proof of Lemma~\ref{lemma:Delta_to_delta}.
\begin{proof}[Proof of Lemma~\ref{lemma:Delta_to_delta}]
Let $ \mu\in \mathbb{R}^{\X\Y\times\X\Y}_{+} $, and as before define $ \nu_{\mu}(xy) = \sum_{x'y'} \mu(xy, x'y') $ and
\begin{align*}
	\pi_\mu(x^{\prime}y^{\prime}|xy) =
\begin{cases}
	\frac{\mu(xy,x^{\prime}y^{\prime})}{\nu_{\mu}(xy)} \text{ if } \nu_\mu(xy) \neq 0, \\
	\PPX(x^{\prime}|x)\PPY(y^{\prime}|y) \text{ otherwise} .
\end{cases}
\end{align*}
For arbitrary state pairs $ xy $, we use Lemma 7 of \citet{altschuler2017near} to obtain that
\begin{equation*}
	\norm{\tilde{\pi}_\mu(\cdot |xy) - \pi_\mu(\cdot |xy)}_1 \leq 2 \left[
	\sum_{x^{\prime}} \left\lvert\PPX(x^{\prime}|x) - \sum_{y^{\prime}} \pi_\mu(x^{\prime}y^{\prime}|xy)\right\rvert + \sum_{y^{\prime}}
\left\lvert\PPY(y^{\prime}|y) - \sum_{x^{\prime}} \pi_\mu(x^{\prime}y^{\prime}|xy)\right\rvert \right]	.
\end{equation*}
Now, multiplying by $ \nu_\mu(xy) $ and summing over $ xy $, we get 
\begin{align*}
	&\phantom{=}\sum_{xy} \nu_\mu(xy)\norm{\tilde{\pi}_{\mu}(\cdot |xy) - \pi_{\mu}(\cdot |xy)}_{1}\\
	&\leq 2 \left[
	\sum_{xyx^{\prime}}\nu_\mu(xy)\left\lvert\PPX(x^{\prime}|x) - \sum_{y^{\prime}}
\pi_{\mu}(x^{\prime}y^{\prime}|xy)\right\rvert +
\sum_{xyy^{\prime}}\nu_\mu(xy)\left\lvert\PPY(y^{\prime}|y) - \sum_{x^{\prime}} 
\pi_{\mu}(x^{\prime}y^{\prime}|xy)\right\rvert \right] \\
									    &= 2 \left[
	\sum_{xyx^{\prime}}\left\lvert\nu_\mu(xy)\PPX(x^{\prime}|x) - \sum_{y^{\prime}}\mu(xy,x^{\prime}y^{\prime})\right\rvert +
\sum_{xyy^{\prime}}\left\lvert\nu_\mu(xy)\PPY(y^{\prime}|y) - \sum_{x^{\prime}} \mu(xy,x^{\prime}y^{\prime})\right\rvert \right] \\
									    &= 2 \delta_\X(\mu) + \delta_\Y(\mu) =  2 \delta(\mu),
\end{align*}
where we used the fact that $ \mu(xy, x^{\prime}y^{\prime}) = \nu_\mu(xy) \pi_{\mu}(x^{\prime}y^{\prime}|xy) $ for any $
xy, x^{\prime}y^{\prime} $, and the definitions $\delta_\X$, $\delta_\Y$ and $\delta$. Finally, we 
notice that $ \Delta(\mu) =\sum_{xy} \nu_\mu(xy)\norm{\tilde{\pi}_{\mu}(\cdot |xy) -
\pi_{\mu}(\cdot |xy)}_{1} $, which concludes the proof.
\end{proof}

\subsection{The contraction property of the Bellman--Sinkhorn operator}\label{sec:contraction}
\begin{lemma}
\label{lemma:contraction}
Let $\pi:\X\Y\ra\Delta_{\X\Y}$ be arbitrary and consider the associated Bellman--Sinkhorn operator $\mathcal{T}^\pi$ 
acting on a function $f:\X\Y\times\Y\ra\real$ as 
\[
(\mathcal{T}^\pi f)(x'y',x'')  = -\frac{1}{\eta} \log \sum_{y'} 
\frac{\pi(x'y'|xy)}{P(x'|x)}
\exp\pa{-\eta \pa{ c(xy) + \gamma \sum_{x''} P(x''|x')  f(x'y',x'')}}.
\]
Then, $\mathcal{T}^\pi$ is a $\gamma$-contraction for the supremum norm $\infnorm{\cdot}$ , that is, for any two 
functions $f_1,f_2:\X\Y\times\X$, we have  
\[
\infnorm{\mathcal{T}^\pi f_1 - \mathcal{T}^\pi (f_2) 
} \leq \gamma \infnorm{f_1 - f_2}.
\]
\end{lemma}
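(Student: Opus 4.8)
The plan is to split the statement into two independent facts whose composition gives the contraction: (i) the ``log-sum-exp'' (softmin) map that defines $\mathcal{T}^\pi$ is nonexpansive in the supremum norm, and (ii) the inner Bellman-style averaging $\sum_{x''} P(x''|x') f(x'y',x'')$ is itself nonexpansive, so that the $\gamma$ prefactor is the only source of shrinkage.

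First I would record the auxiliary observation that for any finite family of nonnegative weights $\pa{w_i}_i$ (\emph{not} assumed to sum to one) and any real vectors $a = \pa{a_i}_i$, $b = \pa{b_i}_i$, the map $G(a) = -\frac1\eta\log\sum_i w_i\exp\pa{-\eta a_i}$ satisfies $\babs{G(a) - G(b)} \le \infnorm{a-b}$. To see one direction, start from $a_i \le b_i + \infnorm{a-b}$, push it through the decreasing map $t\mapsto\exp(-\eta t)$, multiply by $w_i\ge 0$ and sum, and finally push the resulting inequality through the decreasing map $t\mapsto-\frac1\eta\log t$; the two sign reversals cancel and one obtains $G(a)\le G(b)+\infnorm{a-b}$, and symmetry gives the reverse bound. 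The only property of the weights used here is nonnegativity, which is worth emphasizing because for a general $\pi:\X\Y\ra\Delta_{\X\Y}$ the weights $w_{y'} = \pi(x'y'|xy)/P(x'|x)$ appearing in $\mathcal{T}^\pi$ need not be normalized.

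Then I would apply this coordinatewise. Fixing an index $(xy,x')$ and setting $a_{y'} = c(xy) + \gamma\sum_{x''}P(x''|x')f_1(x'y',x'')$ and $b_{y'} = c(xy) + \gamma\sum_{x''}P(x''|x')f_2(x'y',x'')$, the common offset $c(xy)$ cancels in the difference, leaving $a_{y'} - b_{y'} = \gamma\sum_{x''}P(x''|x')\bpa{f_1(x'y',x'') - f_2(x'y',x'')}$. Since $P(\cdot|x')$ is a probability distribution, the triangle inequality gives $\babs{a_{y'}-b_{y'}} \le \gamma\infnorm{f_1-f_2}$ uniformly in $y'$, hence $\infnorm{a-b}\le\gamma\infnorm{f_1-f_2}$. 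Combining with the auxiliary observation yields $\babs{(\mathcal{T}^\pi f_1)(xy,x') - (\mathcal{T}^\pi f_2)(xy,x')}\le\gamma\infnorm{f_1-f_2}$, and taking the supremum over all indices finishes the proof; the analogous operator on functions of $\X\Y\times\Y$ is handled identically with the roles of $\X$ and $\Y$ exchanged.

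I do not expect a real obstacle here --- this is a routine sup-norm contraction argument in the style of classical value-iteration proofs. The one place that needs care is the bookkeeping in step (i): both the outer $-\frac1\eta\log(\cdot)$ and the inner $\exp(-\eta\,\cdot)$ reverse inequalities, and these must be composed consistently; the minor conceptual point worth flagging explicitly is that the nonexpansiveness of log-sum-exp does not rely on the weights summing to one, so the contraction holds for an arbitrary (not necessarily valid) transition coupling $\pi$.
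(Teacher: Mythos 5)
Your proposal is correct and follows essentially the same route as the paper's proof: both reduce the claim to the nonexpansiveness of the (scaled, negated) log-sum-exp map in the supremum norm, combined with the observation that the inner average $\gamma\sum_{x''}P(x''|x')f(x'y',x'')$ contracts by $\gamma$ since $P(\cdot|x')$ is a probability distribution. The only difference is that the paper cites the $1$-Lipschitz property of $z\mapsto\log\sum_{y'}p(y')e^{z(y')}$ as a standard fact, whereas you derive it directly by monotonicity and note that it requires only nonnegative (not normalized) weights --- a worthwhile remark, since for an arbitrary $\pi$ the weights $\pi(x'y'|xy)/P(x'|x)$ need not sum to one.
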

\begin{proof}
The claim easily follows from using the standard fact that the function $g_p(z) = \log \sum_{y'} p(y') e^z(y')$ is 
$1$-smooth with respect to the supremum norm, so that for any two vectors $z$, we have $\abs{g_p(z) - g_p(z)} \le 
\infnorm{z - z'}$. To apply this result, we define $q(y'|xy) = 
\pi(x'y'|xy) / P(x'|x)$ and $z_1(xy,x'y') = c(xy) + \gamma \sum_{x''} P(x''|x') f_1(x'y',x'')$ and 
$z_2(xy,x'y') = c(xy) + \gamma \sum_{x''} P(x''|x') f_2(x'y',x'')$, so that we can write
\begin{align*}
 \infnorm{\mathcal{T}^\pi f_1 - \mathcal{T}^\pi f_2 } &= \max_{xy,x'} \abs{g_{q(\cdot|xy)}(z_1(xy,x'\cdot)) - 
g_{q(\cdot|xy)}(z_2(xy,x'\cdot))}
\\
&\le \max_{xy,x'} \infnorm{z_1(xy,x'\cdot) - z_2(xy,x'\cdot)} = \infnorm{z_1 - z_2} \le \gamma \infnorm{f_1 - f_2},
\end{align*}
where the last step follows from the straightforward calculation
\begin{align*}
 \infnorm{z_1 - z_2} = \sup_{xy,x'y'} \sum_{x''} P(x''|x') \abs{f_1(x'y',x'') - f_2(x'y',x'')} \le \infnorm{f_1 - f_2}.
\end{align*}
This concludes the proof.

\end{proof}

\newpage
\section{Additional experimental results}\label{app:experiments}

In this appendix we present the results of additional experiments not included in the main text.

\subsection{Impact of the regularization parameter $\eta$}

Besides the parameter $m$ that we have already studied experimentally in Section~\ref{sec:exps}, the only tuning 
parameter of SVI and SPI is the regularization parameter $\eta$, which takes the role of a learning rate. 
In this experiment, we study two random walks run on two separate 4-room environments \citep{sutton1999}, with two 
separate reward functions $r_{\X}$ and $r_{\Y}$ that together define the ground cost function $c(x,y) = 
\abs{r_{\X}(x) - r_{\Y}(y)}$ for each state pair. The results of this study for $K=2\cdot 10^{4}$ iterations are shown 
in Figure~\ref{fig:convergence}. 
The error is computed as the difference between the distance estimate produced by the algorithm and a near-optimal 
distance obtained by running Algorithm \ref{alg:main} for a very small value of $\eta$ and a large number of 
iterations. We observe that higher values of $\eta$ lead to faster error reduction in the initial steps, but 
eventually prevent convergence to the true solution. In contrast, choosing smaller learning rates enables convergence 
to better 
solutions, at the cost of making slower progress initially. An intuitive explanation for this is that for larger values 
of $\eta$, the iterates converge rapidly to the broad proximity of an optimal solution, but then reach a cycle 
where they continue to perform large updates to the transition coupling which results in large constraint violations, 
which necessitate large updates, and so on. This is formally verified by the fact that our bound on the constraint 
violation terms in Theorem~\ref{thm:cumulative_delta} are increasing for large values of $\eta$. 
Notably, employing a time-dependent learning-rate schedule (inspired by \citet{BB23}) with $\eta_k \sim 1/\sqrt{k}$ 
leads to the best performance. This strategy leverages faster convergence initially and, for sufficiently large 
number of iterations, also achieves near-optimal solutions.

\begin{figure}[h]
     \centering
     \begin{subfigure}[b]{0.45\textwidth}
         \centering
         \includegraphics[width=\textwidth]{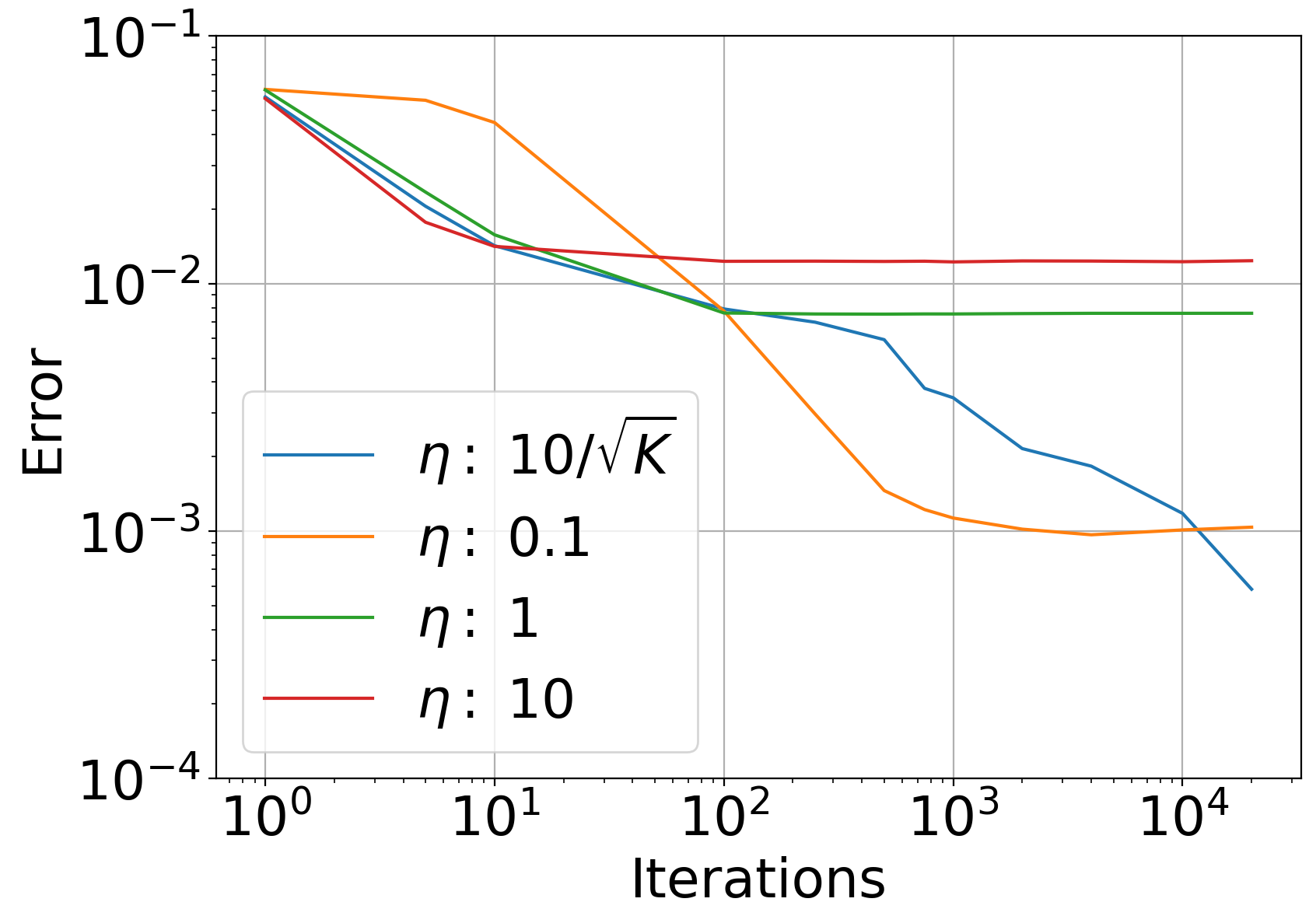}
         \caption{Algorithm \ref{alg:main}}
         \label{fig:convergence_alg1}
     \end{subfigure}
     \hfill
     \begin{subfigure}[b]{0.45\textwidth}
         \centering
         \includegraphics[width=\textwidth]{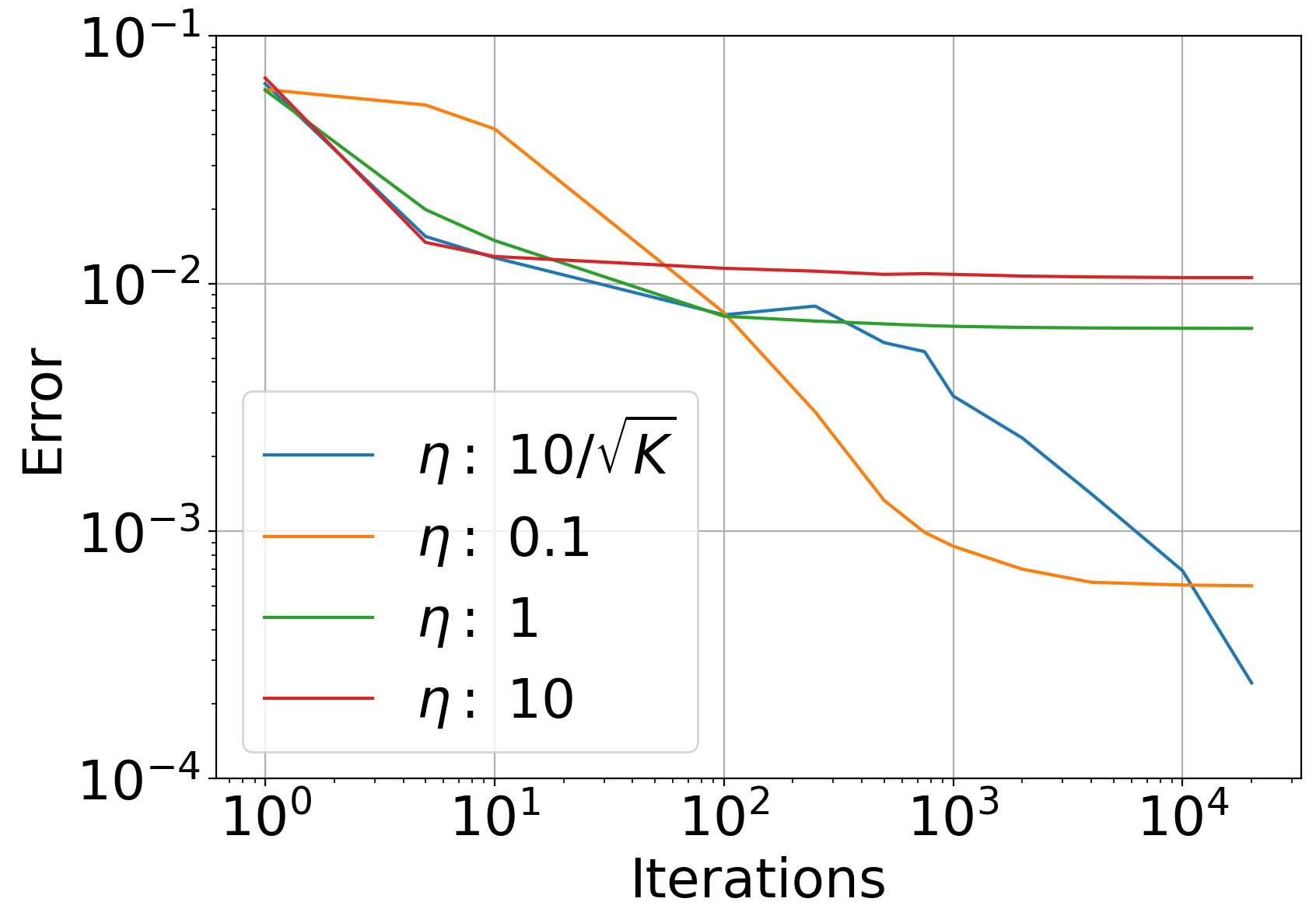}
         \caption{Algorithm \ref{alg:SPI}}
         \label{fig:convergence_alg2}
     \end{subfigure}
        \caption{Error of estimated transport cost as a function $k$, for various choices of $\eta$.}
        \label{fig:convergence}
\end{figure}

\subsection{Comparison with alternative methods}
We now turn to studying the computational complexity of our algorithms, and compare them empirically to some existing 
methods that have been proposed for computing optimal transport distances and bisimulation metrics between Markov 
chains. Specifically, we will focus here on two previous methods: the method of \citep{OMN21} that we refer to as 
``EntripicOTC'' and Algorithm~2 of \citep{BWW24} that we call ``dWL''.  We adapt both of these methods with some minor 
changes to our setting. First, we remove one scaling factor from the transport cost in the definition of the distance 
defined by \citet{BWW24} so that it matches ours. Second, the algorithm of \citet{OMN21} is originally defined 
for the infinite-horizon average-cost case, and thus we made appropriate changes to adapt it to the discounted case 
by replacing their approximate policy evaluation step by $T$ applications of the discounted Bellman evaluation 
operator. As pointed out in Appendix~\ref{sec:OT_discussion}, the resulting methods are closely related, and 
can be regarded as approximate dynamic programming methods for solving the MDP formulation of our optimal transport 
problem presented in Appendix~\ref{app:mdp}. We also recall that the algorithm proposed by \cite{kemertas2022} for the 
purpose of computing bisimulation metrics also falls into the same class of approximate dynamic programming methods, 
and nearly matches the method of \citep{BWW24}. The comparison below is based on the original Python 
implementation\footnote{\url{https://github.com/yusulab/ot_markov_distances}} of \citet{BWW24} and our own Python 
adaptation of the MATLAB code\footnote{\url{https://github.com/oconnor-kevin/OTC}} of \citet{OMN21}.

One difficulty that we had to face in these experiments is having to tune various hyperparameters of each method 
(such as number of iterations and regularization parameter), which can each influence the quality of the solution and 
the computation time. For a fair comparison between the methods, we have adopted the following procedure to obtain our 
results. First, we estimate a ground truth obtained by running one of the algorithms for a very high number of 
iterations and a very low level of regularization, and then use this ground truth as a comparator to adjust the 
hyperparameters of the algorithms so that they are close to this value in as little wall-clock time as possible.
While all the algorithms perform similar operations, their total runtimes turn out to be rather different and heavily 
dependent on problem parameters such as the discount factor $\gamma$.  The comparison between the resulting runtimes of 
each method is shown on Figure~\ref{fig:comp-time-comparison} as a function of the size of the Markov chains, and for a 
three different choices of the discount factor, for a set of randomly generated MDPs (following the setup described in 
Section~7.1 of \citealp{OMN21}).

First, we observe that EntropicOTC is a policy-iteration-like method, and as such it needs fewer iterations than the 
rest of the algorithms we tested to converge to the true cost, but each iteration requires running an expensive policy 
evaluation subroutine until convergence. This computational cost eventually adds up in a way that this algorithm has 
always ended up being the slowest among all that we have tested, although its performance has proved notably robust to 
changes in the discount factor $\gamma$. Second, we note that the updates of dWL are much cheaper to compute, especially 
for large regularization parameters. However, the errors of this value-iteration-like method compound much more rapidly 
than in the case of EntropicOTC, which makes it especially hard to tune the hyperparameters of this method. This problem 
is especially pronounced when the discount factor is large, which is the most interesting regime as it leads to 
distances with much stronger discrimination power. \looseness=-1

\begin{figure}[t]
	\center 
	\includegraphics[width=1\textwidth]{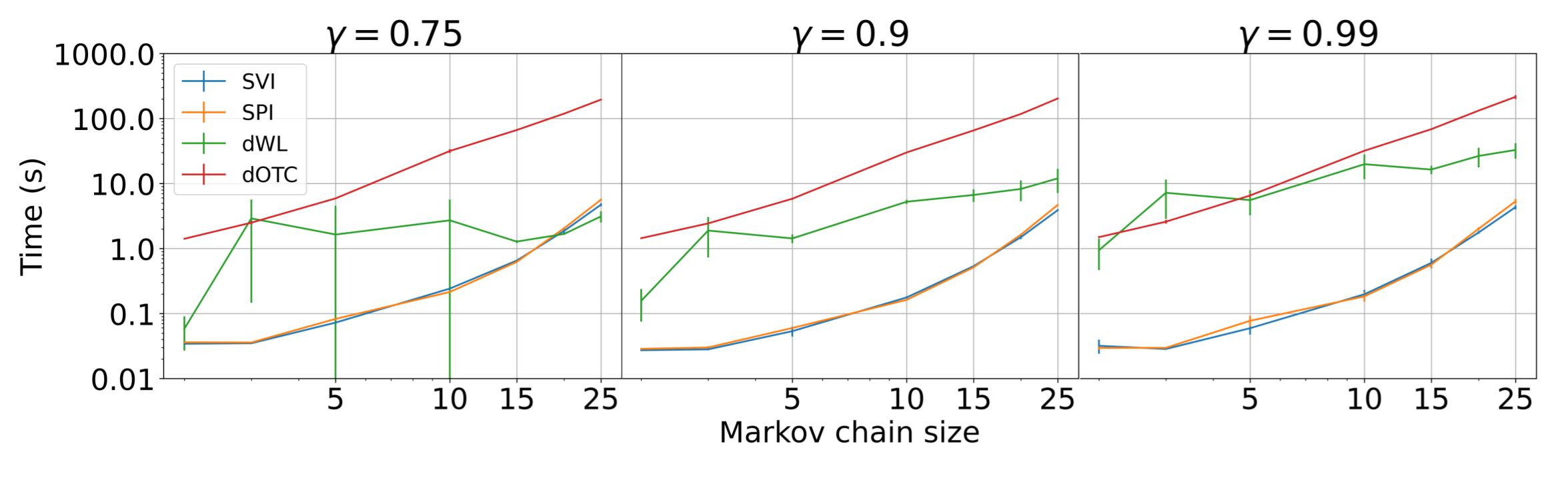} 
		\caption{Comparison of the computational time of the different methods proposed to obtain a near-optimal 
solution for different values of $\gamma$. For each Markov chain size, the results obtained in 5 randomly generated 
instances are compared, showing the standard deviation in the plot. Data is displayed on a log-log scale.}
     \label{fig:comp-time-comparison}
\end{figure}

The plots shown on Figure~\ref{fig:comp-time-comparison} indicate that our methods find optimal couplings consistently 
more efficiently than the other methods in the regime we studied, leading to up to 10 times faster runtimes. For the 
case of sufficiently small values of $\gamma$, the algorithm of \citet{BWW24} sometimes performs competitively, but the 
hyperparameters leading to good performance are much harder to find than in the case of our methods. In our experience, 
the massive speedup achieved by our methods can be largely ascribed to maintaining the transition couplings $\pi_k$ 
between iterations, as opposed to computing these afresh by running Sinkhorn's algorithm from scratch for each update 
as done by all other competing methods. Adjusting these other algorithms by maintaining the couplings in 
memory and using them to warm-start the subsequent updates makes them competitive with our methods, and in fact doing 
so makes them quite similar to SVI and SPI. Our algorithms use such warm-starts as a primary design choice 
as opposed to an obscure implementation detail, which is ultimately responsible for the computational efficiency of all 
these dynamic-programming methods.

The computational time per iteration of each of these methods grows roughly at a rate of $n^{4}$, with $n$ being the 
number of states of the Markov chains. In the case of our methods, this is easily explained by noting that applying the 
Bellman--Sinkhorn operators and updating the transition couplings requires $|\X|^2|\Y|^2$ operations in total. This 
matches the runtime necessary for running the regularized policy improvement subroutines employed by 
\citet{OMN21} and \citet{BWW24}, which consists of running an instance of Sinkhorn's algorithm in each pair of 
states. The computational cost of all these methods can be improved by leveraging the sparsity of transition kernels: 
in particular, if at most $S$ states are reachable with positive probability in both of the chains, the complexity of 
the updates can be trivially improved to $|\X||\Y|S^2$. We did not pursue this direction in our experiments as our goal 
was to compare the basic versions of each studied method, and we believe that our conclusions would not be altered if 
we were to implement this improvement for all methods.

\subsection{Optimal transport distances as similarity metrics}
We finally provide a range of experiments that illustrate how the optimal-transport distances we studied in this paper 
can be used to capture relationships and symmetries in groups of varying Markov chains. To this end, we have generated 
35 different ``4-room'' instances and computed their pairwise distances. 
Each instance differs in its initial state and position of the obstacles (amounting to changes in the transition 
kernel), while maintaining a fixed reward function, with one reward located in each room except for the upper left 
room. Crossing a door between each room results in a negative reward. For each instance, we have studied the Markov 
chain induced by the corresponding optimal policy (which amounts to taking the shortest path toward the closest 
positive reward, modulo the additional randomness inherent to the transitions).
Figure~\ref{fig:MDS} presents a 3D visualization (where the $z$-dimension is represented by a color gradient) generated 
using Multidimensional Scaling (MDS) \citep{kruskal1964} based on the computed pairwise distances.
One can observe a clear clustered structure, where instances with similar behaviors are grouped closely together. As 
the figure highlights, the resulting metrics capture the intuitive similarities and symmetries between each process, 
which indicates the potential usefulness of optimal transport distances and bisimulation metrics for comparing Markov 
chains under minimal structural assumptions made on the state spaces and the transition functions.

\begin{figure}[h!]
	\center 
	\includegraphics[width=0.6\textwidth]{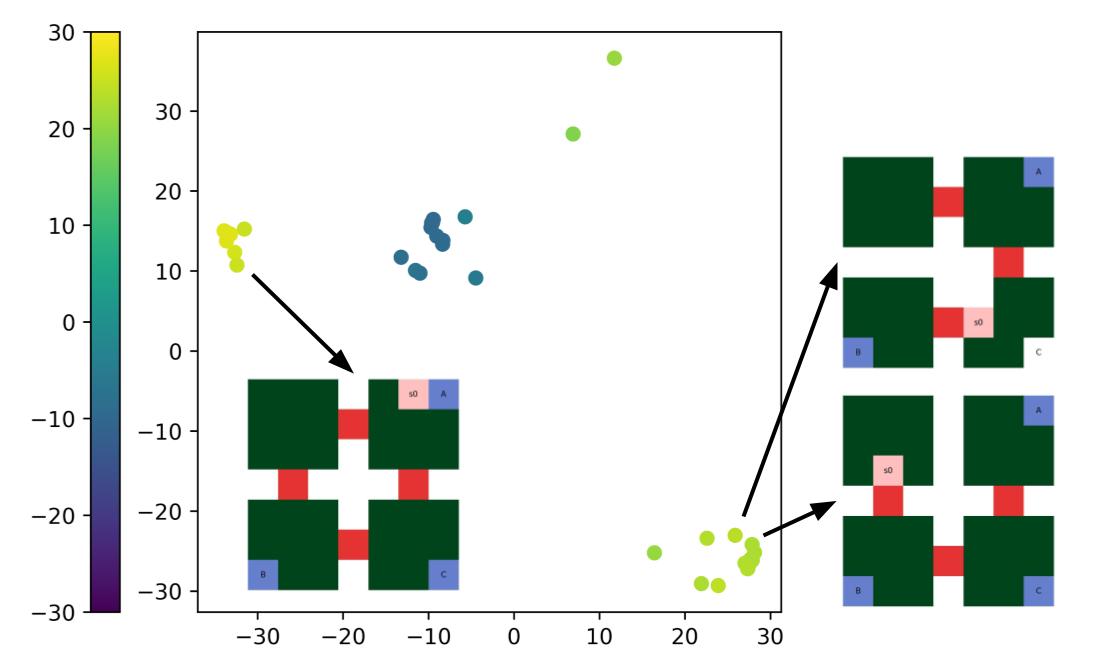} 
	\caption{Result of applying MDS to the pairwise distances between the set of 4-room instances studied. On the 
    plot, the first two coordinates of the MDS embedding are used as the spatial coordinates, and the third coordinate 
    is encoded via the color bar provided on the left-hand side of the axes.
	It can be observed how the elements in the same cluster present common features that differentiate them from those in another cluster. 
	In the examples shown in the figure we can see how the instances in which the closest reward involves crossing a door are concentrated in one cluster, 
	while the instances in which the reward and the initial state are located in the same room belong to a different 
cluster. The remaining clusters correspond to having to cross two doors for a reward (set of green points on the top), 
or having no reward that is accessible from the initial state (set of blue points in the middle, with large negative 
$z$-coordinates).}
	\label{fig:MDS}
\end{figure}

\end{document}